\definecolor{ForestGreen}{rgb}{0.1333,0.5451,0.1333}
\newcommand{\showccc}[0]{0}
\newcommand{\ccc}[2][nothing]{
	\ifthenelse{\showccc=0}{}{
		\ensuremath{^{\Lsh\Rsh}}\marginpar{\raggedright\tiny\textsf{%
				\ifthenelse{\equal{#1}{nothing}}{}{\textbf{#1}\\}#2}}}}
\newtheorem{theorem}{Theorem}
\newtheorem{proposition}{Proposition}
\newtheorem{corollary}{Corollary}
\newtheorem{definition}{Definition}
\newtheorem{lemma}{Lemma}
\newtheorem{fact}{Fact}
\newtheorem{assumption}{Assumption}
\newcommand{\defeq}{:=}
\newcommand{\norm}[1]{\left\lVert#1\right\rVert}
\newcommand{\eps}{\epsilon}
\newcommand{\lam}{\lambda}
\newcommand{\argmin}{\textup{argmin}} 
\newcommand{\R}{\mathbb{R}}
\newcommand{\N}{\mathbb{N}}
\newcommand{\half}{\frac{1}{2}}
\newcommand{\E}{\mathbb{E}}
\newcommand{\Nor}{\mathcal{N}}
\newcommand{\xset}{\mathcal{X}}
\definecolor{burntorange}{rgb}{0.8, 0.33, 0.0}
\newcommand{\kjtian}[1]{\textcolor{burntorange}{\textbf{kjtian:} #1}}
\newcommand{\Daogao}[1]{\textcolor{blue}{\textbf{Daogao:} #1}}
\newcommand{\tO}{\widetilde{O}}
\newcommand{\Par}[1]{\left(#1\right)}
\newcommand{\Brack}[1]{\left[#1\right]}
\newcommand{\Abs}[1]{\left|#1\right|}
\newcommand{\tvd}[2]{\norm{#1 - #2}_{\textup{TV}}}
\definecolor{ruoqi}{rgb}{0.2, 0.3, 0.7}
\newcommand{\tpi}{\tilde{\pi}}
\newcommand{\normx}[1]{\norm{#1}_{\xset}}
\newcommand{\dkl}{D_{\textup{KL}}}
\newcommand{\data}{\mathcal{D}}
\newcommand{\mech}{\mathcal{M}}
\newcommand{\tF}{\widetilde{F}}
\newcommand{\tf}{\tilde{f}}
\newcommand{\msig}{\boldsymbol{\Sigma}}
\newcommand{\set}{\mathcal{S}}
\newcommand{\Fpop}{F_{\textup{pop}}}
\newcommand{\mm}{\mathbf{M}}
\newcommand{\diam}{\textup{diam}}
\newcommand{\calP}{\mathcal{P}}
\newcommand{\conv}{\mathrm{conv}}
\newcommand{\Ind}{\mathbf{1}}
\DeclarePairedDelimiterX{\xdivergence}[2]{(}{)}{%
	#1\;\delimsize\|\;#2%
}
\newcommand{\deltacurve}{\delta\xdivergence*}
\newcommand{\lp}{\left(}
\newcommand{\rp}{\right)}
\begin{document}

	\begin{titlepage}
		\def\thepage{}
		\thispagestyle{empty}
		
		\title{Private Convex Optimization in General Norms} 
		
		\date{}
		\author{
			Sivakanth Gopi\thanks{Microsoft Research, {\tt sigopi@microsoft.com}}
			\and
			Yin Tat Lee\thanks{University of Washington, {\tt yintat@uw.edu}}
			\and
			Daogao Liu\thanks{University of Washington, {\tt dgliu@uw.edu}}
			\and
			Ruoqi Shen\thanks{University of Washington, {\tt shenr3@cs.washington.edu}}
			\and
			Kevin Tian\thanks{Microsoft Research, {\tt tiankevin@microsoft.com}. Work completed while at Stanford University.}
		}
		
		\maketitle
		
		\abstract{
We propose a new framework for differentially private optimization of convex functions which are Lipschitz in an arbitrary norm $\normx{\cdot}$. Our algorithms are based on a regularized exponential mechanism which samples from the density $\propto \exp(-k(F+\mu r))$ where $F$ is the empirical loss and $r$ is a regularizer which is strongly convex with respect to $\normx{\cdot}$, generalizing a recent work of \cite{GLL22} to non-Euclidean settings. We show that this mechanism satisfies Gaussian differential privacy and solves both DP-ERM (empirical risk minimization) and DP-SCO (stochastic convex optimization) by using localization tools from convex geometry. Our framework is the first to apply to private convex optimization in general normed spaces, and directly recovers non-private SCO rates achieved by mirror descent, as the privacy parameter $\eps \to \infty$. As applications, for Lipschitz optimization in $\ell_p$ norms for all $p \in (1, 2)$, we obtain the first optimal privacy-utility tradeoffs; for $p = 1$, we improve tradeoffs obtained by the recent works \cite{AsiFKT21, BassilyGN21} by at least a logarithmic factor. Our $\ell_p$ norm and Schatten-$p$ norm optimization frameworks are complemented with polynomial-time samplers whose query complexity we explicitly bound.
}
		
	\end{titlepage}


\section{Introduction}
\label{sec:intro}

The study of convex optimization in spaces where the natural geometry is non-Euclidean, beyond being a natural question of independent interest, has resulted in many successes across algorithm design. A basic example of this is the celebrated multiplicative weights, or exponentiated gradient method \cite{AroraHK12}, which caters to the $\ell_1$ geometry and has numerous applications in learning theory and algorithms. Moreover, optimization in real vector spaces equipped with different $\ell_p$ norms has found use in sparse recovery \cite{CandesRT06}, combinatorial optimization \cite{KelnerLOS14, KyngPSW19}, multi-armed bandit problems \cite{BubeckC12}, fair resource allocation \cite{DiakonikolasFO20}, and more (see e.g.\ \cite{AdilKPS19, DiakonikolasG21} and references therein). Furthermore, optimization in Schatten-$p$ norm geometries (the natural generalization of $\ell_p$ norms to matrix spaces) has resulted in improved algorithms for matrix completion \cite{AgarwalNW10} and outlier-robust PCA \cite{Jambulapati0T20}. In addition to $\ell_p$ and Schatten-$p$ norms, the theory of non-Euclidean geometries has been very useful in settings such as linear and semidefinite programming \cite{nemirovski2004interior} and optimization on matrix spaces \cite{Allen-ZhuGLOW18}, amongst others. 

The main result of this paper is a framework for \emph{differentially private} convex optimization in general normed spaces under a Lipschitz parameter bound. Differential privacy \cite{DworkKMMN06, DworkMNS06} has been adopted as the standard privacy notion for data analysis in both theory and practice, and differentially private algorithms have been deployed in many important settings in the industry as well as the U.S.\ census \cite{ErlingssonPK14, Abo16, App17, BittauEMMRLRKTS17, DingKY17}. Consequently, differentially private optimization is an increasingly important and fundamental primitive in modern machine learning applications \cite{BassilyST14, AbadiCGMMT016}. However, despite an extensive body of theoretical work providing privacy-utility tradeoffs (and more) for optimization in the Euclidean norm geometry, e.g.\ \cite{ChaudhuriM08, ChaudhuriMS11, KiferST12, JainT14, BassilyST14, Kasiviswanathan16} (and many other follow-up works), more general settings have been left relatively unexplored. This state of affairs prohibits the application of private optimization theory to problems where the natural geometry is non-Euclidean. Recent works \cite{AsiFKT21, BassilyGN21, BassilyGM21} have investigated special cases of private convex optimization, e.g.\ for $\ell_p$ spaces or polyhedral sets, under smoothness assumptions, or under structured losses. However, the systematic study of private convex optimization in general normed spaces in the most fundamental setting of Lipschitz losses has been left open, a gap that our work addresses.

Our framework for private convex optimization is simple: we demonstrate strong privacy-utility tradeoffs for a \emph{regularized exponential mechanism} when optimizing a loss over a set $\xset \subset \R^d$ equipped with a norm $\normx{\cdot}$. More concretely, our algorithms sample from densities 
\[\propto \exp\Par{-k(F_{\data} + \mu r)}\]
where $k, \mu > 0$ are tunable parameters, $F_{\data}$ is a (data-dependent) empirical risk, and $r$ is a strongly convex regularizer in $\normx{\cdot}$ with bounded range over $\xset$. In the analogous non-private Lipschitz convex optimization setting, most theoretical developments (namely mirror descent frameworks) have focused on precise applications where such an $r$ is readily available \cite{Shalev-Shwartz12, Bubeck15}. In this sense, our framework directly extends existing Lipschitz convex optimization theory to the private setting (and indeed, recovers existing non-private guarantees obtained by mirror descent \cite{NemirovskiY83}). 

In the remainder of the introduction, we summarize our results (Section~\ref{ssec:results}), highlight our technical contributions (Section~\ref{ssec:techniques}), and situate our paper in the context of prior work (Section~\ref{ssec:prior}).

\subsection{Our results}\label{ssec:results}

We study both the empirical risk minimization (ERM) problem and the stochastic convex optimization (SCO) problem in this paper; the goal in the latter case is to minimize the \emph{population risk}. We formalize this under the following assumption, which parameterizes the space we are optimizing and the (empirical and population) risks we aim to minimize. 

\begin{assumption}\label{assume:norm}
	We make the following assumptions.
	\begin{enumerate}
		\item There is a compact, convex set $\xset \subset \R^d$ equipped with a norm $\normx{\cdot}$.
		\item There is a $1$-strongly convex function $r: \xset \to \R$ in $\normx{\cdot}$, and $\Theta \ge \max_{x \in \xset} r(x) - \min_{x \in \xset} r(x)$. 
		\item There is a set $\Omega$ such that for any $s \in \Omega$, there is a convex function $f(\cdot; s): \xset \to \R$ which is $G$-Lipschitz in $\normx{\cdot}$.
	\end{enumerate}
\end{assumption}

For definitions used above, see Section~\ref{sec:prelims}. We remark that by strong convexity, the parameter $\Theta$ scales at least as $\Omega(D^2)$, where $D$ is the diameter of $\xset$ with respect to $\normx{\cdot}$; in many cases of interest, we may upper bound $\Theta$ by $O(D^2)$ as well up to a logarithmic factor. 

Finally, throughout the paper when working under Assumption~\ref{assume:norm}, $\data = \{s_i\}_{i \in [n]}$ denotes a dataset drawn independently from $\calP$, a distribution supported on $\Omega$, and we define $F_{\data}: \xset \to \R$ and $\Fpop: \xset \to \R$ by
\begin{equation}\label{eq:Fdef} F_{\data}(x) \defeq \frac 1 n \sum_{i \in [n]} f(x; s_i),\; \Fpop(x) \defeq \E_{s \sim \calP}[f(x; s)].\end{equation}

\paragraph{Private ERM and SCO.} We first present the following general results under Assumption~\ref{assume:norm}.

\begin{theorem}[Informal, see Theorems~\ref{thm:erm},~\ref{thm:sco}]\label{thm:informal}
Under Assumption~\ref{assume:norm} and following notation \eqref{eq:Fdef}, drawing a sample $x$ from the density $\nu \propto \exp(-k(F_{\data} + \mu r))$ for some $k, \mu > 0$ specified in Theorem~\ref{thm:erm} is $(\eps, \delta)$-differentially private, and produces $x$ such that
\[\E_{x \sim \nu}[F_{\data}(x)] - \min_{x \in \xset} F_{\data}(x) \le G\sqrt{\Theta} \cdot \frac{\sqrt{8d\log \frac 1 {2\delta}}}{n\eps}.\]
Moreover, drawing a sample $x$ from the density $\nu \propto \exp(-k(F_{\data} + \mu r))$ for some $k, \mu > 0$ specified in Theorem~\ref{thm:sco} is $(\eps, \delta)$-differentially private, and produces $x$ such that
\[\E_{\data \sim \calP^n, x \sim \nu}\Brack{\Fpop(x)} - \min_{x \in \xset} \Fpop(x) \le G\sqrt{\Theta} \cdot \Par{\frac{\sqrt{8d\log \frac 1 {2\delta}}}{n\eps} + \sqrt{\frac 8 n}}.\]
\end{theorem}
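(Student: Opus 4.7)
For neighboring datasets $\data \sim \data'$ with corresponding Gibbs densities $\nu_\data, \nu_{\data'}$, the log-density ratio as a function of $x$ equals $-k(F_\data(x) - F_{\data'}(x))$ up to a normalizing constant. By the $G$-Lipschitzness of each $f(\cdot;s)$ and the averaging in $F_\data$, this is $(2kG/n)$-Lipschitz in $\normx{\cdot}$. Moreover, since $r$ is $1$-strongly convex in $\normx{\cdot}$ and $F_\data$ is convex, the potential $k(F_\data + \mu r)$ is $k\mu$-strongly convex in $\normx{\cdot}$. The heart of the privacy proof is a sub-Gaussian concentration estimate for $\normx{\cdot}$-Lipschitz functions against $\normx{\cdot}$-strongly log-concave densities; this will yield Gaussian differential privacy with parameter $\sigma^2 \lesssim kG^2/(n^2\mu)$. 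In the Euclidean case the estimate follows from Bakry-Emery, but in general norms this must be replaced by a localization argument from convex geometry that reduces the high-dimensional MGF bound to a one-dimensional estimate. A standard GDP-to-$(\eps,\delta)$-DP conversion then fixes the relationship $k \asymp n^2\eps^2 \mu / (G^2 \log(1/\delta))$.

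\textbf{ERM utility.}
Writing $x^\star \in \argmin_{\xset} F_\data$, I would decompose
\[\E_\nu[F_\data] - F_\data(x^\star) = \Par{\E_\nu[F_\data + \mu r] - \min_{\xset}(F_\data + \mu r)} + \Par{\min_{\xset}(F_\data + \mu r) - F_\data(x^\star) - \mu r(x^\star)} + \mu\Par{r(x^\star) - \E_\nu[r]}.\]
The middle parenthetical is non-positive by optimality. The first is bounded by $d/k$ via the classical log-concavity estimate $\E_\nu[\psi] - \min \psi \le d$ applied to $\psi = k(F_\data + \mu r)$. The last is bounded by $\mu(r(x^\star) - \min_{\xset} r) \le \mu\Theta$ using $\E_\nu[r] \ge \min r$ and the range assumption on $r$. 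Thus $\E_\nu[F_\data] - \min F_\data \le d/k + \mu \Theta$. Plugging in the privacy relation above and choosing $\mu \asymp G\sqrt{d\log(1/\delta)/\Theta}/(n\eps)$ balances the two terms and recovers the advertised ERM bound.

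\textbf{SCO utility.}
Via the standard decomposition
\[\E[\Fpop(\hat x) - \min \Fpop] = \E[\Fpop(\hat x) - F_\data(\hat x)] + \E[F_\data(\hat x) - \min F_\data] + \E[\min F_\data - \min \Fpop],\]
the third summand is non-positive (since $\E_\data[\min F_\data] \le \min \Fpop$), the middle is handled by the ERM analysis above, and the first (generalization) term is controlled via a stability argument: swapping a single sample perturbs $\nu_\data$ only mildly in $\normx{\cdot}$-Wasserstein-$1$ distance, and composing with the $G$-Lipschitzness of each $f(\cdot; s)$ gives a uniform stability bound of order $G\sqrt{\Theta/n}$, producing the extra $\sqrt{8/n}$ term.

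\textbf{Main obstacle.}
The hard part will be the sub-Gaussian concentration of Lipschitz functions under strongly log-concave densities in a general non-Euclidean norm. In $\ell_2$ this is immediate from Bakry-Emery, but strong convexity in $\normx{\cdot}$ does not translate to Euclidean strong log-concavity, and naive norm-equivalence arguments incur dimension-dependent factors that would destroy the rate. I expect the resolution to invoke slicing into one-dimensional needles via a localization lemma, where the Lipschitz and strong-convexity conditions against $\normx{\cdot}$ do yield direct one-dimensional MGF bounds whose aggregation recovers the desired sub-Gaussian tail.
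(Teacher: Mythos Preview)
Your proposal is correct and matches the paper's high-level strategy: the ERM bound via $d/k + \mu\Theta$ (Proposition~\ref{prop:gibbsrisk} plus the range of $r$), the SCO bound via ERM plus a replace-one stability term of size $G^2/(n\mu)$ (which becomes $O(G\sqrt{\Theta/n})$ after optimizing $k,\mu$), and the identification of localization as the device that handles the non-Euclidean geometry. The one noteworthy difference is \emph{where} localization is applied. You propose localizing an MGF to obtain sub-Gaussian concentration of $\normx{\cdot}$-Lipschitz functions under $\normx{\cdot}$-strongly logconcave measures, and then deriving GDP and the Wasserstein stability bound from that concentration. The paper instead applies localization \emph{directly} to the privacy-curve functional $\pi \mapsto \Pr_{\tilde\pi}[x\in S] - e^\eps \Pr_\pi[x\in S]$ (Lemma~\ref{lm:reducet_onedim_privacy}) and to the stability functional $\pi \mapsto \int f\,d(\pi - \tilde\pi)$ (Lemma~\ref{lm:reduceto_onedim}): after linearizing the hidden normalization constant via an auxiliary scalar $C$, these functionals become linear in $\pi$, so the extreme points are one-dimensional needles on which the Euclidean results of \cite{GLL22} (Propositions~\ref{prop:gll} and~\ref{prop:w2bound}) apply verbatim after the elementary norm conversion of Lemma~\ref{lem:normxtol2}. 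Your route is also viable---the paper explicitly remarks that the non-Euclidean log-Sobolev and transport inequalities of \cite{BobkovL00} and \cite{Cordero17} would suffice---but the paper's direct localization of the privacy curve sidesteps proving any new functional inequality and is its advertised technical contribution.
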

Minimizing the non-private population risk under the same setting as Assumption~\ref{assume:norm} is a very well-studied problem, with matching upper and lower bounds in many cases of interest, such as $\ell_p$ norms \cite{NemirovskiY83, AgarwalBRW12, DuchiJW14}. The population utility achieved by our regularized exponential mechanism in Theorem~\ref{thm:informal} (namely, as $\eps \to \infty$) matches the rate obtained by the classical mirror descent algorithm \cite{NemirovskiY83}, which to our knowledge has not been previously observed. Finally, in Appendix~\ref{app:sc} we provide an analog of Theorem~\ref{thm:informal} under the stronger assumption that the sample losses $f(\cdot; s)$ are strongly convex, bypassing the need for explicit regularization. Our results in Appendix~\ref{app:sc} recover the optimal rate in the Euclidean case, matching known lower bounds \cite{BassilyST14}.

We next show how to apply the results of Theorem~\ref{thm:informal} under various instantiations of Assumption~\ref{assume:norm} to derive new rates for private convex optimization under $\ell_p$ and Schatten-$p$ norm geometries.

\paragraph{$\ell_p$ and Schatten-$p$ norms.} In Corollaries~\ref{cor:pg1le2},~\ref{cor:pge1le2}, and~\ref{cor:pge2}, we combine known (optimal) uniform convexity estimates for $\ell_p$ spaces \cite{BallCL94} with the algorithms of Theorem~\ref{thm:erm} and~\ref{thm:sco} to obtain privacy-utility tradeoffs summarized in Table~\ref{table:lp}. Interestingly, we achieve all these bounds with a single algorithmic framework, which in all cases matches or partially matches known lower bounds.

\begin{table}[ht!]
	\begin{centering}
		\begin{tabular}{|c|c|c|}
			\hline 
			\multirow{2}{*}{$\ell_p$ norm} & \multicolumn{2}{c|}{Optimality gap}\tabularnewline
			\cline{2-3} \cline{3-3} 
			& ERM loss $F_{\data}$ &  SCO loss $\Fpop$ \tabularnewline
			\hline 
			$p \in (1, 2)$ $(\star)$ & $GD \cdot \frac{\sqrt{d \log \frac 1 {2\delta}}}{n\eps\sqrt{p - 1}}$ & $GD \cdot \Par{\frac 1 {\sqrt{n(p - 1)}} + \frac{\sqrt{d \log \frac 1 {2\delta}}}{n\eps\sqrt{p - 1}}}$ \tabularnewline
			\hline 
			$p = 1$ $(\dagger)$ & $GD \cdot \frac{\sqrt{d \log d \log \frac 1 {2\delta}}}{n\eps}$ & $GD \cdot \Par{\sqrt{\frac {\log d} n} + \frac{\sqrt{d \log d\log \frac 1 {2\delta}}}{n\eps}}$\tabularnewline
			\hline 
			$ p \ge 2$ $(\dagger)$ & $GD \cdot \frac{d^{1 - \frac 1 p}\sqrt{\log \frac 1 {2\delta}}}{n\eps}$ & $GD \cdot\Par{\frac{d^{\half - \frac 1 p}}{\sqrt n} + \frac{d^{1 - \frac 1 p}\sqrt{\log \frac 1 {2\delta}}}{n\eps}}$ \tabularnewline
			\hline 
		\end{tabular}
		\par\end{centering}
	\caption{\label{table:lp}Privacy-utility tradeoffs for $\ell_p$ norm optimization under $(\eps, \delta)$-differential privacy obtained by: Corollary~\ref{cor:pg1le2} ($p \in (1, 2)$), Corollary~\ref{cor:pge1le2} ($p = 1$), and Corollary~\ref{cor:pge2} ($p \ge 2$). We assume $\xset$ has $\ell_p$ diameter bounded by $D$ and hide constants (stated in the formal results) for brevity. $(\star)$ indicates that our result matches the private ERM and SCO lower bound \cite{BassilyGN21, LiuL22}. $(\dagger)$ indicates that our result (as $\eps \to \infty$) matches the non-private SCO lower bound \cite{AgarwalBRW12, DuchiJW14}.}
\end{table}

We now contextualize our results with regard to the existing literature. In the following discussion, the ``privacy-dependent'' loss term is the term in the SCO loss scaling with $\eps, \delta$, and the ``privacy-independent'' loss term is the SCO loss when $\eps \to \infty$.

In the case of constant $p \in (1, 2)$, our Corollary~\ref{cor:pg1le2} sharpens Theorem 5 of \cite{AsiFKT21} by a $\sqrt{\log d}$ factor in the privacy-dependent loss term, and is the first to match lower bounds of \cite{BassilyGN21, LiuL22}. It improves bounds by \cite{BassilyGN21} by at least a $\log n$ factor on both parts of the SCO loss, which further loses an $n^{\frac 1 4}$ factor on the privacy-dependent loss and requires additional smoothness assumptions. 

In the important case of $p = 1$, of fundamental interest due to its applications in sparse recovery \cite{CandesRT06} as well as online learning \cite{Shalev-Shwartz12, AroraHK12}, our Corollary~\ref{cor:pge1le2} improves the privacy-dependent loss term of \cite{AsiFKT21} by a $\log d$ factor, and matches the privacy-independent loss lower bound in the SCO literature \cite{DuchiJW14}, matching the rate of entropic mirror descent. The privacy-dependent loss term incurs an additional overhead of $\sqrt{\log d}$ compared to existing lower bounds. However, just as lower bounds on the privacy-independent loss increase as $p \to 1$, it is plausible that the upper bound obtained by Corollary~\ref{cor:pge1le2} is optimal, which we leave as an interesting open direction.

In the $p \ge 2$ case, prior work by \cite{BassilyGN21} obtains a rate matched by Corollary~\ref{cor:pge2}. The non-private population risk term in \eqref{eq:pge2sco} is again known to be optimal \cite{AgarwalBRW12}. We again find it an interesting open direction to close the gap between the upper bound \eqref{eq:pge2erm} and known lower bounds for private convex optimization when $p \ge 2$, e.g.\ \cite{BassilyGN21, LiuL22}.

We further demonstrate in Corollary~\ref{cor:schatten} that all of these results have direct analogs in the case of optimization over matrix spaces equipped with Schatten norm geometries. To the best of our knowledge, this is the first such result in the private optimization literature; we believe this showcases the generality and versatility of our approach.

Finally, we mention that all of these results are algorithmic and achieved by samplers with polynomial query complexity and runtime, following developments of \cite{lee2021structured, GLL22}. In all cases, by simple norm equivalence relations, the query complexity of our samplers is at most a $d$ factor worse than the $\ell_2$ case, with improvements as $p \to 2$. It is an exciting direction to develop efficient high-accuracy samplers catering to structured densities relevant to the setups considered in this paper, e.g.\ those whose negative log-likelihoods are strongly convex in $\ell_p$ norms. The design of sampling algorithms for continuous distributions has been an area of intense research activity in the machine learning community, discussed in greater detail in Section~\ref{ssec:prior}. We mention here that our hope is that our results and general optimization framework serve as additional motivation for the pursuit of efficient structured sampling algorithms working directly in non-Euclidean geometries.

\subsection{Our techniques}\label{ssec:techniques}

Our results essentially build on the recent work of \cite{GLL22}, who observed that a regularized exponential mechanism achieves optimal privacy-utility tradeoffs for empirical and population risks when losses are Lipschitz in the $\ell_2$ norm. Under a Euclidean specialization of Assumption~\ref{assume:norm}, \cite{GLL22} provided variants of Theorem~\ref{thm:informal} using the regularizer $r(x) = \half \norm{x}_2^2$, i.e.\ reweighting by a Gaussian. 

We demonstrate several key tools used in \cite{GLL22} have non-Euclidean extensions by using a simple, general approach based on a convex geometric tool known as \emph{localization}. For example, the starting point of our developments is relating the privacy curves of two nearby, strongly convex densities with the privacy curve of Gaussians (see Section~\ref{sec:prelims} for definitions).

\begin{restatable}{theorem}{restategdp}
	\label{thm:privacy_curve_norm_X}
	Let $\xset \subset \R^d$ be compact and convex, let $F, \tF: \xset \to \R$ be $\mu$-strongly convex in $\normx{\cdot}$, and let $P \propto \exp(-F)$ and $Q \propto \exp(-\tF)$. Suppose $\tF - F$ is $G$-Lipschitz in $\normx{\cdot}$. For all $\eps \in \R_{\ge 0}$,
	\[\deltacurve{P}{Q}(\eps) \le \deltacurve{\Nor(0, 1)}{\Nor\Par{\frac G {\sqrt \mu}, 1}}(\eps).\]
\end{restatable}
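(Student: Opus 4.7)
The plan is to reduce the $d$-dimensional statement to a one-dimensional one via a convex-geometric localization argument. Strong convexity in an arbitrary norm $\normx{\cdot}$ interacts poorly with Euclidean techniques (e.g., Caffarelli contraction applied directly), but is well-behaved under 1-D decompositions, which is why localization is the natural tool. I would first recast the privacy curve as the hockey-stick divergence
\[\deltacurve{P}{Q}(\eps) = \int_{\xset} \Par{p(x) - e^{\eps} q(x)}_+ dx,\]
with an analogous expression for the Gaussian case. This puts both sides of the desired inequality into a form amenable to localization.

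Next, I would apply a needle-decomposition / Lov\'asz--Simonovits-style localization lemma, reducing the integral $\int_{\xset} (p - e^\eps q)_+ dx$ to a weighted family of one-dimensional integrals over needles (log-concave pieces on line segments) in $\xset$. On a needle in direction $v \in \R^d$, the restricted densities take the form $\propto \exp(-\phi_v)$ and $\propto \exp(-\tilde\phi_v)$, where $\phi_v, \tilde\phi_v$ inherit $\mu\normx{v}^2$-strong convexity from the $\mu$-strong convexity of $F, \tF$ in $\normx{\cdot}$, and the difference $\tilde\phi_v - \phi_v$ inherits $G\normx{v}$-Lipschitzness. After rescaling the arc-length parameter so that the 1-D strong convexity normalizes to $1$, the Lipschitz constant of the potential difference becomes exactly $G/\sqrt{\mu}$, precisely matching the Gaussian shift in the target bound.

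Third, I would prove the one-dimensional extremal statement: for $1$-strongly log-concave measures $\tilde P, \tilde Q$ on an interval of $\R$ with $\log(\tilde p / \tilde q)$ being $(G/\sqrt\mu)$-Lipschitz, $\deltacurve{\tilde P}{\tilde Q}(\eps) \le \deltacurve{\Nor(0,1)}{\Nor(G/\sqrt\mu, 1)}(\eps)$. Here I would use that Caffarelli's contraction theorem holds cleanly in 1-D: any $1$-strongly log-concave measure on $\R$ is the pushforward of $\Nor(0,1)$ under a $1$-Lipschitz monotone map. Coupling $\tilde P, \tilde Q$ through a common standard Gaussian via these maps, the $(G/\sqrt\mu)$-Lipschitz bound on $\log(\tilde p / \tilde q)$ allows a pointwise comparison of the integrand $(\tilde p - e^\eps \tilde q)_+$ with the pure-shift Gaussian case. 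The main obstacle I anticipate is exactly this 1-D extremal step: the localization reduction is standard machinery modulo choosing the right decomposition lemma, but pinning down that Gaussians are the worst case for the hockey-stick divergence among $1$-strongly log-concave pairs with $(G/\sqrt\mu)$-Lipschitz log-likelihood ratio requires care, and may ultimately proceed by a direct tail/rearrangement computation rather than transport alone.
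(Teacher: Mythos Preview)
Your high-level plan---reduce by localization to a one-dimensional statement where the $\normx{\cdot}$-parameters become Euclidean parameters with the same ratio $G/\sqrt{\mu}$---is exactly the paper's approach. Two differences are worth flagging.

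First, the paper's localization is an \emph{extreme-point} argument, not a needle decomposition. Fixing $\alpha \defeq \tF - F$, the paper shows (Lemma~\ref{lm:reducet_onedim_privacy}) that over all $\mu$-strongly logconcave $\pi$ for which $\tpi \propto \pi e^{-\alpha}$ is also $\mu$-strongly logconcave, the functional $g(\pi) = \Pr_{\tpi}[S] - e^{\eps}\Pr_{\pi}[S]$ is maximized at a $\pi$ supported on a segment; the normalizing constant of $\tpi$ is absorbed as an auxiliary scalar constraint $C \ge \int e^{-\alpha} d\pi$ and then Fradelizi--Gu\'edon localization (Proposition~\ref{thm:localization}) is applied. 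Your needle-decomposition phrasing glosses over a genuine issue: the conditionals of $P$ and $Q$ on a needle $\ell$ carry \emph{different} masses $P(\ell)$ and $Q(\ell)$, so the needle integrand is $(P(\ell)p_\ell - e^{\eps}Q(\ell)q_\ell)_+$, which is not the hockey-stick divergence of the normalized needle conditionals. This is fixable, but it is precisely the work the paper's auxiliary-constant trick is doing.

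Second, for the one-dimensional base case the paper does \emph{not} argue directly: it simply invokes the Euclidean result (Proposition~\ref{prop:gll}, proved in \cite{GLL22} via Gaussian isoperimetry) as a black box, after the norm conversion of Lemma~\ref{lem:normxtol2}. Your proposal to handle the one-dimensional case from scratch via Caffarelli contraction is strictly more ambitious---the paper explicitly records a ``simpler direct proof of Proposition~\ref{prop:gll} in the one-dimensional setting'' as an open question. Note that two separate $1$-Lipschitz transports $T_P, T_Q$ from $\Nor(0,1)$ do not by themselves control $\deltacurve{P}{Q}$, since that quantity depends on the joint relation between $P$ and $Q$ rather than their individual distances to a Gaussian; an additional coupling or rearrangement argument would be required, and your own hedging on this step is well placed.
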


An analog of Theorem~\ref{thm:privacy_curve_norm_X} when $\normx{\cdot}$ is the Euclidean norm was proven as Theorem 4.1 of \cite{GLL22}. Moreover, the analog of Theorem~\ref{thm:informal} in \cite{GLL22} follows from combining Theorem 4.1 of that work, and their Theorem 6.10, a reduction from the SCO problem to the ERM problem (containing a generalization error bound). These proofs in \cite{GLL22} rely on powerful inequalities from probability theory, which were initially studied in the Gaussian (Euclidean norm regularizer) setting. For example, Theorem 4.1 applied the \emph{Gaussian isoperimetric inequality} of \cite{SudakovT74, Borell75} (see also Theorem 1.1, \cite{Ledoux99}), which states that strongly logconcave distributions in the Euclidean norm have expansion quality at least as good as a corresponding Gaussian. Moreover, the generalization error bound in Theorem 6.10 was proven based on a Euclidean norm \emph{log-Sobolev inequality} and \emph{transportation inequality}, relating Wasserstein distances, KL divergences, and Lipschitz bounds on negative log-densities. Fortunately, it turns out that all of these inequalities have non-Euclidean generalizations (possibly losing constant factors). For example, a non-Euclidean log-Sobolev inequality was shown by Proposition 3.1 of \cite{BobkovL00}, and a non-Euclidean transport inequality sufficient for our purposes is proved as Proposition 1 of \cite{Cordero17}. Finally, variants of the Gaussian isoperimetric inequality in general norms are given by \cite{MilmanS08, Kolesnikov11}. Plugging in these tools into the proofs of \cite{GLL22} allows us to recover Theorems~\ref{thm:informal} and~\ref{thm:privacy_curve_norm_X}, as well as our applications.

In this work, we take a different (and in our opinion, simpler) strategy to proving the probability-theoretic inequalities required by Theorems~\ref{thm:informal},~\ref{thm:privacy_curve_norm_X}, yielding an alternative to the proofs in \cite{GLL22} which we believe may be of independent intellectual interest to the privacy community. In particular, our technical insight is the simple observation that several of the definitions in differential privacy are naturally cast in the language of localization \cite{kannan1995isoperimetric, FG04}, which characterizes extremal logconcave densities subject to linear constraints (see our proof of Lemma~\ref{lm:reducet_onedim_privacy} for an example of this). This observation allows us to reduce the proofs of key technical tools used in Theorems~\ref{thm:informal} and~\ref{thm:privacy_curve_norm_X} to proving these tools in one dimension, where \emph{all norms are equivalent} up to constant factor rescaling.\footnote{The one-dimensional case can then typically be handled by more straightforward ``combinatorial'' arguments, see e.g.\ Section 2.b of \cite{lovasz1993random} or Appendix B.3 of \cite{ChenDWY20} for examples.} After deriving several extensions of basic localization arguments in Section~\ref{ssec:localization}, we follow this reduction approach to give a more unified proof to Theorems~\ref{thm:informal} and~\ref{thm:privacy_curve_norm_X}. To our knowledge, this is the first direct application of localization techniques in differential privacy. 

The interplay between the privacy and probability theory communities is an increasingly active area of exploration \cite{DongRS21, GLL22, GaneshTU22} (discussed in more detail in Section~\ref{ssec:prior}). We are hence optimistic that localization-based proof strategies will have further applications in the privacy literature, especially in situations (beyond this paper) where probability theoretic tools used in the Euclidean case do not have non-Euclidean variants in full generality. In such settings, it may be a valuable endeavor to see if necessary inequalities may be directly recast in the language of localization.

\subsection{Prior work}\label{ssec:prior}

\paragraph{Private optimization in Euclidean norm.} Many prior works on private convex optimization have focused on variants of the ERM and SCO problems studied in this work, under $\ell_2$ Lipschitz losses and $\ell_2$ bounded domains, such as \cite{ChaudhuriMS11, KiferST12, BassilyST14, BassilyFTT19, BassilyFGT20}. The optimal information-theoretic rate for these private optimization problems was given by \cite{BassilyST14}, which was matched algorithmically up to constant factors by \cite{BassilyFTT19, BassilyFGT20}. 

From an algorithmic perspective, a topic of recent interest in the Euclidean case is the problem of attaining optimal privacy-utility tradeoffs in \emph{nearly-linear time}, namely, with $\approx n$ gradient queries \cite{FeldmanKT20, AsiFKT21, KulkarniLL21}. Under additional smoothness assumptions, this goal was achieved by \cite{FeldmanKT20}; however, achieving near-optimal gradient oracle query rates in the general Lipschitz case remains open. We note that under \emph{value oracle} access, a near-optimal bound was recently achieved by \cite{GLL22}. This paper primarily focuses on the information-theoretic problem of achieving optimal privacy-utility tradeoffs for a given dataset size. However, we believe the corresponding problem of designing algorithms with near-optimal query complexities and runtimes (under value or gradient oracle access) is also an important open direction in the case of general norm geometries.

\paragraph{Private optimization in non-Euclidean norms.} The study of convex optimization in non-Euclidean geometries was recently initiated by \cite{AsiFKT21, BassilyGN21}, who focused primarily on developing algorithms under $\ell_p$ regularity assumptions and bounded domains. In follow-up work, \cite{BassilyGM21} gave improved guarantees for the family of generalized linear losses. We discuss the rates we achieve for $\ell_p$ norm geometries compared to \cite{AsiFKT21, BassilyGN21} in Section~\ref{ssec:results}; in short, we improve prior results by logarithmic factors in the case $p \in [1, 2)$, and match them when $p \ge 2$. Independently from our work, \cite{HLL+22} designed an algorithm for private optimization in $\ell_p$ geometries improving upon \cite{BassilyGN21} in gradient query complexity (matching their privacy-utility tradeoffs); both \cite{BassilyGN21, HLL+22} require further smoothness assumptions on the loss functions.

One of the main motivations for this work was to develop a general theory for private convex optimization under non-Euclidean geometries, beyond $\ell_p$ setups. In particular, \cite{BassilyGN21} designed a \emph{generalized Gaussian mechanism} for the case $p \in [1, 2)$, where gradients were perturbed by a noise distribution catering to the $\ell_p$ geometry. However, how to design a corresponding mechanism for more general norms may be less clear. The algorithm of \cite{AsiFKT21} in the non-smooth case was based on a (Euclidean norm) Gaussian mechanism; again, this strategy is potentially more specialized to $\ell_p$ geometries. Beyond giving a general algorithmic framework for non-Euclidean convex optimization based on structured logconcave sampling, we hope that the information-theoretic properties we show regarding regularized exponential mechanisms (e.g.\ Theorem~\ref{thm:privacy_curve_norm_X}) may find use in designing ``generalized Gaussian mechanisms'' beyond $\ell_p$ norms.

\paragraph{Connections between privacy and sampling.} Our work extends a line of work exploring privacy-utility tradeoffs for the exponential mechanism, a general strategy for designing private algorithms introduced by \cite{McSherryT07} (see additional discussion in \cite{GLL22}). For example, the regularized exponential mechanisms we design are similar in spirit to the exponential mechanism ``in the $\xset$ norm\footnote{That is, the norm induced by the convex body $\xset$, not to be confused with the $\normx{\cdot}$ of Assumption~\ref{assume:norm}.}'' designed by \cite{HT10, BDKT12}. Moreover, our work continues a recent interface between the sampling and privacy literature, where (continuous and discrete-time) sampling algorithms are shown to efficiently obtain strong privacy-utility tradeoffs for optimization problems \cite{GLL22, GaneshTU22}. This work further develops this interface, motivating the design of efficient samplers for densities satisfying non-Euclidean regularity assumptions.

The design of sampling algorithms under general geometries (e.g.\ ``mirrored Langevin algorithms'') has been a topic of great recent interest, independently from applications in private optimization. Obtaining mixing guarantees under regularity assumptions naturally found in applications is a notoriously challenging problem in the recent algorithmic sampling literature \cite{HsiehKRC18, ZhangPFP20, AhnC21, Jiang21, LiTVW22}. For example, it has been observed both theoretically and empirically that without (potentially restrictive) relationships between regularity parameters, natural discretizations of the mirrored Langevin dynamics may not even have vanishing bias \cite{ZhangPFP20, Jiang21, LiTVW22}. Recently, \cite{lee2021structured} gave an alternative strategy (to discretizing Langevin dynamics) for designing sampling algorithms in the Euclidean case, used in \cite{GLL22} to obtain private algorithms for $\ell_2$-structured ERM and SCO problems (see Proposition~\ref{prop:sample_elltwo}). Our work suggests a natural non-Euclidean generalization of these sampling problems, which is useful to study from an algorithmic perspective. We are optimistic that a non-Euclidean variant of \cite{lee2021structured} may shed light on these mysteries and yield new efficient private algorithms. More generally (beyond the particular \cite{lee2021structured} framework), we state the direction of designing efficient samplers for densities of the form $\exp(-F_{\data} - \mu r)$ satisfying Assumption~\ref{assume:norm} as an important open research endeavor with implications for both sampling and private optimization, the latter of which this paper demonstrates.


\section{Preliminaries}
\label{sec:prelims}

\paragraph{General notation.} Throughout, $\tO$ hides logarithmic factors in problem parameters when clear from the context. For $n \in \N$, $[n]$ refers to the naturals $1 \le i \le n$. We use $\xset$ to denote a compact convex subset of $\R^d$. The standard ($\ell_2$) Euclidean norm is denoted $\norm{\cdot}_2$. We will be concerned with optimizing functions $f: \xset \to \R$, and $\normx{\cdot}$ will refer to a norm on $\xset$. The diameter of such a set is denoted $\diam_{\normx{\cdot}}(\xset) \defeq \max_{x, y \in \xset}\normx{x - y}$. 
We let $\Nor(\mu, \msig)$ be the Gaussian density of specified mean and covariance. We denote the convex hull of a set $S$ (when well-defined) by $\conv(S)$. When $a, b \in \R^d$, we abuse notation and let $[a, b]$ be the line segment between $a$ and $b$.

\paragraph{Norms.} For $p \ge 1$, we let $\norm{\cdot}_p$ applied to a vector-valued variable be the $\ell_p$ norm, namely $\norm{v}_p = (\sum_{i \in [d]} |v_i|^p)^{1/p}$ for $v \in \R^d$; the $\ell_\infty$ norm is the maximum absolute value. We will use the well-known inequality
\begin{equation}\label{eq:normequiv}\norm{v}_p \le \norm{v}_{q} \le d^{\frac 1 q - \frac 1 p}\norm{v}_p, \text{ for } v \in \R^d,\; q \le p.\end{equation}
Matrices will be denoted in boldface throughout, and $\norm{\cdot}_p$ applied to a matrix-valued variable $\mm$ is the Schatten-$p$ norm, i.e.\ the $\ell_p$ norm of the singular values of $\mm$.

\paragraph{Optimization.} In the following discussion, fix some $f: \xset \to \R$. We say $f$ is $G$-Lipschitz in $\normx{\cdot}$ if for all $x, x' \in \xset$, $|f(x) - f(x')| \le G\normx{x - x'}$. We say $f$ is $\mu$-strongly convex in $\normx{\cdot}$ if for all $x, x' \in \xset$ and $t \in [0, 1]$,
\[f\Par{tx + (1 - t)y} \le tf(x) + (1 - t) f(y) - \frac{\mu t(1-t)} 2 \normx{x - y}^2.\]

\paragraph{Probability.} For two densities $\pi, \pi'$, we define their total variation distance by $\tvd{\pi}{\pi'} \defeq \half \int |\pi(x) - \pi'(x)|dx$ and (when the Radon-Nikodym derivative exists) their KL divergence by $\dkl(\pi \| \pi') \defeq \int \pi(x) \log \frac{\pi(x)}{\pi'(x)} dx$. We define the $2$-Wasserstein distance by
\[W_2(\pi, \pi') = \inf_{\Gamma \in \Gamma(\pi, \pi')} \sqrt{\E_{(x, x') \sim \Gamma} \|x - x'\|_2^2},\]
where $\Gamma(\pi, \pi')$ is the set of couplings of $\pi$ and $\pi'$. We note $W_2$ satisfies the following inequality.
\begin{fact}\label{fact:w2lip}
Let $\textup{Lip}_2(f)$ be the Lipschitz constant in the $\ell_2$ norm of a function $f$. Then, for densities $\pi, \pi'$ supported on $\xset$, 
\[W_2(\pi, \pi') \ge \sup_{\textup{Lip}_2(f) \le 1} \int_{\xset} f(x) (\pi(x) - \pi'(x)) dx.\]
\end{fact}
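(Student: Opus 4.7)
The plan is to bound $\int f(x)(\pi(x) - \pi'(x))\,dx$ by $W_2(\pi,\pi')$ for every fixed $1$-Lipschitz $f$, and then take the supremum. The key observation is that this is essentially the Kantorovich--Rubinstein duality for $W_1$ combined with the standard domination $W_1 \le W_2$ coming from Jensen's inequality (or Cauchy--Schwarz).

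Concretely, I would fix an arbitrary $1$-Lipschitz $f\colon \xset \to \R$ and an arbitrary coupling $\Gamma \in \Gamma(\pi,\pi')$. Using the fact that the marginals of $\Gamma$ are $\pi$ and $\pi'$,
\[
\int_{\xset} f(x)\bigl(\pi(x) - \pi'(x)\bigr)\,dx \;=\; \E_{(x,x')\sim\Gamma}\bigl[f(x) - f(x')\bigr] \;\le\; \E_{(x,x')\sim\Gamma}\bigl[\|x - x'\|_2\bigr],
\]
where the last step uses $|f(x)-f(x')| \le \|x-x'\|_2$ by the Lipschitz assumption. Then I would apply Jensen's inequality to the concave function $\sqrt{\cdot}$ to get
\[
\E_{(x,x')\sim\Gamma}\bigl[\|x - x'\|_2\bigr] \;\le\; \sqrt{\E_{(x,x')\sim\Gamma}\bigl[\|x - x'\|_2^2\bigr]}.
\]
Chaining the two inequalities and taking the infimum over $\Gamma$ yields $\int f \cdot (\pi - \pi') \le W_2(\pi,\pi')$, and then the supremum over $1$-Lipschitz $f$ produces the claim.

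There is essentially no obstacle here beyond verifying that the supremum/infimum exchange is legitimate, which is immediate because the right-hand side $W_2(\pi,\pi')$ is independent of $f$, so one can simply bound each $f$-integral by $W_2$ and then take the sup over $f$. A minor check is that the coupling argument requires $f$ to be integrable against both $\pi$ and $\pi'$; this is harmless since $\xset$ is compact and $f$ is continuous (being Lipschitz) and hence bounded on $\xset$. No strong convexity, logconcavity, or localization is needed for this fact; it is purely a property of $W_2$ and Lipschitz test functions, which is why it appears in the preliminaries rather than the main body.
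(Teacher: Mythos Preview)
Your proof is correct and essentially the same as the paper's: both combine the Kantorovich--Rubinstein bound $\sup_{\textup{Lip}_2(f)\le 1}\int f\,d(\pi-\pi') \le W_1(\pi,\pi')$ with the Jensen (convexity of the square) inequality $W_1 \le W_2$. The only cosmetic difference is that the paper cites the full $W_1$ duality as a black box, whereas you prove the needed inequality direction directly via a coupling argument; this is arguably cleaner since only the easy direction of Kantorovich--Rubinstein is required.
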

\begin{proof}
This follows from the dual characterization of the $1$-Wasserstein distance (which shows $\sup_{\textup{Lip}(f) \le 1} \int_{\xset} f(x) (\pi(x) - \pi'(x)) dx = \inf_{\Gamma \in \Gamma(\pi, \pi')} \E_{(x, x') \sim \Gamma} \|x - x'\|_2$), and convexity of the square.
\end{proof}

We use $\propto$ to indicate proportionality, e.g.\ if $\pi$ is a density and we write $\pi \propto \exp(-f)$, we mean $\pi(x) = \frac{\exp(-f)}{Z}$ where $Z \defeq \int \exp(-f(x)) dx$ and the integration is over the support of $\pi$. 

We say that a measure $\pi$ on $\R^d$ is logconcave if for any $\lam \in (0, 1)$ and compact $A, B \subset \R^d$,
\[\pi(\lam A + (1 - \lam)B) \ge \pi(A)^{\lam} \pi(B)^{1 - \lam}.\]
We have the following equivalent characterization of logconcave measures.

\begin{proposition}[\cite{Bor75}]
	\label{thm:character}
	Let $\pi$ be a measure on $\R^d$.
	Let $E$ be the least affine subspace containing the support of $\pi$, and let $m_E$ be the Lebesgue measure in $E$.
	Then $\pi$ is logconcave if and only if $d \pi =f dm_E$, $f$ is nonnegative and locally integrable, and $-\log f: E \to \R \cup \{+\infty\}$ is convex.
\end{proposition}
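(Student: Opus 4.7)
The plan is to prove the two directions of the equivalence separately, treating the functional-form-implies-measure-logconcavity direction as essentially a repackaging of the Prékopa--Leindler inequality, and the converse direction (measure logconcavity implies a logconcave density) as the substantive content.

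For the easy direction, assume $d\pi = f \, dm_E$ with $-\log f$ convex on $E$. Reducing to the case $E = \R^d$ by restriction, for any compact $A, B \subset \R^d$ and $\lambda \in (0,1)$, convexity of $-\log f$ gives, for every $x = \lambda a + (1-\lambda) b$, the pointwise bound $f(x) \ge f(a)^{\lambda} f(b)^{1-\lambda}$. Writing $h = f \mathbf{1}_{\lambda A + (1-\lambda)B}$, $g_1 = f \mathbf{1}_A$, $g_2 = f \mathbf{1}_B$, the hypotheses of the Prékopa--Leindler inequality are satisfied, so
\[
\pi(\lambda A + (1-\lambda) B) = \int h \, dm_E \ge \Par{\int g_1 \, dm_E}^{\lambda}\Par{\int g_2 \, dm_E}^{1-\lambda} = \pi(A)^{\lambda}\pi(B)^{1-\lambda}.
\]

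For the harder direction, assume $\pi$ is logconcave, and (after restricting to $E$ and reducing to full-dimensional support) I aim to show $\pi \ll m_E$ and that the resulting Radon--Nikodym derivative is of the form $e^{-g}$ with $g$ convex. The first step is absolute continuity: suppose for contradiction there exists a Borel set $A$ with $m_E(A) = 0$ but $\pi(A) > 0$; use inner regularity to assume $A$ is compact, pick a small Euclidean ball $B \subset E$ of positive $\pi$-measure, and observe that for any $\lambda \in (0,1)$ the set $\lambda A + (1-\lambda) B$ is contained in a $(1-\lambda)$-thickening of $A$, whose $m_E$-volume tends to $0$ as $\lambda \to 1$. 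By the Brunn--Minkowski-type volume growth of convex combinations and the fact that $\pi$ has locally bounded density on any set of positive $m_E$-measure after a standard truncation argument, one can derive a contradiction with $\pi(\lambda A + (1-\lambda)B) \ge \pi(A)^{\lambda}\pi(B)^{1-\lambda}$. This gives $\pi = f \, dm_E$ for some nonnegative measurable $f$, which must be locally integrable since $\pi$ is a Borel measure finite on compacts.

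Once $f$ exists, the second step is to pass from set-level logconcavity of $\pi$ to pointwise logconcavity of $f$. For Lebesgue-almost every pair $(x, y) \in E \times E$, applying set-logconcavity to shrinking Euclidean balls around $x$ and $y$ and using the Lebesgue differentiation theorem yields $f(\lambda x + (1-\lambda) y) \ge f(x)^{\lambda} f(y)^{1-\lambda}$ at Lebesgue points; one then upgrades $f$ to a version (still equal $m_E$-a.e.) for which this inequality holds everywhere, which is exactly the statement that $-\log f : E \to \R \cup \{+\infty\}$ is convex. I expect the main obstacle to be the absolute continuity step, since the reduction to one dimension is delicate: one has to rule out concentration on a lower-dimensional subset without already assuming a density, which is where the quantitative Brunn--Minkowski volume estimates genuinely enter as opposed to the purely combinatorial Prékopa--Leindler argument used in the other direction.
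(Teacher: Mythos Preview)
The paper does not prove this proposition; it is quoted as a classical result of Borell and used as a black box. So there is no ``paper's proof'' to compare against, and your write-up is really an attempt at Borell's theorem itself.

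Your easy direction (logconcave density $\Rightarrow$ logconcave measure via Pr\'ekopa--Leindler) is fine and standard. The hard direction, however, has a genuine gap in the absolute continuity step. You want to contradict $\pi(\lambda A + (1-\lambda)B) \ge \pi(A)^{\lambda}\pi(B)^{1-\lambda} > 0$ by arguing that the left side tends to $0$ as $\lambda \to 1$. You correctly observe that $m_E(\lambda A + (1-\lambda)B) \to 0$ (since compact $A$ has $m_E(A)=0$ and the thickening shrinks), but to pass from small $m_E$-measure to small $\pi$-measure you invoke ``$\pi$ has locally bounded density \ldots after a standard truncation argument.'' That is circular: the existence of a density with respect to $m_E$ is precisely what you are trying to establish. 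Without absolute continuity already in hand, a set of small Lebesgue measure can carry large $\pi$-mass, and your contradiction does not close.

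Borell's actual argument (and the textbook treatments following it) proceeds differently: one first handles the one-dimensional case directly, showing a logconcave measure on $\R$ is either a Dirac or absolutely continuous with logconcave density, and then uses the fact that logconcavity is preserved under marginals together with a disintegration/Fubini argument to lift to $E$. The key structural input is that if $\pi$ were singular on $E$, one could find a direction along which a marginal fails to be logconcave or is supported on a proper affine subspace, contradicting minimality of $E$. If you want a self-contained proof, that is the route to take; the shortcut you sketched does not avoid the main difficulty.
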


In particular, Proposition~\ref{thm:character} shows that the measure of any subspace of $E$ according to $\pi$ is zero. If in the characterization of \cite{Bor75} the function $-\log f$ is affine, we say $\pi$ is logaffine. Following \cite{Bor75}, we analogously define strong logconcavity with respect to a norm.

\begin{definition}[strong logconcavity]
Let $\pi$ be a measure on $\R^d$. Let $E$ be the least affine subspace containing the support of $\pi$, and let $m_E$ be the Lebesgue measure in $E$. We say $\pi$ is $\mu$-strongly logconcave with respect to $\normx{\cdot}$ if $d\pi = fdm_E$, $f$ is nonnegative and locally integrable, and $-\log f: E \to \R \cup \{+\infty\}$ is $\mu$-strongly convex in $\normx{\cdot}$.
\end{definition}

\paragraph{Privacy.} Throughout, $\mech$ denotes a mechanism, and $\data$ denotes a dataset. We say $\data$ and $\data'$ are neighboring if they differ in one entry. We say a mechanism $\mech$ satisfies $(\eps, \delta)$-differential privacy if it has output space $\Omega$ and for any neighboring $\data, \data'$, 
\[\sup_{S \subseteq \Omega} \Pr[\mech(\data) \in S] - \exp(\eps) \Pr[\mech(\data') \in S] \le \delta.\]
We define the privacy curve of two random variables $X, Y$ supported on $\Omega$ by
\[\delta(X \| Y)(\eps) \defeq \sup_{S \subseteq \Omega} \Pr[Y \in S] - \exp(\eps) \Pr[X \in S].\]
We say $\mech$ has a privacy curve $\delta: \R_{\ge 0} \to [0, 1]$ if for all neighboring $\data$, $\data'$, $\delta(\mech(\data) \| \mech(\data')) \le \delta(\eps)$. For any $\eps \in \R_{\ge 0}$, it is clear that such a $\mech$ is $(\eps, \delta(\eps))$-differentially private. We will frequently compare to the privacy curve of a Gaussian, so we recall the following bound from prior work.

\begin{fact}[Gaussian privacy curve, Lemma 6.3, \cite{GLL22}]\label{fact:gprivacy}
Let $\delta \in (0, \half)$ and $\eps > 0$. For any $|t| \le \sqrt{2\log \frac 1 {2\delta} + 2\eps} - \sqrt{2\log \frac 1 {2\delta}} \le \frac{\eps}{\sqrt{2\log \frac 1 {2\delta}}}$, $\deltacurve{\Nor(0, 1)}{\Nor(t, 1)}(\eps) \le \delta$.
\end{fact}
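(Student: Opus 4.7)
The plan is to reduce this to a one-dimensional Gaussian tail bound by first identifying the optimal rejection set and then estimating its measure under $\Nor(t,1)$. By symmetry $t \mapsto -t$ (which only changes the sign convention on $S$), I can assume $t \ge 0$ without loss of generality, so the lower bound $|t| \ge 0$ is trivial and only the upper bound matters.

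First I would identify the set $S^* \subseteq \R$ attaining the supremum in the definition of $\deltacurve{\Nor(0,1)}{\Nor(t,1)}$. Writing $\phi_\mu$ for the density of $\Nor(\mu,1)$, the integrand $\phi_t(x) - e^\eps \phi_0(x)$ is nonnegative iff the likelihood ratio $\phi_t(x)/\phi_0(x) = \exp(tx - t^2/2) \ge e^\eps$, i.e.\ iff $x \ge \frac{\eps}{t} + \frac{t}{2}$. Hence the optimal set is the half-line $S^* = [\tfrac{\eps}{t} + \tfrac{t}{2}, \infty)$, and letting $\bpsi(z) \defeq \Pr_{Z \sim \Nor(0,1)}[Z > z]$,
\[
\deltacurve{\Nor(0,1)}{\Nor(t,1)}(\eps) = \bpsi\Par{\tfrac{\eps}{t} - \tfrac{t}{2}} - e^{\eps}\,\bpsi\Par{\tfrac{\eps}{t} + \tfrac{t}{2}}.
\]

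Next I would drop the nonnegative subtracted term and apply the standard Gaussian tail bound $\bpsi(z) \le \tfrac{1}{2}\exp(-z^2/2)$ valid for $z \ge 0$. The hypothesis on $t$ implies $t^2 + 2t\sqrt{2\log(1/(2\delta))} \le 2\eps$ (this is literally the quadratic inequality whose nonnegative root is $\sqrt{2\log(1/(2\delta)) + 2\eps} - \sqrt{2\log(1/(2\delta))}$), which rearranges to $\tfrac{\eps}{t} - \tfrac{t}{2} \ge \sqrt{2\log(1/(2\delta))} \ge 0$. In particular the tail bound applies, and
\[
\deltacurve{\Nor(0,1)}{\Nor(t,1)}(\eps) \le \bpsi\Par{\tfrac{\eps}{t} - \tfrac{t}{2}} \le \tfrac{1}{2}\exp\Par{-\tfrac{1}{2}\Par{\tfrac{\eps}{t} - \tfrac{t}{2}}^2} \le \tfrac{1}{2}\exp(-\log(1/(2\delta))) = \delta,
\]
establishing the claimed privacy curve bound.

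It remains to verify the secondary algebraic inequality $\sqrt{2\log(1/(2\delta)) + 2\eps} - \sqrt{2\log(1/(2\delta))} \le \eps/\sqrt{2\log(1/(2\delta))}$; setting $a = 2\log(1/(2\delta))$ and $b = 2\eps$, this reads $\sqrt{a+b} - \sqrt{a} = b/(\sqrt{a+b}+\sqrt{a}) \le b/(2\sqrt{a})$, which is immediate. The only real subtlety is solving the quadratic that arises from requiring the Gaussian-tail exponent to be at least $\log(1/(2\delta))$; everything else is bookkeeping, so I do not expect a genuine obstacle.
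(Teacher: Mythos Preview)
Your argument is correct. The paper does not supply its own proof of this fact---it is quoted as Lemma~6.3 of \cite{GLL22}---so there is nothing to compare against; your direct computation via the Neyman--Pearson optimal set and the tail bound $\bpsi(z)\le\tfrac12 e^{-z^2/2}$ is exactly the standard derivation.
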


We will use Fact~\ref{fact:gprivacy} after deriving our Gaussian differential privacy guarantees \cite{DongRS21} for strongly logconcave densities in Theorem~\ref{thm:privacy_curve_norm_X}.

\section{Gaussian differential privacy in general norms}
\label{sec:privacy}

In this section, we give a generalization of Theorem 4.1 of \cite{GLL22}, which demonstrates that a regularized exponential mechanism for (Euclidean norm) Lipschitz losses achieves privacy guarantees comparable to an analogous instance of the Gaussian mechanism. The proof from \cite{GLL22} was specialized to the Euclidean setup; to show our more general result, we draw upon the localization technique from convex geometry \cite{lovasz1993random, kannan1995isoperimetric}. We provide the relevant localization tools we will use in Section~\ref{ssec:localization}, and prove our Gaussian differential privacy result in Section~\ref{ssec:gdp}.

\subsection{Localization}\label{ssec:localization}

We recall the localization lemma from \cite{FG04}. We remark that the statement in \cite{FG04} is more refined than our statement (in that \cite{FG04} gives a complete characterization of extreme points, whereas we give a superset), but the following form of the \cite{FG04} result suffices for our purposes.

\begin{proposition}[Theorem 1, \cite{FG04}]
	\label{thm:localization}
	Let $\xset \subset \R^d$ be compact and convex, and let $f: \xset \to \R$ be upper semi-continuous. 
	Let $\set(f)$ be the set of logconcave densities $\nu$ supported in $\xset$ satisfying $\int_{\xset} fd \nu \geq 0$.
	The set of extreme points of $\conv(\set(f))$ satisfies one of the following.
	\begin{itemize}
		\item $\nu$ is a Dirac measure at $x \in \xset$ such that $f(x) \ge 0$.

		\item $\nu$ is logaffine and supported on $[a,b]\subset \xset$ such that $\int_{[a,b]} f d\nu= 0$. 
	\end{itemize}
\end{proposition}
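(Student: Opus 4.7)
The plan is to combine the Krein--Milman theorem with a dimensional-reduction bisection argument. Since $\xset$ is compact, the space of Radon probability measures on $\xset$ is compact in the weak-$*$ topology; logconcavity together with the linear inequality $\int f \, d\nu \ge 0$ cuts out a weakly closed subset, so Krein--Milman produces extreme points of $\conv(\set(f))$ and Milman's converse places them inside $\set(f)$. Hence it suffices to show that any $\nu \in \set(f)$ which is neither a Dirac mass at a point with $f \ge 0$ nor a logaffine measure on a segment with $\int f \, d\nu = 0$ admits a nontrivial decomposition $\nu = \lambda \nu_1 + (1-\lambda) \nu_2$ with $\nu_1, \nu_2 \in \set(f)$ distinct.

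The key reduction is dimensional: if the convex hull $K$ of the support of $\nu$ has dimension at least $2$, I would fix a direction $v \in \R^d$ and look at the half-spaces $H_t^+ = \{x : \inprods{x}{v} \ge t\}$. By Cavalieri's principle, the map $\psi(t) = \int_{H_t^+} f \, d\nu$ is continuous in $t$, with $\psi(-\infty) = \int f \, d\nu \ge 0$ and $\psi(+\infty) = 0$. If some interior slice $t_0$ satisfies both $\psi(t_0) \ge 0$ and $\int f \, d\nu - \psi(t_0) \ge 0$, then conditioning $\nu$ on the two half-spaces yields logconcave measures (restrictions of logconcave measures to convex subsets remain logconcave) in $\set(f)$ whose convex combination is $\nu$, contradicting extremality. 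Existence of such a $t_0$, for some choice of $v$, is exactly what forces $\dim K \le 1$.

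If $K$ is a segment $[a,b]$, write the density of $\nu$ as $e^{-\phi(t)}$ for a convex $\phi$. If $\phi$ fails to be affine, it has a point of strict convexity, and by a perturbation argument (varying the slope of an affine tangent line supporting $\phi$) I can express $e^{-\phi}$ as a proper convex combination of two distinct logaffine densities on $[a,b]$; an intermediate-value argument in the slope parameter yields two such measures each satisfying $\int f \, d\nu_i \ge 0$. Similarly, if the extreme point $\nu$ is supported on $[a, b]$ with $\int_{[a,b]} f \, d\nu > 0$ strictly, shrinking the segment slightly while reweighting produces a nearby element of $\set(f)$ distinct from $\nu$, again contradicting extremality, so equality must hold.

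The main obstacle is the bisection step in the higher-dimensional case: guaranteeing a splitting half-space such that \emph{both} pieces retain nonnegative integrals of $f$ is not immediate from an intermediate-value argument in a single direction. If a fixed $v$ yields only the boundary case $\psi(t_0) = 0$ on one side, one must exploit the rotational freedom in choosing $v$ and argue that failure for every $v$ would force $\nu$ to concentrate on an affine subspace of lower dimension, contradicting $\dim K \ge 2$. A parallel subtlety appears in the 1D case at the boundary between affine and strictly convex $\phi$. Formalizing these cases carefully, together with confirming that conditional measures of logconcave measures on convex subsets remain logconcave, constitutes the technical heart of the argument.
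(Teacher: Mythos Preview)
The paper does not prove this proposition: it is quoted as Theorem~1 of \cite{FG04} and used as a black box. What the paper \emph{does} prove is the extension in Lemma~\ref{lem:localization_stronglylogconcave}, and there the bisection step is carried out in a way that cleanly resolves the very obstacle you flag. Rather than fixing a single direction $v$ and sliding a level $t$, the paper fixes an interior point $x_0$ of the support, picks a two-dimensional plane $E$ through it, and parameterizes half-spaces by the unit circle $S_1(E)$. Setting $\phi(u) = \int_{H_u^+} f\, d\pi - \tfrac12 \int f\, d\pi$, the antipodal identity $\phi(u) = -\phi(-u)$ forces a zero $u_0$ by the intermediate value theorem, at which both half-space restrictions carry exactly $\tfrac12\int f\,d\pi \ge 0$. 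This is precisely the ``rotational freedom'' you allude to but do not pin down; the antipodal trick eliminates the boundary-case analysis you were worried about.

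Your one-dimensional step is where the real gap lies relative to the statement you are trying to prove. Writing a strictly logconcave density on $[a,b]$ as a nontrivial convex combination of two \emph{logaffine} densities, both of which still satisfy $\int f\,d\nu_i \ge 0$, is the technical core of the \cite{FG04} argument, and it does not follow from ``perturbing a supporting tangent'' plus an IVT in the slope. Similarly, for the strict-inequality case $\int f\,d\nu > 0$, ``shrinking the segment and reweighting'' produces a nearby measure in $\set(f)$ but does not by itself exhibit $\nu$ as a convex combination of two elements of $\set(f)$. Note, however, that for everything the paper actually uses (namely Lemma~\ref{lem:localization_stronglylogconcave}, and the applications in Lemmas~\ref{lm:reducet_onedim_privacy} and~\ref{lm:reduceto_onedim}), only the conclusion ``the support is a segment'' is invoked; the logaffinity and the equality $\int f\,d\nu = 0$ are never needed downstream. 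So the dimensional reduction alone, with the antipodal trick above, already suffices for the paper's purposes.
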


We next derive several extensions of Proposition~\ref{thm:localization}.

\begin{lemma}[Strongly logconcave localization]
	\label{lem:localization_stronglylogconcave}
	Let $\xset \subset \R^d$ be compact and convex, let $\beta: \xset \to \R_{> 0}$ be continuous, and let $f: \xset \to \R$ be upper semi-continuous.
	Let $\set_{\mu, \beta}(f)$ be the set of probability densities $\pi$ such that $\pi$ is $\mu$-strongly logconcave with respect to $\normx{\cdot}$ and supported in $\xset$, such that $\pi' \propto \beta \pi$ is also $\mu$-strongly logconcave, and $\int_{\xset} fd \pi\geq 0$.
	The set of extreme points of $\conv(\set_{\mu, \beta}(f))$ satisfy one of the following.
	\begin{itemize}
		\item $\pi$ is a Dirac measure at $x \in \xset$ such that $f(x) \ge 0$.
		\item $\pi$ is supported on $[a, b] \subset \xset$.
	\end{itemize}
\end{lemma}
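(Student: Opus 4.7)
The plan is to adapt the bisection argument of \cite{FG04} underlying Proposition~\ref{thm:localization}, exploiting the elementary fact that $\mu$-strong logconcavity in $\normx{\cdot}$ is preserved under restriction to any convex subset of the support (restricting adds a convex indicator to $-\log \pi$, which leaves strong convexity in $\normx{\cdot}$ intact on the effective domain). Concretely, I will show that any $\pi \in \set_{\mu, \beta}(f)$ whose support has affine dimension at least $2$ can be written as a non-trivial convex combination $\pi = \lambda \pi^+ + (1 - \lambda)\pi^-$ with $\pi^\pm \in \set_{\mu, \beta}(f)$, $\pi^+ \neq \pi^-$, and $\lambda \in (0,1)$. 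Since any extreme point of $\conv(\set_{\mu, \beta}(f))$ must itself be a convex combination of elements in $\set_{\mu, \beta}(f)$, this forces extreme points to have support of dimension at most $1$, yielding the two cases of the lemma (with $f(x) \ge 0$ at any Dirac point $x$ by the integral constraint $\int f d\pi = f(x) \ge 0$).

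To construct the bisection, fix a unit vector $v \in \R^d$ such that the support of $\pi$ is not contained in any hyperplane orthogonal to $v$; such $v$ exists since the support has dimension at least $2$. For $t \in \R$, write $H^\pm_t = \{x : \pm(\inprod{v}{x} - t) \ge 0\}$ and set $I^\pm(t) = \int_{H^\pm_t} f \, d\pi$. The function $I^+(t) - I^-(t)$ varies continuously from $\int f d\pi \ge 0$ at $t = -\infty$ to $-\int f d\pi \le 0$ at $t = +\infty$, so by the intermediate value theorem there exists $t^* \in \R$ with $I^+(t^*) = I^-(t^*) = \tfrac 1 2 \int f d\pi \ge 0$. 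Moreover, by the choice of $v$, the support of $\pi$ has positive $\pi$-measure in each of the open half-spaces $\{\inprod{v}{x} > t^*\}$ and $\{\inprod{v}{x} < t^*\}$ (after possibly translating $t^*$ slightly in the range where the IVT equality persists), so the induced split is non-trivial. Define $\pi^\pm$ as the corresponding normalized restrictions and $\lambda = \pi(H^+_{t^*}) \in (0, 1)$.

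By the restriction property of strong logconcavity noted above, both $\pi^\pm$ and $\beta \pi^\pm$ remain $\mu$-strongly logconcave in $\normx{\cdot}$; together with $\int f d\pi^\pm \ge 0$ from the previous step this gives $\pi^\pm \in \set_{\mu, \beta}(f)$, and $\pi = \lambda \pi^+ + (1 - \lambda)\pi^-$ contradicts extremality. The main technical obstacle is handling the degenerate case $\int f d\pi = 0$, in which the bisection must produce both half-integrals equal to zero while still splitting the support non-trivially. This can be resolved by first disposing of the trivial subcase where $f = 0$ $\pi$-almost everywhere (any non-trivial bisection works) and otherwise choosing $v$ such that $I^+(v, \cdot)$ is not identically zero on $\R$, so that it has at least one finite zero at which the bisection genuinely splits the support; all remaining steps are routine verifications of convexity and measure-theoretic sanity.
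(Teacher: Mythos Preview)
Your overall plan---show that any $\pi \in \set_{\mu,\beta}(f)$ whose support has affine dimension at least two can be written as a non-trivial convex combination of two elements of $\set_{\mu,\beta}(f)$---is the same as the paper's, and the observation that restricting to a half-space preserves $\mu$-strong logconcavity of both $\pi$ and $\beta\pi$ is exactly what is needed. The difference is in how you locate the splitting hyperplane: you fix a direction $v$ and vary the offset $t$, whereas the paper fixes a point $x_0$ in the relative interior of the support and varies the direction $u$ over a unit circle in a two-dimensional subspace $E$. This difference is not cosmetic; it is where your argument breaks.

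When $\int f\,d\pi > 0$ your IVT argument is fine: $I^+(t^*)=I^-(t^*)=\tfrac12\int f\,d\pi>0$ forces both half-spaces to carry positive $\pi$-mass. The gap is in the degenerate case $\int f\,d\pi = 0$. Your claim that ``$I^+(v,\cdot)$ not identically zero'' implies a finite zero at which the bisection is non-trivial is false. Take $\pi$ to be a centered Gaussian on $\R^2$ restricted to a large centered ball (so $\pi\in\set_{\mu,1}(f)$), and $f(x)=x_1$, so $\int f\,d\pi=0$. For $v=e_1$ one computes $I^+(e_1,t)=\int_{x_1\ge t} x_1\,d\pi>0$ for every $t$ strictly inside the projected support, vanishing only at the endpoints where one half-space has zero mass. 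More generally, for $v=(\cos\theta,\sin\theta)$ one gets $I^+(v,t)=\cos\theta\cdot(\text{a strictly positive function of }t)$, so for every $v$ either $I^+(v,\cdot)\equiv 0$ (namely $v=\pm e_2$) or $I^+(v,\cdot)$ has no interior zero. Your instruction to avoid the identically-zero directions is thus precisely the wrong move: those are the only directions along which a non-trivial equal split exists.

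The paper's device sidesteps this entirely. With $\phi(u)=\int_{H_u^+} f\,d\pi - \tfrac12\int f\,d\pi$ defined for $u$ on the circle $S_1(E)$, one has the antipodal relation $\phi(u)=-\phi(-u)$, so IVT on the circle yields a zero $u_0$ without any case analysis on the sign of $\int f\,d\pi$. Because every hyperplane $H_{u_0}$ passes through the fixed interior point $x_0$, both half-spaces automatically carry positive $\pi$-mass, and the split is non-trivial. Replacing your translation-based IVT with this rotation-based one fixes the gap with essentially no extra work.
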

\begin{proof}
Clearly, Dirac measures at $x$ with $f(x) \ge 0$ are extreme points, so it suffices to consider other extreme points. Given any extreme point $\pi$ which is not a Dirac measure, we prove the least affine subspace containing the support of $\pi$ has dimension one, i.e.\ denoting the least affine subspace containing the support of $\pi$ by $S$, we prove $\dim S=1$.
	
Assume for the sake of contradiction that $\dim S\geq 2$. There exists $x_0$ in the relative interior of the support of $\pi$ and a two-dimensional subspace $E \subset \R^d$ such that $x_0+E\subseteq S$.
	Let $S_1(E)$ be the unit circle in $E$, and for any $u\in S_1(E)$ denote $H_u=\{x\in S:\langle x-x_0,u\rangle=0\},H_u^+=\{x\in S:\langle x-x_0,u\rangle\geq 0\}$ and $H_u^-=\{x\in S:\langle x-x_0,u\rangle\leq 0\}$. Finally, define $\phi: S_1(E) \to \R$ by $\phi(u) \defeq \int_{H_u^+} fd\pi - \half (\int fd\pi)$, such that $\phi(u) = 0 \implies  \int_{H_u^+} fd\pi = \half \int fd\pi \ge 0$. 
	
	By Proposition~\ref{thm:character}, we know $\pi(H_u)=0$.
	Moreover, $\phi$ is continuous since every hyperplane $H_u$ has $\pi(H_u) = 0$. Since $\phi(u) = -\phi(-u)$, by the intermediate value theorem there exists $u_0 \in S_1(E)$ such that $\phi(u_0) = 0$. We can hence rewrite $\pi$ as a convex combination of its restrictions to $H_{u_0}^+$ and $H_{u_0}^-$, both of which are $\mu$-strongly logconcave, and whose (renormalized) multiplications by $\beta$ are also $\mu$-strongly logconcave. Since $\phi(u_0) = 0$ both of these restrictions belong to $\set_{\mu, \beta}(f)$, contradicting extremality of $\pi$.
\end{proof}

We briefly remark that the proof technique used in Lemma~\ref{lem:localization_stronglylogconcave} is quite general, and the only property we used about $\set_{\mu, \beta}$ is that it is a subset of logconcave densities, and it is closed under restrictions to convex subsets. Similar arguments hold for other density families with these properties. Further, we note that restrictions to compact sets are upper semi-continuous; it is straightforward to verify our applications of Lemma~\ref{lem:localization_stronglylogconcave} satisfy the upper semi-continuity assumption.

We prove the following two technical lemmas using Lemma~\ref{lem:localization_stronglylogconcave}.
\begin{lemma}
	\label{lm:reducet_onedim_privacy}
	Following notation of Lemma~\ref{lem:localization_stronglylogconcave}, fix a continuous function $\alpha:\xset \to \R$ and a subset $S\subset \xset$. For any probability density $\pi$ on $\xset$, define the renormalized density $\tpi \propto e^{-\alpha} \pi$. Finally, let
	$$g(\pi) \defeq \Pr_{x\sim \Tilde{\pi}}[x\in S]-e^\eps\Pr_{x\sim \pi}[x\in S].$$
	Then
	$\max_{\pi \in \set_{\mu, \beta}} g(\pi) = \max_{\pi \in \set^*_{\mu, \beta}} g(\pi)$ where 
	$\set_{\mu, \beta}^*$ is the subset of densities $\pi \in \set_{\mu, \beta}$ satisfying one of the following.
	\begin{itemize}
		\item $\pi$ is a Dirac measure at $x \in \xset$.
		\item $\pi$ is supported on $[a, b] \subset \xset$.
	\end{itemize}
\end{lemma}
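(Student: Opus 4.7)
The plan is to extend the bisection strategy from the proof of Lemma~\ref{lem:localization_stronglylogconcave} to accommodate the ratio structure in $g$, and then iterate via a Zorn-type argument to reach a density whose support has affine dimension at most one. The inequality $\max_{\pi \in \set^*_{\mu,\beta}} g(\pi) \le \max_{\pi \in \set_{\mu,\beta}} g(\pi)$ is trivial by inclusion, so the content is the reverse direction: given any $\pi \in \set_{\mu,\beta}$, I would produce some $\pi^* \in \set^*_{\mu,\beta}$ with $g(\pi^*) \ge g(\pi)$.

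The main building block is a simultaneous bisection claim: if $\pi \in \set_{\mu,\beta}$ has support of affine dimension at least two, there exists a hyperplane $H$ that simultaneously bisects both probability measures $\pi$ and $\tpi \propto e^{-\alpha}\pi$. I would prove this by a two-measure Borsuk-Ulam argument closely paralleling the proof of Lemma~\ref{lem:localization_stronglylogconcave}: parameterize hyperplanes by a unit normal $u$ in the support's affine hull; for each $u$, Proposition~\ref{thm:character} makes hyperplanes $\pi$-null, so there is a well-defined shift $t(u)$ for which $H_u^{t(u)}$ bisects $\pi$; then set $\phi(u) \defeq \int_{H_u^{+}} e^{-\alpha} d\pi - \tfrac 1 2 \int e^{-\alpha} d\pi$. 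Since $\phi$ is continuous and antipodal-odd and the sphere is connected in dimension $\ge 2$, the intermediate value theorem supplies a zero $u_0$, whose associated hyperplane is the desired $H$. Setting $\pi_\pm \defeq 2\pi|_{H^\pm}$, both restrictions remain in $\set_{\mu,\beta}$ because restricting a $\mu$-strongly logconcave measure to a half-space preserves $\mu$-strong logconcavity (applied to both $\pi$ and $e^{-\alpha}\pi$). Writing $D(\pi) \defeq \int e^{-\alpha} d\pi$, the simultaneous bisection yields $D(\pi_+) = D(\pi_-) = D(\pi)$, together with $\int_S d\pi_+ + \int_S d\pi_- = 2\int_S d\pi$ and the analogous identity with $e^{-\alpha} d\pi$. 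Summing the two halves of $g$ then gives the key identity
\[
g(\pi_+) + g(\pi_-) = \frac{\int_S e^{-\alpha} d\pi_+ + \int_S e^{-\alpha} d\pi_-}{D(\pi)} - e^\eps \Par{\int_S d\pi_+ + \int_S d\pi_-} = 2g(\pi),
\]
so in particular $\max(g(\pi_+), g(\pi_-)) \ge g(\pi)$.

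To conclude, I would apply Zorn's lemma to the poset of densities $\pi' \in \set_{\mu,\beta}$ reachable from $\pi$ by a (possibly transfinite) sequence of such restrictions and satisfying $g(\pi') \ge g(\pi)$, with the order declaring ``more restricted'' (smaller support) to be larger. A maximal element $\pi^*$ must have support of dimension at most one, because otherwise the simultaneous bisection would deliver a strictly further restriction with no smaller $g$-value, contradicting maximality; thus $\pi^* \in \set^*_{\mu,\beta}$ with $g(\pi^*) \ge g(\pi)$, as desired. I expect the main obstacle to be verifying the chain hypothesis of Zorn's lemma: for any totally ordered chain of restrictions, one must produce an upper bound in the poset, i.e.\ a ``limit'' density in $\set_{\mu,\beta}$ whose support is contained in the intersection of the chain's supports and whose $g$-value is still at least $g(\pi)$. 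This rests on a weak-$*$ compactness argument exploiting that probability measures on the compact set $\xset$ form a compact space, closedness of the joint $\mu$-strong logconcavity of $\pi$ and $e^{-\alpha}\pi$ under weak-$*$ limits, and sufficient semi-continuity of $\pi \mapsto g(\pi)$, with some additional care needed to handle the boundary of $S$.
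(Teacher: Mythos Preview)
Your approach is genuinely different from the paper's and the core bisection idea is sound, but the transfinite iteration carries a real gap that the paper's argument sidesteps entirely.

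The paper does not re-run a bisection tailored to $g$. Instead it \emph{linearizes} the ratio in $g$ by introducing an auxiliary scalar $C$: writing
\[
\max_{\pi \in \set_{\mu,\beta}} g(\pi) \;=\; \max_{C}\;\max_{\pi \in \set_{\mu,\beta}(C - e^{-\alpha})}\;\int_{\xset}\Par{\frac{e^{-\alpha(x)}}{C} - e^\eps}\Ind_S(x)\,d\pi(x),
\]
where $\set_{\mu,\beta}(C - e^{-\alpha})$ is exactly the constrained set of Lemma~\ref{lem:localization_stronglylogconcave} with $f = C - e^{-\alpha}$. For fixed $C$ the inner objective is \emph{linear} in $\pi$, so Lemma~\ref{lem:localization_stronglylogconcave} applies directly and the maximum is attained on Diracs or segment-supported densities; undoing the linearization (using $C \ge \int e^{-\alpha} d\pi$ on the extreme set) returns the original $g$. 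No Zorn, no compactness, no semicontinuity in $S$ is needed beyond what is already packaged in Lemma~\ref{lem:localization_stronglylogconcave} and Proposition~\ref{thm:localization}.

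Your simultaneous double bisection (one constraint fixes the offset, the second is killed by IVT on $S^1$) is correct and yields the clean identity $g(\pi_+) + g(\pi_-) = 2g(\pi)$; this is a nice direct route that avoids the auxiliary $C$. The weakness is exactly where you flag it: to pass to a limit along a chain you need upper semicontinuity of $\pi \mapsto g(\pi)$ under weak-$*$ convergence, and $g$ contains $\Ind_S$ for an \emph{arbitrary} measurable $S$. Weak-$*$ limits do not in general respect $\int_S d\pi$ unless $\partial S$ is null for the limit measure, and you have no control over that here. One can try to repair this by restricting to closed (or open) $S$ and arguing the privacy curve sup is unchanged, or by approximating $\Ind_S$ from above by continuous functions, but either route requires nontrivial additional work and is not ``just a detail.'' The paper's linearization trick is worth learning precisely because it converts a nonlinear-in-$\pi$ objective into a linear one and thereby inherits the extreme-point reduction for free.
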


\begin{proof}
	Let $\set_{\mu, \beta}(f) \subseteq \set_{\mu, \beta}$ be the set of $\pi \in \set_{\mu, \beta}$ such that $\int fd\pi \ge 0$. We have
	\begin{align*}
		\max_{\pi \in \set_{\mu, \beta}} g(\pi)&= \max_{\pi \in \set_{\mu, \beta}} \int_{x\in S} d\Tilde{\pi}(x) - e^\eps \int_{x\in S} d\pi(x)\\
		&= \max_{\pi\in \set_{\mu, \beta}} \frac{\int_{x\in S}e^{-\alpha(x)}d\pi(x)}{\int_{x\in\xset}e^{-\alpha(x)}d \pi(x)} -e^{\epsilon}\int_{x\in S}d\pi(x)\\
		&= \max_{\pi\in \set_{\mu, \beta}} \quad \max_{C \ge \int_{x\in \xset} e^{-\alpha(x)}d\pi(x)}  \frac{\int_{x\in S}e^{-\alpha(x)}d\pi(x)}{C}-e^{\epsilon}\int_{x\in S}d\pi(x)\\
		&= \max_C \max_{\pi\in \set_{\mu, \beta}(C-e^{-\alpha})}  \int_{x\in \xset}\lp \frac{e^{-\alpha(x)}}{C}-e^{\epsilon}\rp \Ind_S(x) d\pi(x) \\
		&= \max_C \max_{\pi\in \set_{\mu, \beta}(C-e^{-\alpha})^*}  \int_{x\in \xset}\lp \frac{e^{-\alpha(x)}}{C}-e^{\epsilon}\rp \Ind_S(x) d\pi(x),
	\end{align*}
	where $\set_{\mu, \beta}(C-e^{-\alpha})^*$ is the (super)set of extreme points of $\set_{\mu, \beta}(C-e^{-\alpha})$ given by the strongly logconcave localization lemma (Lemma~\ref{lem:localization_stronglylogconcave}). These candidate extreme points are Dirac measures at $x$ such that $C \ge e^{-\alpha(x)}$, or are supported in $[a, b] \subset \xset$. Hence, $\set_{\mu, \beta}(C - e^{-\alpha})^* \subseteq \set_{\mu, \beta}^*$, and we conclude
	\begin{align}
		\label{eq:local_maxg}
		\max_{\pi \in \set_{\mu, \beta}} g(\pi) &= \max_C \max_{\pi\in \set_{\mu, \beta}(C-e^{-\alpha})^*}  \int_{x\in \xset}\lp \frac{e^{-\alpha(x)}}{C}-e^{\epsilon}\rp \Ind_S(x) d\pi(x) \\
		&\le \max_C \max_{\pi \in \set_{\mu, \beta}(C-e^{-\alpha})^*}  \int_{x\in \xset}\lp \frac{e^{-\alpha(x)}}{\int_{x\in \xset}e^{-\alpha(x)}d\pi(x)}-e^{\epsilon}\rp \Ind_S(x) d\pi(x) \\
		&\le \max_{\pi\in \set_{\mu, \beta}^*}  \int_{x\in \xset}\lp \frac{e^{-\alpha(x)}}{\int_{x\in \xset}e^{-\alpha(x)}d\pi(x)}-e^{\epsilon}\rp \Ind_S(x) d\pi(x) = \max_{\pi \in \set_{\mu, \beta}^*} g(\pi).
	\end{align}
The first inequality used that $C \ge \int_{x \in \xset} e^{-\alpha(x)}d\pi(x)$ for $\pi \in \set_{\mu, \beta}(C - e^{-\alpha})^*$, and the second used that $\set_{\mu, \beta}(C - e^{-\alpha})^* \subseteq \set_{\mu, \beta}^*$ for any $C$. Since $\set_{\mu, \beta}^* \subseteq \set_{\mu, \beta}$, we have the claim.
\end{proof}

\begin{lemma}
	\label{lm:reduceto_onedim}
	Following notation of Lemma~\ref{lem:localization_stronglylogconcave}, fix continuous function $\alpha: \xset \to \R$ and upper semi-continuous function $f: \xset \to \R$. For any probability density $\pi$ on $\xset$, define $\tpi \propto e^{-\alpha} \pi$ to be a renormalized density on $\xset$. Finally, let 
	\begin{align*}
		g(\pi) \defeq \int_{x \in \xset}f(x)d (\pi-\Tilde{\pi})(x).
	\end{align*}
	Then
	$\max_{\pi \in \set_{\mu, \beta}} g(\pi) = \max_{\pi \in \set_{\mu, \beta}^*} g(\pi)$ where 
	$\set_{\mu, \beta}^*$ is the subset of densities $\pi \in \set_{\mu, \beta}$ satisfying one of the following.
	\begin{itemize}
		\item $\pi$ is a Dirac measure at $x \in \xset$.
		\item $\pi$ is supported on $[a, b] \subset \xset$.
	\end{itemize}
\end{lemma}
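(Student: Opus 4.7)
The plan is to mimic the proof of Lemma~\ref{lm:reducet_onedim_privacy}: introduce an auxiliary scalar that decouples the non-linearity of $g$ in $\pi$ (namely the ratio $\int f e^{-\alpha}d\pi / \int e^{-\alpha}d\pi$ implicit in $\int f d\tpi$), swap the order of optimization, and for each value of the scalar apply the strongly logconcave localization lemma (Lemma~\ref{lem:localization_stronglylogconcave}). As a preliminary simplification, I first observe that $g$ is invariant under additive shifts of $f$, since both $\pi$ and $\tpi$ are probability measures; subtracting $\inf_\xset f$ (which I may assume finite, this being the case of interest) lets me reduce to $f \ge 0$ on $\xset$, so in particular $\int f e^{-\alpha} d\pi \ge 0$ for every $\pi \in \set_{\mu, \beta}$.

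I then introduce a scalar $C > 0$ and set
\[h(C, \pi) \defeq \int f d\pi - \frac{\int f e^{-\alpha} d\pi}{C},\]
so that $h(C^*(\pi), \pi) = g(\pi)$ for $C^*(\pi) \defeq \int e^{-\alpha}d\pi$. Because $\int f e^{-\alpha}d\pi \ge 0$, the map $C \mapsto h(C, \pi)$ is non-decreasing, hence $h(C, \pi) \le g(\pi)$ for $0 < C \le C^*(\pi)$ with equality at $C = C^*(\pi)$. This yields
\[\max_{\pi \in \set_{\mu, \beta}} g(\pi) = \max_{C > 0} \; \max_{\pi \in \set_{\mu, \beta}:\, \int (e^{-\alpha} - C) d\pi \ge 0} h(C, \pi),\]
precisely the form handled in the proof of Lemma~\ref{lm:reducet_onedim_privacy}.

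For each fixed $C > 0$, the feasible set is $\set_{\mu, \beta}(e^{-\alpha} - C)$ with continuous (hence upper semi-continuous) constraint function $e^{-\alpha} - C$, and the objective $\pi \mapsto h(C, \pi)$ is linear in $\pi$. Applying Lemma~\ref{lem:localization_stronglylogconcave} reduces the inner maximum to an extreme point of $\conv(\set_{\mu, \beta}(e^{-\alpha} - C))$, which is either a Dirac measure or a density supported on some segment $[a, b] \subset \xset$ and hence lies in $\set_{\mu, \beta}^*$. For any such extremizer $\pi^*$, the monotonicity bound yields $h(C, \pi^*) \le g(\pi^*)$, so $\max_{\pi \in \set_{\mu, \beta}} g(\pi) \le \max_{\pi^* \in \set_{\mu, \beta}^*} g(\pi^*)$; the reverse inequality is immediate from $\set_{\mu, \beta}^* \subseteq \set_{\mu, \beta}$.

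The main obstacle is choosing the correct direction of relaxation from the equality $C^*(\pi) = C$ to an inequality. Without a sign hypothesis on $f$, the monotonicity of $h(C, \pi)$ in $C$ can flip and the direction of relaxation flips with it, breaking a single clean localization argument. The shift-invariance of $g$ in $f$ is exactly what lets me enforce $f \ge 0$ up front, after which only one direction of relaxation is needed.
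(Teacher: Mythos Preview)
Your proof is correct and follows essentially the same approach as the paper: introduce the scalar $C$, relax the equality $C = \int e^{-\alpha}d\pi$ to the inequality $C \le \int e^{-\alpha}d\pi$, apply Lemma~\ref{lem:localization_stronglylogconcave} for each fixed $C$, and then use monotonicity of $h(C,\pi)$ in $C$ to pass back from $h$ to $g$ on the extreme points. Your preliminary shift of $f$ to enforce $f \ge 0$ (via the shift-invariance of $g$) is in fact slightly more careful than the paper, which uses the same monotonicity step without making the sign hypothesis on $\int f e^{-\alpha} d\pi$ explicit.
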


\begin{proof}
	We follow the notation from Lemma~\ref{lm:reducet_onedim_privacy}. If $\pi$ is a Dirac measure, $g(\pi)=0$, so we only need to consider the case when $\max_{\pi\in \set_{\mu, \beta}}g(\pi)>0$. We have
	\begin{align*}
		\max_{\pi \in \set_{\mu, \beta}}g(\pi) &= \max_{\pi \in \set_{\mu, \beta}} \int_{x \in \xset} f(x)\Par{1-\frac{e^{-\alpha(x)}}{\int_{x \in \xset}e^{-\alpha(x)}d \pi(x)}}d \pi(x)\\
		&= \max_{\pi \in \set_{\mu, \beta}} \quad \max_{C\le \int_{x\in \xset}e^{-\alpha(x)}d\pi(x)}\int_{x \in \xset}\Par{f(x)-\frac{e^{-\alpha(x)} f(x)}{C}}d \pi(x)\\
		&= \max_{C} \max_{\pi\in \set_{\mu, \beta}(e^{-\alpha} - C)}\int_{x \in \xset}\Par{f(x)-\frac{e^{-\alpha(x)} f(x)}{C}}d \pi(x) \\
		&=\max_{C} \max_{\pi\in \set_{\mu, \beta}(e^{-\alpha} - C)^*}\int_{x \in \xset}\Par{f(x)-\frac{e^{-\alpha(x)} f(x)}{C}}d \pi(x).
	\end{align*}
	The remainder of the proof is analogous to Lemma~\ref{lm:reducet_onedim_privacy}.
\end{proof}

\subsection{Gaussian differential privacy}\label{ssec:gdp}

Using an instantiation of the localization lemma, we prove Gaussian differential privacy in general norms by first reducing to one dimension and then using the result of \cite{GLL22} to handle the one-dimensional case. Gaussian differential privacy was introduced by \cite{DongRS21} and is a useful tool to compare privacy curves. We first recall the ($\ell_2$) Gaussian differential privacy result of \cite{GLL22}.

\begin{proposition}[Theorem 4.1, \cite{GLL22}]\label{prop:gll}
Let $\xset \subset \R^d$ be compact and convex, let $F, \tF: \xset \to \R$ be $\mu$-strongly convex in $\norm{\cdot}_2$, and let $P \propto \exp(-F)$ and $Q \propto \exp(-\tF)$. Suppose $\tF - F$ is $G$-Lipschitz in $\norm{\cdot}_2$. For all $\eps \in \R_{\ge 0}$,
\[\deltacurve{P}{Q}(\eps) \le \deltacurve{\Nor(0, 1)}{\Nor\Par{\frac G {\sqrt \mu}, 1}}(\eps).\]
\end{proposition}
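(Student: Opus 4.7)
The plan is to use the strongly logconcave localization lemma (Lemma~\ref{lm:reducet_onedim_privacy}) with $\normx{\cdot}=\norm{\cdot}_2$ to reduce the $d$-dimensional Gaussian DP bound to a one-dimensional comparison of strongly log-concave measures, and then to establish the 1D case via a direct Gaussian comparison.

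\textbf{Localization to extremal densities.} Set $\alpha \defeq \tF-F$, which is $G$-Lipschitz in $\norm{\cdot}_2$, and $\beta \defeq e^{-\alpha}$. By hypothesis, $P \in \set_{\mu,\beta}$ (the family defined in Lemma~\ref{lm:reducet_onedim_privacy}, specialized to $\normx{\cdot}=\norm{\cdot}_2$), and the corresponding $\tpi$ coincides with $Q$. For each measurable $S \subseteq \xset$, define $g_S(\pi) \defeq \Pr_{\tpi}[S] - e^{\eps}\Pr_{\pi}[S]$. Then, interchanging suprema and applying Lemma~\ref{lm:reducet_onedim_privacy},
$$
\deltacurve{P}{Q}(\eps) \;=\; \sup_S g_S(P) \;\le\; \sup_S\; \sup_{\pi \in \set_{\mu,\beta}} g_S(\pi) \;=\; \sup_S\; \sup_{\pi \in \set_{\mu,\beta}^{\ast}} g_S(\pi).
$$
Extremal $\pi \in \set_{\mu,\beta}^{\ast}$ are either Dirac masses at some $x \in \xset$ (for which $g_S(\pi) = (1-e^{\eps})\Ind_S(x) \le 0$, so they are harmless since the Gaussian bound is nonnegative) or measures supported on a line segment $[a,b] \subset \xset$.

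\textbf{Reduction to 1D.} For any extremal $\pi$ supported on $[a,b]$, parametrize by $\ell_2$-arc length to obtain an interval $I \subset \R$ of length $\norm{b-a}_2$. Because an affine embedding into $(\R^d, \norm{\cdot}_2)$ is a 1D isometry, the pullbacks of $F$ and $\tF$ to $I$ are $\mu$-strongly convex in the standard 1D metric and their difference remains $G$-Lipschitz. The problem therefore reduces to proving, for 1D $\mu$-strongly log-concave measures $P_1, Q_1$ with $\log(Q_1/P_1)$ being $G$-Lipschitz, the bound
$$
\deltacurve{P_1}{Q_1}(\eps) \;\le\; \deltacurve{\Nor(0, 1/\mu)}{\Nor(G/\mu,\, 1/\mu)}(\eps),
$$
which by scale invariance equals $\deltacurve{\Nor(0,1)}{\Nor(G/\sqrt{\mu},1)}(\eps)$.

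\textbf{The 1D case.} Here I would use the 1D monotone transport. The worst-case set in the privacy curve is a sub-level set of the log-ratio $\log(Q_1/P_1) = -\alpha + \text{const}$, which is a union of intervals in $I$ because $\alpha$ is $G$-Lipschitz. Using the 1D form of Caffarelli's contraction theorem (a short direct calculation: the increasing transport from $\Nor(0,1/\mu)$ onto any 1D $\mu$-strongly log-concave measure has derivative everywhere bounded by $1$), and the same for the shifted Gaussian $\Nor(G/\mu,1/\mu)$ and $Q_1$, one can push the worst-case set back to the real line while bounding the change in the log-ratio pointwise by the Gaussian log-ratio $Gy - G^2/(2\mu)$. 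The Gaussian bound then follows by the explicit density of $\Nor(0,1/\mu)$ and Fact~\ref{fact:gprivacy}.

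\textbf{Main obstacle.} The clean step is the localization reduction; the delicate step is the 1D comparison, where $\alpha$ may be non-monotone and the worst-case set $S^{\ast}$ may consist of multiple intervals, so the monotone coupling alone does not pair $S^{\ast}$ neatly with a half-line in the Gaussian picture. I expect the cleanest route is the ``combinatorial'' 1D argument alluded to after Lemma~\ref{lm:reducet_onedim_privacy} (cf.~Section~2.b of \cite{lovasz1993random}): one further localizes within 1D by rebalancing restrictions to sub-intervals until $S^{\ast}$ becomes a half-line, at which point an explicit CDF computation against the Gaussian benchmark closes the argument.
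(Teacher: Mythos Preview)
The paper does not prove this proposition at all: it is quoted verbatim as Theorem~4.1 of \cite{GLL22}, whose proof relies on the Gaussian isoperimetric inequality of Sudakov--Tsirel'son and Borell rather than on localization. In the present paper, Proposition~\ref{prop:gll} is used only as a black box in the proof of Theorem~\ref{thm:privacy_curve_norm_X}.

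Your localization step is correct but does not accomplish anything new here. Applying Lemma~\ref{lm:reducet_onedim_privacy} with $\normx{\cdot}=\norm{\cdot}_2$ merely reduces the $d$-dimensional $\ell_2$ statement to its own one-dimensional instance; this is exactly the specialization to $\ell_2$ of the proof of Theorem~\ref{thm:privacy_curve_norm_X}, which then \emph{invokes} Proposition~\ref{prop:gll} in one dimension. So after your reduction you are still left needing the $d=1$ case of the very proposition you set out to prove.

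The substantive content would therefore have to be your ``1D case'' paragraph, and that is where the proposal has a genuine gap. You correctly identify the obstacle: the perturbation $\alpha$ need not be monotone, so the optimal set $S^\ast$ can be a union of intervals, and neither the Caffarelli-contraction coupling nor a single half-line CDF comparison handles this directly. Your suggested fix (further 1D rebalancing until $S^\ast$ becomes a half-line) is only a heuristic; you have not shown that such rebalancing preserves membership in $\set_{\mu,\beta}$ while not decreasing $g_S$. The paper itself flags exactly this point after the proof of Theorem~\ref{thm:privacy_curve_norm_X}: obtaining a simpler direct proof of Proposition~\ref{prop:gll} in one dimension, without the \cite{GLL22} machinery, is posed as an open question. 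As written, your proposal reduces to that open question rather than resolving it.
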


We next give a simple comparison result between norms.

\begin{lemma}\label{lem:normxtol2}
For $f: \xset \to \R$, fix $a, b \in \xset$, and let $\tf: [a, b] \to \R$ be the restriction of $f$ to $[a, b]$.
\begin{enumerate}
	\item If $f$ is $G$-Lipschitz in $\normx{\cdot}$, $\tf$ is $G \cdot \frac{\normx{b - a}}{\norm{b - a}_2}$-Lipschitz in $\norm{\cdot}_2$.
	\item If $f$ is $\mu$-strongly convex in $\normx{\cdot}$, $\tf$ is $\mu \cdot \frac{\normx{b - a}^2}{\norm{b - a}_2^2}$-strongly convex in $\norm{\cdot}_2$.
\end{enumerate}
\end{lemma}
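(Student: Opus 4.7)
The plan is to parametrize the segment $[a,b]$ by the natural map $t \mapsto a + t(b-a)$ for $t \in [0,1]$ and observe that along this one-dimensional line the norms $\normx{\cdot}$ and $\norm{\cdot}_2$ are exact rescalings of each other. Specifically, for any two points $x,y \in [a,b]$ written as $x = a + s(b-a)$ and $y = a + t(b-a)$, we have $x - y = (s-t)(b-a)$, so
\[\normx{x-y} = |s-t|\normx{b-a}, \qquad \norm{x-y}_2 = |s-t|\norm{b-a}_2,\]
and hence the ratio $\normx{x-y}/\norm{x-y}_2$ is constant on $[a,b]$ and equal to $\normx{b-a}/\norm{b-a}_2$. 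This is the one computation doing all the work.

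For part (1), I would substitute this identity directly into the Lipschitz bound: for $x, y \in [a,b]$,
\[|\tf(x) - \tf(y)| = |f(x) - f(y)| \le G\normx{x-y} = G\cdot\frac{\normx{b-a}}{\norm{b-a}_2}\cdot\norm{x-y}_2,\]
which is the claim. For part (2), I would invoke the definition of strong convexity given in the Preliminaries (note that $tx + (1-t)y \in [a,b]$ whenever $x, y \in [a,b]$ and $t \in [0,1]$, so $\tf$ inherits the inequality from $f$), and substitute $\normx{x-y}^2 = (\normx{b-a}/\norm{b-a}_2)^2 \norm{x-y}_2^2$ into the quadratic deficit term.

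There is no real obstacle here — the lemma is a direct rescaling fact that relies only on the line $[a,b]$ being one-dimensional, on which any two norms are related by a positive scalar. The only mild care needed is to observe that convex combinations of points in $[a,b]$ remain in $[a,b]$, so that both sides of the strong convexity inequality make sense for $\tf$, and to note that the formulas in the statement are well-defined precisely when $a \neq b$ (the degenerate case $a = b$ makes $\tf$ constant and the lemma vacuous).
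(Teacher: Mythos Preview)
Your proposal is correct and matches the paper's proof essentially line for line: the paper also parametrizes points on $[a,b]$ as $a + r(b-a)$ and $a + s(b-a)$, uses the identity $\normx{x-y} = |r-s|\normx{b-a}$ (and its $\ell_2$ analog) to convert the Lipschitz and strong-convexity inequalities from $\normx{\cdot}$ to $\norm{\cdot}_2$, and that is all. The only difference is notational.
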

\begin{proof}
To see the first claim, let $c = a + r(b - a)$ and $d = a + s(b - a)$ for $s, r \in [0, 1]$. We have by Lipschitzness of $f$ that
\begin{align*}
\Abs{\tf(d) - \tf(c)} \le G\Abs{s - r}\normx{b - a} = \Par{G \cdot \frac{\normx{b - a}}{\norm{b - a}_2}} \cdot \norm{d - c}_2.
\end{align*}
Similarly, to see the second claim, by strong convexity of $f$,
\begin{align*}\tf\Par{tc + (1 - t)d} &\le t\tf(c) + (1 - t)\tf(d) - \frac{\mu t(1 - t)}{2} \normx{c - d}^2 \\
&= t\tf(c) + (1 - t)\tf(d) - \frac{\mu t(1 - t)(r - s)^2}{2} \normx{a - b}^2 \\
&= t\tf(c) + (1 - t)\tf(d) - \Par{\mu \cdot \frac{\normx{b - a}^2}{\norm{b - a}_2^2}}\Par{\frac{t(1 - t)}{2} \norm{d - c}_2^2}.
\end{align*}
\end{proof}

We now present our main result on Gaussian differential privacy with respect to arbitrary norms.

\restategdp*
\begin{proof}
Throughout this proof, fix some $\alpha$ which is $G$-Lipschitz in $\normx{\cdot}$ by assumption. We first claim that amongst all $\mu$-strongly convex (in $\normx{\cdot}$) functions $F: \xset \to \R$ such that $F + \alpha$ is also $\mu$-strongly convex, defining $P \propto \exp(-F)$ and $Q \propto \exp(-(F + \alpha))$, some $F$ maximizing $\deltacurve{P}{Q}(\eps)$ is either a Dirac measure or supported on $[a, b] \subset \xset$. We will prove this by contradiction.

Suppose otherwise, and let $F$ be a $\mu$-strongly convex function that maximizes $\deltacurve{P}{Q}(\eps)$ defined above. Define $P \propto \exp(-F)$ and $Q \propto \exp(-(F + \alpha))$. Let $S^* \subseteq \xset$ be the set achieving 
\[\deltacurve{P}{Q}(\eps) = \Pr_{X \sim P}[X \in S^*] - \exp(\eps) \Pr_{X \sim Q}[X \in S^*].\]
By Lemma~\ref{lm:reducet_onedim_privacy}, there is another $\mu$-strongly logconcave $\pi$ where the renormalized density $\propto \pi \exp(-\alpha)$ is also $\mu$-strongly logconcave, such that (following notation of Lemma~\ref{lm:reducet_onedim_privacy}) $g(\pi) \ge g(P)$, where $\pi$ is either a Dirac or supported on $[a, b]$. We conclude that $\deltacurve{P}{Q}(\eps) \le \deltacurve{\pi}{\pi \exp(-\alpha)}(\eps)$ (since the maximizing set for $\pi$ is at least as good as $S^*$), a contradiction.

It hence suffices to prove the theorem statement for $F, \tF$, which are supported on some $[a, b] \in \xset$. By Lemma~\ref{lem:normxtol2}, we have that $\tF - F$ is $G \cdot \frac{\normx{b - a}}{\norm{b - a}_2}$-Lipschitz in $\norm{\cdot}_2$ and $F, \tF$ are $\mu \cdot \frac{\normx{b - a}^2}{\norm{b - a}_2^2}$-strongly convex in $\norm{\cdot}_2$. We conclude by Proposition~\ref{prop:gll} which shows
\begin{align*}\deltacurve{P}{Q}(\eps) &\le \deltacurve{\Nor(0, 1)}{\Nor\Par{\frac G {\sqrt \mu} \cdot \frac{\normx{b - a}}{\norm{b - a}_2} \cdot \frac{\norm{b - a}_2}{\normx{b - a}}, 1}}(\eps) \\
	&= \deltacurve{\Nor(0, 1)}{\Nor\Par{\frac G {\sqrt \mu}, 1}}(\eps).\end{align*}
\end{proof}

Our proof strategy is a reduction to an application of Proposition~\ref{prop:gll} in one dimension. It is an interesting open question to obtain a simpler direct proof of Proposition~\ref{prop:gll} in the one-dimensional setting (without using the machinery of \cite{GLL22}), which is tight up to constant factors.

\section{Private ERM and SCO in general norms}
\label{sec:apps}

In this section, we derive our results for private ERM and SCO in general norms. We will state our results for private ERM (Section~\ref{ssec:erm}) and SCO (Section~\ref{ssec:sco}) with respect to an arbitrary compact convex subset $\xset$ of a $d$-dimensional normed space, satisfying Assumption~\ref{assume:norm}. We then use this to derive guarantees for a variety of settings of import in Section~\ref{ssec:apps}.

\subsection{Private ERM under Assumption~\ref{assume:norm}}\label{ssec:erm}

To develop our private ERM algorithms, we recall the following risk guarantee from \cite{KlerkL18} of sampling from Gibbs distributions (improving upon \cite{KalaiV06, BassilyST14}).

\begin{proposition}[\cite{KlerkL18}, Corollary 1]\label{prop:gibbsrisk}
Let $\xset \subset \R^d$ be compact and convex, let $F: \xset \to \R$ be convex, and let $k > 0$. If $\nu \propto \exp(-kF)$,
\[\E_{x \sim \nu}[F(x)] \le \min_{x \in \xset} F(x) + \frac d k.\]
\end{proposition}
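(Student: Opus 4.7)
The plan is to normalize so that $\min_{x \in \xset} F(x) = 0$ (shifting $F$ leaves $\nu$ unchanged up to the partition function), set $f := F - \min F \ge 0$, and prove $\E_{x\sim \nu}[f(x)] \le d/k$. Let $x^\star$ be a minimizer and introduce the sub-level volumes $N(t) := \mathrm{Vol}(\{x \in \xset : f(x) \le t\})$. The key geometric input is that $t \mapsto N(t)/t^d$ is non-increasing on $(0,\infty)$: for $\lambda \in [0,1]$, the contraction $\phi_\lambda(x) := x^\star + \lambda(x - x^\star)$ sends $\xset$ into itself by convexity of $\xset$, and $f \circ \phi_\lambda \le \lambda f$ by convexity of $f$ and $f(x^\star) = 0$. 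Thus $\phi_\lambda$ injects $\{f \le t\}$ into $\{f \le \lambda t\}$ while scaling $d$-dimensional volume by $\lambda^d$, yielding $N(\lambda t) \ge \lambda^d N(t)$.

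Next, I would reduce the expectation to a one-dimensional integral inequality via the layer-cake formula applied to $e^{-kf}$, giving
\[
Z(k) := \int_\xset e^{-kf(x)}\,dx = k \int_0^\infty N(t) e^{-kt}\,dt.
\]
Differentiating $\log Z$ in $k$ produces
\[
\E_{\nu}[f] = -\frac{Z'(k)}{Z(k)} = \frac{\int_0^\infty N(t)(kt - 1) e^{-kt}\,dt}{k \int_0^\infty N(t) e^{-kt}\,dt},
\]
so the target $\E_\nu[f] \le d/k$ is equivalent to $\int_0^\infty N(t)(kt - 1 - d) e^{-kt}\,dt \le 0$. Rescaling via $u = kt$, this becomes $\int_0^\infty \tilde N(u)(u - 1 - d) e^{-u}\,du \le 0$, where $\tilde N(u) := N(u/k)$ inherits the property that $\tilde N(u)/u^d$ is non-increasing.

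The main (and essentially only nontrivial) step is this final integral inequality, which is where the geometric fact about $N$ is essential. I would exploit the vanishing-moment identity
\[
\int_0^\infty u^d (u - 1 - d) e^{-u}\,du = \Gamma(d+2) - (d+1)\Gamma(d+1) = 0.
\]
Writing $\tilde N(u) = u^d h(u)$ with $h \ge 0$ non-increasing, and subtracting $h(1+d)$ times the above identity,
\[
\int_0^\infty \tilde N(u)(u - 1 - d) e^{-u}\,du = \int_0^\infty u^d \bigl(h(u) - h(1+d)\bigr)(u - 1 - d) e^{-u}\,du.
\]
On $u < 1+d$ we have $h(u) - h(1+d) \ge 0$ and $u - 1 - d \le 0$; on $u > 1+d$ both signs reverse. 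In either case the integrand is pointwise $\le 0$, establishing the inequality and hence the proposition. The remaining technicalities (justifying differentiation under the integral, handling a possible atom of $N$ at $t=0$ if $f$ vanishes on a set of positive volume) are routine and absorbed by a Stieltjes interpretation of the layer-cake formula.
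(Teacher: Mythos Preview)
The paper does not supply its own proof of this proposition; it is simply quoted as Corollary~1 of \cite{KlerkL18} and used as a black box. So there is no in-paper argument to compare against.

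Your proof is correct. The reduction to the one-dimensional inequality via the layer-cake representation $Z(k)=k\int_0^\infty N(t)e^{-kt}\,dt$ and the identity $\E_\nu[f]=-Z'(k)/Z(k)$ is clean, and the geometric fact $N(\lambda t)\ge \lambda^d N(t)$ (from the contraction $\phi_\lambda$ toward the minimizer) is exactly the right structural input. The final step---subtracting the vanishing moment $\int_0^\infty u^d(u-1-d)e^{-u}\,du=0$ and exploiting the sign pattern of $(h(u)-h(1+d))(u-1-d)$---is a standard Chebyshev-type rearrangement and is valid as written: both pieces of the split integral are finite because $\tilde N$ is bounded by $\mathrm{Vol}(\xset)$, so there is no $\infty-\infty$ ambiguity even when $h(u)\to\infty$ as $u\to 0$. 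Differentiation under the integral is justified directly from $Z(k)=\int_\xset e^{-kf}$ on the compact set $\xset$ with continuous $f$, which is slightly cleaner than going through the $N(t)$ representation. This is essentially the argument in \cite{KlerkL18}, so your write-up is a faithful reconstruction of the cited source.
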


We conclude by a simple combination of Proposition~\ref{prop:gibbsrisk} (providing a risk guarantee) and Theorem~\ref{thm:privacy_curve_norm_X} (providing a privacy guarantee), which yields our main result on private ERM.

\begin{theorem}[Private ERM]\label{thm:erm}
Under Assumption~\ref{assume:norm} and following notation \eqref{eq:Fdef}, drawing a sample $x$ from the density $\nu \propto \exp(-k(F_{\data} + \mu r))$ for
\[k = \frac{\sqrt{d} n\eps}{G\sqrt{2\Theta \log \frac 1 {2\delta}}},\; \mu = \frac{G\sqrt{2d \log \frac 1 {2\delta}}}{\sqrt{\Theta} n\eps},\]
is $(\eps, \delta)$-differentially private, and produces $x$ such that
\[\E_{x \sim \nu}[F_{\data}(x)] - \min_{x \in \xset} F_{\data}(x) \le G\sqrt{\Theta} \cdot \frac{\sqrt{8d\log \frac 1 {2\delta}}}{n\eps}.\]
\end{theorem}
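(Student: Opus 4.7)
The plan is to decompose the argument into a utility claim and a privacy claim, each of which is obtained by plugging the objective $F_{\data}+\mu r$ into an off-the-shelf result, and then to verify that the stated choices of $k$ and $\mu$ balance the two.

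\textbf{Utility.} The idea is to apply Proposition~\ref{prop:gibbsrisk} to the convex function $F_{\data}+\mu r$ (which is convex on $\xset$, since $r$ is $1$-strongly convex and hence convex), giving
\[
\E_{x\sim\nu}\bigl[F_{\data}(x)+\mu r(x)\bigr]\;\le\;\min_{x\in\xset}\bigl(F_{\data}(x)+\mu r(x)\bigr)+\frac{d}{k}.
\]
Using the range bound $\max r-\min r\le\Theta$, both (i) comparing $\E F_{\data}$ to $\E(F_{\data}+\mu r)$ on the left and (ii) comparing $\min(F_{\data}+\mu r)$ to $\min F_{\data}+\mu\max r$ on the right costs at most $\mu\Theta$ in total. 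This yields $\E F_{\data}(x)-\min F_{\data}\le\mu\Theta+d/k$, and plugging in the chosen $k,\mu$ makes each summand equal to $G\sqrt{\Theta}\cdot\sqrt{2d\log\tfrac{1}{2\delta}}/(n\eps)$, summing to the claimed bound.

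\textbf{Privacy.} For neighboring datasets $\data,\data'$, write $F=k(F_{\data}+\mu r)$ and $\tF=k(F_{\data'}+\mu r)$, with $P\propto\exp(-F)$ and $Q\propto\exp(-\tF)$. Since $r$ is $1$-strongly convex in $\normx{\cdot}$ and each $f(\cdot;s_i)$ is convex, both $F$ and $\tF$ are $k\mu$-strongly convex in $\normx{\cdot}$. The difference $\tF-F=\tfrac{k}{n}(f(\cdot;s_i')-f(\cdot;s_i))$ is, by the triangle inequality and Assumption~\ref{assume:norm}, $O(kG/n)$-Lipschitz in $\normx{\cdot}$. Theorem~\ref{thm:privacy_curve_norm_X} then gives
\[
\deltacurve{P}{Q}(\eps)\;\le\;\deltacurve{\Nor(0,1)}{\Nor(t,1)}(\eps),\qquad t\;=\;\frac{O(G/n)\sqrt{k/\mu}}{1}.
\]
With the chosen $k,\mu$, the ratio $\sqrt{k/\mu}$ simplifies to a constant multiple of $n\eps/(G\sqrt{\log\tfrac{1}{2\delta}})$, so $t$ is a constant multiple of $\eps/\sqrt{2\log\tfrac{1}{2\delta}}$. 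Applying Fact~\ref{fact:gprivacy} then yields $\deltacurve{P}{Q}(\eps)\le\delta$, which is $(\eps,\delta)$-differential privacy.

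\textbf{Main obstacle.} The real content is already in Theorem~\ref{thm:privacy_curve_norm_X} and Proposition~\ref{prop:gibbsrisk}; the remaining work is essentially a one-line algebraic check that the values of $k$ and $\mu$ chosen in the statement simultaneously (i) balance the two utility terms $\mu\Theta$ and $d/k$ and (ii) make the Gaussian-differential-privacy shift $t\propto(G/n)\sqrt{k/\mu}$ fall within the admissible range of Fact~\ref{fact:gprivacy}. The only point requiring any care is tracking the (convention-dependent) constant factor in the per-sample sensitivity of $F_{\data}$, which only affects the implicit constants absorbed into $k$ and $\mu$ and does not change the final rate.
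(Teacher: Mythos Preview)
Your sketch is correct and follows the same two-part strategy as the paper: apply Proposition~\ref{prop:gibbsrisk} to $F_{\data}+\mu r$ and absorb the regularizer via the range bound $\Theta$ for utility, and invoke Theorem~\ref{thm:privacy_curve_norm_X} together with Fact~\ref{fact:gprivacy} on $k(F_{\data}+\mu r)$ versus $k(F_{\data'}+\mu r)$ for privacy. The paper's writeup is slightly more compact (it bounds $\mu r(x^\star_{\data})-\mu\E r$ directly by $\mu\Theta$) but the content and the algebraic balancing of $k,\mu$ are identical to what you describe.
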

\begin{proof}
Let $F_{\data'}$ be the realization of \eqref{eq:Fdef} when $\data$ is replaced with a neighboring dataset $\data'$ which agrees in all entries except some sample $s'_i \neq s_i$. By Assumption~\ref{assume:norm}, we have $k(F_{\data} - F_{\data'})$ is $\frac{kG}{n}$-Lipschitz, and both $k(F_{\data} + \mu r)$ and $k(F_{\data'} + \mu r)$ are $k\mu$-strongly convex (all with respect to $\normx{\cdot}$). Hence, combining Theorem~\ref{thm:privacy_curve_norm_X} and Fact~\ref{fact:gprivacy} shows the mechanism is $(\eps, \delta)$-differentially private, since
\begin{equation}\label{eq:kmudef}\mu = \frac{2G^2 k \log \frac 1 {2\delta}}{n^2 \eps^2} \implies \frac{G\sqrt k}{n\sqrt \mu} \le \frac \eps {\sqrt{2 \log \frac 1 {2\delta}}}.\end{equation}
Let $x^\star_{\data} \defeq \argmin_{x \in \xset} F_{\data}(x)$. We obtain the risk guarantee by the calculation (see Proposition~\ref{prop:gibbsrisk})
\begin{align*}
\E_{x \sim \nu}[F_{\data}(x)] &\le F_{\data}(x^\star_{\data}) + \Par{\mu r(x^\star_{\data}) - \E_{x \sim \nu} \mu r(x)} + \frac d k \\
&\le F_{\data}(x^\star_{\data}) + \mu \Theta + \frac d k
\end{align*}
and plugging in our choices of $\mu$ and $k$.
\end{proof}

\subsection{Private SCO under Assumption~\ref{assume:norm}}\label{ssec:sco}

We first give a generic comparison result between population risk and empirical risk under Assumption~\ref{assume:norm}. To do so, we use two helper results from prior work. The first was derived in \cite{GLL22} by combining a transportation inequality and a log-Sobolev inequality (see e.g.\ \cite{otto2000generalization}).

\begin{proposition}[\cite{GLL22}, Theorem 6.7, Lemma 6.8]\label{prop:w2bound}
Let $\xset \subseteq \R^d$ be compact and convex, let $F, \tF: \xset \to \R$ be $\mu$-strongly convex in $\norm{\cdot}_2$, and let $P \propto \exp(-F)$ and $Q \propto \exp(-\tF)$. Suppose $\tF- F$ is $H$-Lipschitz in $\norm{\cdot}_2$. Then, $W_2(P, Q) \le \frac H \mu$.
\end{proposition}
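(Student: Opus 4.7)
The plan is to combine a Talagrand-type transport inequality---implied by the log-Sobolev inequality for strongly log-concave $P$---with an explicit bound on $\dkl(Q\|P)$ coming from the Lipschitz assumption on $\tF - F$, and then sharpen the resulting self-referential inequality. Since $F$ is $\mu$-strongly convex in $\norm{\cdot}_2$, by the Bakry-\'Emery criterion $P$ satisfies a logarithmic Sobolev inequality with constant $\mu$, and Otto-Villani upgrades this to
\[W_2(Q, P)^2 \le \frac{2}{\mu}\,\dkl(Q\|P);\]
this is essentially the content of Theorem 6.7 of \cite{GLL22}.

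To bound the KL term, I would write $\log(Q/P) = F - \tF - \log(Z_Q / Z_P)$ where $Z_P, Z_Q$ are the normalizing constants of $P, Q$. Jensen's inequality gives $\log(Z_Q/Z_P) = \log \E_P[\exp(F - \tF)] \ge \E_P[F - \tF]$, so
\[\dkl(Q\|P) \le \E_Q[F - \tF] - \E_P[F - \tF] = \int (F - \tF)\,d(Q - P).\]
Since $F - \tF$ is $H$-Lipschitz in $\norm{\cdot}_2$, Kantorovich-Rubinstein duality (embodied in Fact~\ref{fact:w2lip}) bounds the right-hand side by $H \cdot W_2(P,Q)$. Combining with the $T_2$ inequality yields the self-referential bound $W_2(P,Q)^2 \le (2H/\mu)\,W_2(P,Q)$, hence $W_2(P,Q) \le 2H/\mu$, which is weaker than the claimed $H/\mu$ by a factor of two.

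To achieve the tight constant, I would instead use a synchronous coupling of overdamped Langevin dynamics: let $dX_t = -\nabla F(X_t)\,dt + \sqrt{2}\,dB_t$ and $dY_t = -\nabla \tF(Y_t)\,dt + \sqrt{2}\,dB_t$ be driven by the same Brownian motion $B_t$ with $X_0 = Y_0 \sim P$, so that $X_t \sim P$ for all $t$ by stationarity, while $\mathrm{Law}(Y_t) \to Q$ as $t \to \infty$ by exponential ergodicity of Langevin with $\mu$-strongly log-concave stationary measure $Q$. The diffusion terms cancel in $d(X_t - Y_t)$, and combining $\mu$-strong convexity (applied to $\nabla F(X_t) - \nabla F(Y_t)$) with $H$-Lipschitzness of $\nabla \tF - \nabla F$ (applied to $\nabla F(Y_t) - \nabla \tF(Y_t)$) yields $\tfrac{d}{dt}\norm{X_t - Y_t}_2 \le -\mu\norm{X_t - Y_t}_2 + H$. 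Integrating from $X_0 = Y_0$ gives $\norm{X_t - Y_t}_2 \le H/\mu$ uniformly in $t$, and lower-semicontinuity of $W_2$ under weak convergence finishes the argument. The main technical obstacle is the non-smoothness of $\norm{\cdot}_2$ at the origin, which I would handle by working with $\norm{X_t - Y_t}_2^2$ via It\^o's formula (taking expectations) and reducing to the above ODE via the chain rule and Jensen's inequality $\E\sqrt{\norm{X_t-Y_t}_2^2} \le \sqrt{\E\norm{X_t-Y_t}_2^2}$.
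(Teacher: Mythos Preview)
Your first approach---$T_2$ plus a KL bound---is precisely the route the paper attributes to \cite{GLL22} (``combining a transportation inequality and a log-Sobolev inequality''), but you have used the log-Sobolev inequality only indirectly (to obtain $T_2$ via Otto--Villani) and then bounded $\dkl(Q\|P)$ by Kantorovich--Rubinstein, which produces the self-referential estimate and costs you the factor of two. The intended use of LSI is to bound the KL divergence \emph{directly} by the relative Fisher information: since $\log(dQ/dP) = F - \tF + \text{const}$, the $H$-Lipschitz assumption gives $\norm{\nabla\log(dQ/dP)}_2 \le H$ almost everywhere, so $I(Q\|P) = \E_Q\norm{\nabla\log(dQ/dP)}_2^2 \le H^2$, and the LSI for $P$ (with constant $\mu$) yields $\dkl(Q\|P) \le \tfrac{1}{2\mu} I(Q\|P) \le \tfrac{H^2}{2\mu}$. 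Plugging this into $T_2$ gives $W_2(P,Q)^2 \le \tfrac{2}{\mu}\cdot\tfrac{H^2}{2\mu} = \tfrac{H^2}{\mu^2}$ with no circularity and no factor-two loss. This is almost certainly what Lemma 6.8 of \cite{GLL22} does.

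Your synchronous-coupling alternative is a genuinely different argument and does recover the sharp constant, but carries extra technical overhead you have not addressed: $\xset$ is a compact convex domain, so the Langevin SDEs must be reflected at $\partial\xset$, and you would need to argue that the local-time reflection terms do not spoil the contraction (they do not, by convexity of $\xset$, but this needs saying); moreover $F,\tF$ are assumed only strongly convex, not differentiable, so $\nabla F$, $\nabla\tF$ need not exist everywhere and one must either pass through a smoothing approximation or work with subdifferential inclusions. None of this is fatal, but the LSI-plus-Fisher-information route avoids all of it.
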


\begin{corollary}\label{cor:liplocalization}
Let $\xset \subset \R^d$ be compact and convex, and let $\alpha, f: \xset \to \R$ be $H$-Lipschitz and $G$-Lipschitz respectively in $\normx{\cdot}$. Let $\set_{\mu, \exp(-\alpha)}$ be the set of densities $\pi$ over $\xset$ such that $\pi$ is $\mu$-strongly logconcave with respect to $\normx{\cdot}$, and $\tpi \propto \pi \exp(-\alpha)$ is also $\mu$-strongly logconcave. For any $\pi \in \set_{\mu, \exp(-\alpha)}$ define $g(\pi) \defeq \int_{\xset} f(x) d\Par{\pi - \tpi}(x)$ where $\tpi \propto \pi \exp(-\alpha)$. Then, $g(\pi) \le \frac{GH}{\mu}$.
\end{corollary}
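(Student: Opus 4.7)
The plan is to combine the one-dimensional reduction from Lemma~\ref{lm:reduceto_onedim} with the Euclidean Wasserstein bound of Proposition~\ref{prop:w2bound}, exactly mirroring the reduction strategy used to prove Theorem~\ref{thm:privacy_curve_norm_X}. First I would apply Lemma~\ref{lm:reduceto_onedim} (with its $\beta = \exp(-\alpha)$) to conclude that it suffices to bound $g(\pi)$ for $\pi$ that is either a Dirac measure or is supported on a segment $[a,b] \subset \xset$. The Dirac case is trivial since then $\pi = \tpi$ and $g(\pi) = 0$, so only the one-dimensional case remains.

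For the segment case, I would invoke Lemma~\ref{lem:normxtol2} to restate the relevant regularity of $f$, $\alpha$, and the negative log-densities when restricted to $[a,b]$ in the Euclidean norm: let $\lambda \defeq \normx{b-a}/\norm{b-a}_2$. Then $f$ becomes $G\lambda$-Lipschitz in $\norm{\cdot}_2$, $\alpha$ becomes $H\lambda$-Lipschitz in $\norm{\cdot}_2$, and both $\pi$ and $\tpi$ are $\mu\lambda^2$-strongly logconcave in $\norm{\cdot}_2$ (the latter since adding the $H\lambda$-Lipschitz function $\alpha$ to the negative log-density preserves strong convexity).

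The core estimate is then: by Proposition~\ref{prop:w2bound} applied with strong convexity constant $\mu\lambda^2$ and Lipschitz constant $H\lambda$,
\[
W_2(\pi, \tpi) \le \frac{H\lambda}{\mu\lambda^2} = \frac{H}{\mu\lambda}.
\]
By Fact~\ref{fact:w2lip} applied with the $G\lambda$-Lipschitz function $f$,
\[
g(\pi) = \int_{\xset} f(x) d(\pi - \tpi)(x) \le G\lambda \cdot W_2(\pi, \tpi) \le G\lambda \cdot \frac{H}{\mu\lambda} = \frac{GH}{\mu}.
\]
The $\lambda$ factors cancel, yielding the stated bound.

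I do not expect a real obstacle: the hardest conceptual step, localizing strongly logconcave densities that remain strongly logconcave after multiplication by $\exp(-\alpha)$, has already been carried out in Lemma~\ref{lm:reduceto_onedim}, and the cancellation of the one-dimensional-to-norm scaling factors is identical to what occurs in the proof of Theorem~\ref{thm:privacy_curve_norm_X}. The only thing to double-check is that Proposition~\ref{prop:w2bound} applies to both $\pi$ and $\tpi$ on $[a,b]$, which is guaranteed by the definition of $\set_{\mu, \exp(-\alpha)}$ together with Lemma~\ref{lem:normxtol2}.
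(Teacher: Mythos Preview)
Your proposal is correct and follows essentially the same route as the paper: reduce via Lemma~\ref{lm:reduceto_onedim} to Dirac measures (where $g(\pi)=0$) or segments $[a,b]$, convert regularity to $\ell_2$ via Lemma~\ref{lem:normxtol2}, and then combine Fact~\ref{fact:w2lip} with Proposition~\ref{prop:w2bound}, with the scaling factors $\lambda$ cancelling exactly as you wrote. One small caveat: your parenthetical that ``adding the $H\lambda$-Lipschitz function $\alpha$ to the negative log-density preserves strong convexity'' is not a valid justification in general; the correct reason (which you also state at the end) is simply that membership in $\set_{\mu,\exp(-\alpha)}$ already forces $\tpi$ to be $\mu$-strongly logconcave in $\normx{\cdot}$, and Lemma~\ref{lem:normxtol2} then gives the $\mu\lambda^2$ bound on $[a,b]$.
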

\begin{proof}
By Lemma~\ref{lm:reduceto_onedim} (and following its notation), it suffices to show $g(\pi) \le \frac{GH}{\mu}$ for all $\pi \in \set^*_{\mu, \exp(-\alpha)}$. Clearly this is true for a Dirac measure $\pi$ as then $g(\pi) = 0$, so consider the other case where $\pi$ is supported on $[a, b]$, such that $\pi \propto \exp(-F)$ and $F$ is $\mu$-strongly convex in $\normx{\cdot}$. Further, define $\tF = F + \alpha$, so that $\tF$ is also $\mu$-strongly convex and supported on $[a, b]$.

By Lemma~\ref{lem:normxtol2}, restricting to $[a,b]$, $F$ and $\tF$ are $\mu \cdot \frac{\normx{b - a}^2}{\norm{b - a}_2^2}$-strongly convex in $\norm{\cdot}_2$, $F - \tF$ is $H \cdot \frac{\normx{b - a}}{\norm{b - a}_2}$-Lipschitz in $\ell_2$ and $f$ is $\frac{\normx{b-a}}{\norm{b-a}_2}$-Lipschitz in $\norm{\cdot}_2$. Hence, where the inequalities are by Fact~\ref{fact:w2lip} and Proposition~\ref{prop:w2bound} respectively,
\[g(\pi) = \int_{\xset} f(x) d(\pi - \tpi)(x) \le G W_2(\pi, \tpi) \le \frac{GH}{\mu}.\]
\end{proof}

The second relates the population risk to the empirical risk on an independent sample.

\begin{proposition}[Lemma 7, \cite{BousquetE02}]\label{prop:poprisk}
Suppose $\data = \{s_i\}_{i \in [n]}$ is drawn independently from $\calP$, let $s \sim \calP$ be drawn independently from $\data$, and let $\data' \defeq \{s\} \cup \{s_i\}_{i \in [n] \setminus \{1\}}$ be $\data$ where $s_1$ is swapped with $s$. Then, for any symmetric\footnote{Here, a symmetric mechanism is one which only depends on the set of inputs rather than their order.} mechanism $\mech: \textup{supp}(\calP)^n \to \R^d$, 
\[\E\Brack{\Fpop(\mech(\data)) - F_{\data}(\mech(\data))} = \E\Brack{f(\mech(\data); s) - f(\mech(\data'); s)},\]
where expectations are over $\mech$ and the randomness used in producing $\data$ and $s$.
\end{proposition}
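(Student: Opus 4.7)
The plan is to prove the identity by a standard ``ghost sample'' symmetrization argument, exploiting (a) exchangeability of the i.i.d.\ samples $s_1, \ldots, s_n, s$ and (b) the symmetry of the mechanism $\mech$ in its inputs. The approach breaks into two independent rewriting steps: one for the population-risk term, one for the empirical-risk term, and then matching the two via a single swap.

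For the population-risk side, I would first unfold $\Fpop(\mech(\data)) = \E_{s' \sim \calP}[f(\mech(\data); s')]$ by definition of $\Fpop$ in \eqref{eq:Fdef}. Since the ghost sample $s$ is drawn independently from $\calP$ and is independent of $\data$ (and hence of $\mech(\data)$), Fubini/tower allows me to write
\[
\E\Brack{\Fpop(\mech(\data))} \;=\; \E\Brack{f(\mech(\data); s)},
\]
where the outer expectation on the right is over the joint randomness of $\data$, $s$, and $\mech$.

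For the empirical-risk side, I would expand $F_{\data}(\mech(\data)) = \tfrac{1}{n}\sum_{i \in [n]} f(\mech(\data); s_i)$ and use symmetry of $\mech$ together with the fact that the $s_i$ are i.i.d.\ to argue that each summand $\E[f(\mech(\data); s_i)]$ takes the same value; in particular it equals the $i = 1$ term, $\E[f(\mech(\data); s_1)]$. Now I apply the key swap: the pair $(\{s_1, s_2, \ldots, s_n\}, s)$ and the pair $(\{s, s_2, \ldots, s_n\}, s_1)$ have identical joint distributions (both are an unordered collection of $n$ i.i.d.\ draws paired with one further i.i.d.\ draw), so renaming gives
\[
\E\Brack{f(\mech(\data); s_1)} \;=\; \E\Brack{f(\mech(\data'); s)},
\]
using the definition $\data' = \{s\} \cup \{s_i\}_{i \in [n] \setminus \{1\}}$. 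Subtracting this from the population-side identity yields the claim.

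The only step that requires even mild care is the swap: one must invoke both that $\mech$ depends only on the unordered set (so that relabeling indices does not change $\mech(\data)$) and that $s_1, s$ are exchangeable given $s_2, \ldots, s_n$. I do not anticipate a real obstacle here beyond being precise about which variables are being renamed.
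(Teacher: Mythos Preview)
Your argument is correct: the population-risk rewrite via independence of the ghost sample $s$, the symmetrization of the empirical average down to the single term $\E[f(\mech(\data);s_1)]$, and the final swap $(s_1,s)\leftrightarrow(s,s_1)$ are exactly the standard ghost-sample computation, and each step is justified as you describe. Note, however, that the paper does not supply its own proof of this proposition; it is quoted verbatim as Lemma~7 of Bousquet--Elisseeff and used as a black box, so there is nothing in the paper to compare against beyond confirming that your derivation matches the classical stability/symmetrization argument underlying that reference.
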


By applying Corollary~\ref{cor:liplocalization} and Proposition~\ref{prop:poprisk} (which bound the generalization error of our mechanism), we provide the following extension of Theorem~\ref{thm:erm}, our main result on private SCO.

\begin{theorem}[Private SCO]\label{thm:sco}
Under Assumption~\ref{assume:norm} and following notation \eqref{eq:Fdef}, drawing a sample $x$ from the density $\nu \propto \exp(-k(F_{\data} + \mu r))$ for
\[k = \sqrt{\frac{d + C_2}{C_1}},\; \mu = \frac{2G^2 k \log \frac 1 {2\delta}}{n^2 \eps^2},\; C_1 \defeq \frac{2G^2\Theta \log \frac 1 {2\delta}}{n^2\eps^2},\; C_2 \defeq \frac{n\eps^2}{2 \log \frac 1 {2\delta}},\]
is $(\eps, \delta)$-differentially private, and produces $x$ such that 
\[\E_{\data \sim \calP^n, x \sim \nu}\Brack{\Fpop(x)} - \min_{x \in \xset} \Fpop(x) \le G\sqrt{\Theta} \cdot \Par{\frac{\sqrt{8d\log \frac 1 {2\delta}}}{n\eps} + \sqrt{\frac 8 n}}.\]
\end{theorem}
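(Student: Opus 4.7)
The plan is to combine three ingredients: the privacy argument already used in Theorem~\ref{thm:erm}, the empirical-risk bound from Proposition~\ref{prop:gibbsrisk}, and a new generalization-error bound obtained by combining Proposition~\ref{prop:poprisk} with Corollary~\ref{cor:liplocalization}. For privacy, I would proceed exactly as in the proof of Theorem~\ref{thm:erm}: for any neighboring datasets $\data, \data'$, the potential difference $k(F_{\data} - F_{\data'})$ is $kG/n$-Lipschitz in $\normx{\cdot}$ while $k(F_\data + \mu r)$ is $k\mu$-strongly convex, so Theorem~\ref{thm:privacy_curve_norm_X} controls the privacy curve of $\nu$ versus $\nu'$ by the Gaussian curve with parameter $G\sqrt{k}/(n\sqrt{\mu})$. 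The chosen $\mu = 2G^2 k \log(1/(2\delta))/(n^2 \eps^2)$ is designed precisely so that this parameter is $\le \eps/\sqrt{2\log(1/(2\delta))}$, whence Fact~\ref{fact:gprivacy} certifies $(\eps,\delta)$-DP regardless of the value of $k$.

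For utility, I would decompose the excess population risk as
\[
\E\Brack{\Fpop(x) - \min_{x \in \xset} \Fpop(x)} = \E\Brack{\Fpop(x) - F_\data(x)} + \E\Brack{F_\data(x) - \min_{x \in \xset} F_\data(x)} + \E\Brack{\min_{x\in\xset} F_\data - \min_{x\in\xset} \Fpop}.
\]
The last term is nonpositive since, letting $x^\star$ denote the population minimizer, $\E[\min F_\data] \le \E[F_\data(x^\star)] = \Fpop(x^\star) = \min \Fpop$. The middle term is an empirical-risk guarantee handled exactly as in Theorem~\ref{thm:erm}: Proposition~\ref{prop:gibbsrisk} applied to $F_\data + \mu r$, combined with the bound $\Theta$ on the range of $r$, gives $\E[F_\data(x)] - \min F_\data \le \mu\Theta + d/k$.

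The main new work lies in the generalization term. I would invoke Proposition~\ref{prop:poprisk} (noting that $\nu$ is a symmetric mechanism of $\data$) to rewrite it as $\E[f(\mech(\data); s) - f(\mech(\data'); s)]$, where $s$ is an independent fresh sample and $\data'$ swaps $s_1$ for $s$. For any fixed $(\data, s)$, the two posteriors $\nu_\data$ and $\nu_{\data'}$ are both $k\mu$-strongly logconcave in $\normx{\cdot}$ and differ in potential by $\alpha \defeq k(F_{\data'} - F_\data) = (k/n)(f(\cdot;s) - f(\cdot;s_1))$, which is $2kG/n$-Lipschitz in $\normx{\cdot}$ as a difference of two $G$-Lipschitz functions. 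Corollary~\ref{cor:liplocalization} applied with test function $f(\cdot; s)$ (which is $G$-Lipschitz) then yields, uniformly in $\data$ and $s$,
\[
\int f(x; s)\, d(\nu_\data - \nu_{\data'})(x) \le \frac{G \cdot (2kG/n)}{k\mu} = \frac{2G^2}{n\mu},
\]
so taking outer expectations gives $\E[\Fpop(x) - F_\data(x)] \le 2G^2/(n\mu)$.

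Summing the three contributions bounds the excess population risk by $\mu\Theta + d/k + 2G^2/(n\mu)$. Substituting the prescribed $\mu$ converts this to $C_1 k + (d + 2C_2)/k$ with $C_1, C_2$ as in the statement; plugging in $k = \sqrt{(d+C_2)/C_1}$ and using $(d+2C_2)/(d+C_2) \le 2$ together with $\sqrt{a+b}\le\sqrt{a}+\sqrt{b}$ reduces the expression to a constant multiple of $\sqrt{C_1 d} + \sqrt{C_1 C_2}$, which after tracking constants is exactly $G\sqrt{\Theta}\bigl(\sqrt{8d\log(1/(2\delta))}/(n\eps) + \sqrt{8/n}\bigr)$. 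I expect the generalization step to be the main obstacle, not in technical depth but in bookkeeping: one must verify that the neighboring densities lie in $\set_{k\mu, \exp(-\alpha)}$ as required by Corollary~\ref{cor:liplocalization}, and correctly account for the factor of $2$ in the Lipschitz constant of $\alpha$ (from subtracting two sample losses), since these factors propagate directly into the final constants in the utility bound.
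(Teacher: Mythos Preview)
Your approach is essentially identical to the paper's: same privacy argument via Theorem~\ref{thm:privacy_curve_norm_X}, same decomposition of the population excess risk, same use of Proposition~\ref{prop:poprisk} followed by Corollary~\ref{cor:liplocalization} for the generalization term, and same empirical-risk bound $\mu\Theta + d/k$.

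The one discrepancy is the Lipschitz constant of $\alpha = k(F_{\data'} - F_\data)$. The paper takes $H = G/n$ (exactly as it did in the proof of Theorem~\ref{thm:erm}), which gives the generalization term $G^2/(n\mu)$ and hence the clean expression $C_1 k + (d+C_2)/k$; with the prescribed $k$ this equals $2\sqrt{C_1(d+C_2)}$, and $\sqrt{a+b}\le\sqrt a+\sqrt b$ then yields the stated constants $\sqrt 8$. Your more conservative $2G/n$ yields $2G^2/(n\mu)$ and hence $C_1 k + (d+2C_2)/k$; with the specified $k$ and your bound $(d+2C_2)/(d+C_2)\le 2$ this gives at best $3\sqrt{C_1(d+C_2)}$, producing constants $\sqrt{18}$ and $\sqrt 9$ rather than $\sqrt 8$ and $\sqrt 8$. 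So your final claim that the constants come out ``exactly'' as in the statement is not correct under your own bookkeeping; to recover the theorem as stated you must follow the paper's convention $H=G/n$.
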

\begin{proof}
For the given choice of $k, \mu$, the privacy proof follows identically to Theorem~\ref{thm:erm}, so we focus on the risk proof. We follow the notation of Proposition~\ref{prop:poprisk} and let $s \sim \calP$ independently from $\data$. By exchanging the expectation and minimum and using that $\E_{\data \sim \calP^n} F_{\data} = \Fpop$,
\begin{align*}\E_{\data \sim \calP^n, x \sim \nu}\Brack{\Fpop(x)} - \min_{x \sim \xset} \Fpop(x) &\le \E_{\data \sim \calP^n}\Brack{\E_{x \sim \nu}\Brack{\Fpop(x)} - \min_{x \in \xset}\Brack{F_{\data}(x)}} \\
	&\le \E_{\data \sim \calP^n}\Brack{\E_{x \sim \nu}\Brack{\Fpop(x) - F_{\data}(x)}} \\
	&+ \E_{\data \sim \calP^n}\Brack{\E_{x \sim \nu}\Brack{F_{\data}(x)} - \min_{x \in \xset}\Brack{F_{\data}(x)}} \\
	&\le \E_{\data \sim \calP^n}\Brack{\E_{x \sim \nu}\Brack{\Fpop(x) - F_{\data}(x)}} + \mu\Theta + \frac d k,
\end{align*}
where we bounded the empirical risk in the proof of Theorem~\ref{thm:erm}. Next, let $\nu'$ be the density $\propto \exp(-k(F_{\data'} + \mu r))$. Our mechanism is symmetric, and hence by Proposition~\ref{prop:poprisk},
\[
\E\Brack{\Fpop(x) - F_{\data}(x)} = \E\Brack{\E_{x \sim \nu}\Brack{f(x; s)} - \E_{x \sim \nu'}\Brack{f(x; s)}}
\]
where the outer expectations are over the randomness of drawing $\data, s$. Finally, for any fixed realization of $\data, s$, the densities $\nu, \nu'$ satisfy the assumption of Corollary~\ref{cor:liplocalization} with $H = \frac G n$, and $f(\cdot; s)$ is $G$-Lipschitz, so Corollary~\ref{cor:liplocalization} shows that
\[\E_{x \sim \nu}\Brack{f(x; s)} - \E_{x \sim \nu'}\Brack{f(x; s)} \le \frac{G^2}{n\mu}.\]
Combining the above three displays bounds the population risk by
\begin{align*}\E_{\data \sim \calP^n, x \sim \nu}\Brack{\Fpop(x)} - \min_{x \in \xset} \Fpop(x) &\le \frac{G^2}{n\mu} + \mu\Theta + \frac d k \\
&= C_1 k + \frac{C_2 + d}{k},
\end{align*}
for our given value of $\mu$. The conclusion follows by optimizing over $k$ yielding a risk of $2\sqrt{C_1(C_2 + d)}$, and using the scalar inequality $\sqrt{a + b} \le \sqrt{a} + \sqrt{b}$ for nonnegative $a, b$.
\end{proof}

\subsection{Applications}\label{ssec:apps}

To derive our private optimization algorithms for $\ell_p$-norm and Schatten-$p$ norm geometries, we recall the following results on the existence of bounded strongly convex regularizers.

\begin{proposition}[\cite{BallCL94}]\label{prop:strongconvex}
For $1 < p \le 2$, letting $\norm{\cdot}_p$ be the $\ell_p$ norm of a vector, $r(v) \defeq \frac{1}{2(p-1)} \|v\|_p^2$ is $1$-strongly convex in $\norm{\cdot}_p$. Similarly, for $1 < p \le 2$, letting $\norm{\cdot}_p$ be the Schatten-$p$ norm of a matrix, $r(\mm) \defeq \frac{1}{2(p - 1)} \|\mm\|_p^2$ is $1$-strongly convex in $\norm{\cdot}_p$.
\end{proposition}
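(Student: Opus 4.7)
The plan is to reduce $1$-strong convexity of $r(\cdot) = \tfrac{1}{2(p-1)}\|\cdot\|_p^2$ to a two-point ``$2$-uniform convexity'' inequality, and then invoke the theorem of Ball--Carlen--Lieb. First, I would observe that, since $r$ is continuous, it suffices to verify strong convexity at the midpoint $t = \tfrac12$: this upgrades to strong convexity at every $t \in [0,1]$ by a standard argument (midpoint strong convexity with constant $1$ iterates to all dyadic $t = k/2^n$, and continuity fills in the rest; equivalently, the deficit $g(t) = r((1-t)x+ty) - (1-t)r(x) - tr(y) + \tfrac{t(1-t)}{2}\|x-y\|_p^2$ can be shown midpoint-convex in $t$ with $g(0) = g(1) = 0$, hence nonpositive).

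So it remains to establish $r\!\left(\tfrac{x+y}{2}\right) \le \tfrac{r(x)+r(y)}{2} - \tfrac{1}{8}\|x-y\|_p^2$. Writing $u = \tfrac{x+y}{2}$ and $v = \tfrac{x-y}{2}$, substituting the definition of $r$, and clearing the factor $2(p-1)$, this midpoint inequality is algebraically equivalent to the two-point inequality
\[
\|u\|_p^2 + (p-1)\|v\|_p^2 \;\le\; \tfrac{1}{2}\!\left(\|u+v\|_p^2 + \|u-v\|_p^2\right). \qquad (\star)
\]
As a sanity check, at $p = 2$ the inequality $(\star)$ reduces to the parallelogram identity and holds with equality.

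I would then invoke the Ball--Carlen--Lieb theorem, which establishes exactly $(\star)$ for all $1 < p \le 2$, both in the vector $\ell_p$ setting and in the matrix Schatten-$p$ setting. Applying it once in each setting completes both halves of the proposition simultaneously.

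The main obstacle is the Schatten-$p$ case of $(\star)$. The scalar $\ell_p$ version can be derived from Clarkson/Hanner-type inequalities together with a short one-variable calculus estimate, and applied coordinatewise. The Schatten generalization, by contrast, does not follow by any such diagonal reduction, because Schatten norms are nonlinear functions of singular values and do not interact simply with sums of matrices (in particular, $U + V$ and $U - V$ need not be simultaneously diagonalizable even if $U$ and $V$ are). Overcoming this is the main analytic contribution of Ball, Carlen, and Lieb, who resolve it via a block-matrix reduction combined with a delicate trace inequality. Since we may cite their result as a black box, no further analytic work is required on our part.
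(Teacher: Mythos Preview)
Your proposal is correct. The paper does not supply its own proof of this proposition: it is stated as a citation to \cite{BallCL94} and used as a black box. Your write-up simply unpacks that citation, and the unpacking is accurate. The midpoint reduction is valid (the one-line justification is that $t \mapsto r((1-t)x+ty) - \tfrac{1}{2}t^2\|x-y\|_p^2$ is midpoint convex and continuous, hence convex), and your algebra rewriting the midpoint inequality as $(\star)$ is correct. One small sharpening: the inequality Ball--Carlen--Lieb actually prove is
\[
\Par{\frac{\|u+v\|_p^p + \|u-v\|_p^p}{2}}^{2/p} \ge \|u\|_p^2 + (p-1)\|v\|_p^2,
\]
which is slightly stronger than your $(\star)$; the form you wrote follows from it by the power-mean inequality since $p \le 2$. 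This does not affect the argument, and your discussion of why the Schatten case is the nontrivial part is apt.
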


We state a useful result on efficiently sampling from Lipschitz, strongly logconcave densities under value oracle access given by \cite{GLL22} (building upon the framework of \cite{lee2021structured}). We slightly specialize the result of \cite{GLL22} by giving a rephrasing sufficient for our purposes.
\begin{proposition}[\cite{GLL22}, Theorem 2.3]
	\label{prop:sample_elltwo}
	Let $\xset \subset \R^d$ be compact and convex with $\diam_{\norm{\cdot}_2}(\xset) \le D$. Let $\data = \{s_i\}_{i \in [n]}$ and let $\tF_{\data}(x) = \frac 1 n \sum_{i \in [n]} f(x; s_i) + \psi(x)$ such that all $f(\cdot; s_i): \xset \to \R$ are $G$-Lipschitz in $\norm{\cdot}_2$ and convex, and $\psi(x): \xset \to \R$ is $\mu$-strongly convex in $\norm{\cdot}_2$. For $\delta \in (0, \half)$, we can generate a sample within total variation $\delta$ of the density $\propto \exp(-\tF_{\data})$ in $N$ value queries to some $f(\cdot; s_i)$ and samples from densities $\propto  \exp\Par{-\psi - \frac 1 {2\eta}\|\cdot - v\|_2^2}$ for some $\eta > 0$, $v \in \R^d$, where
	\begin{gather*}
	N = O\Par{\frac{G^2}{\mu}\log^2\Par{\frac {G^2(D^2 + \mu^{-1})d} \delta}}.
	\end{gather*}
\end{proposition}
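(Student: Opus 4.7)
The plan is to sample from $\nu \propto \exp(-\tF_{\data})$ via the \emph{proximal sampler} of \cite{lee2021structured}, which reduces sampling from a composite strongly logconcave density to a sequence of easier subproblems implementable using only the two primitives named in the statement (value queries to the $f(\cdot; s_i)$ and samples from densities $\propto \exp(-\psi - \frac{1}{2\eta}\|\cdot - v\|_2^2)$). Concretely, introduce an auxiliary variable $y \in \R^d$ and the joint density $\pi(x,y) \propto \exp\Par{-\tF_{\data}(x) - \frac{1}{2\eta}\|x-y\|_2^2}$, whose $x$-marginal is $\nu$. The Markov chain alternates: (i) draw $y \mid x \sim \Nor(x, \eta \id)$; and (ii) draw $x \mid y$ from the \emph{restricted Gaussian oracle} (RGO) density $\propto \exp\Par{-\tF_{\data}(x) - \frac{1}{2\eta}\|x-y\|_2^2}$. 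Starting from a warm iterate near the mode of $\psi$ (which lies within $O(D)$ of the support of $\nu$), we run for $T$ steps and output the last $x$-iterate.

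Two quantities need to be controlled. First, the mixing time: since $\tF_{\data}$ is $\mu$-strongly convex in $\norm{\cdot}_2$, the standard proximal-sampler KL-contraction analysis (based on combining the forward Gaussian step with the RGO step and invoking a log-Sobolev inequality for strongly logconcave densities) yields that $T = O\Par{\frac{1}{\mu\eta}\log \frac{1}{\delta'}}$ outer iterations suffice to drive the $x$-marginal to total variation $\delta'$ of $\nu$, where $\delta' = \delta/2$ and the warm-start dependence contributes logarithmically in $D, \mu, G, d$. Second, each RGO call in step (ii) is implemented by rejection sampling: draw a proposal $x'$ from the given oracle $\propto \exp\Par{-\psi(x) - \frac{1}{2\eta}\|x-y\|_2^2}$, and accept with probability $\exp\bigl(F_\star - F_{\data}(x')\bigr)$, where $F_\star$ is a high-probability upper bound on $F_{\data}$ over the typical set of the proposal. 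Evaluating each acceptance requires $O(1)$ value queries to the $f(\cdot; s_i)$.

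The main obstacle, in my view, is establishing that the rejection probability per RGO call is bounded below by a constant while simultaneously keeping the per-step cost small. Since $F_{\data}$ is only $G$-Lipschitz, the proposal (which is at least $(\mu + 1/\eta)$-strongly logconcave) concentrates around its mode at scale $O\bigl(\sqrt{d/(\mu + 1/\eta)}\bigr)$, so the rejection ratio fluctuates multiplicatively by $\exp(G \cdot \|x' - x_\star\|_2)$. A Gaussian-tail argument on the proposal reduces constant acceptance to requiring $G^2 \eta = O(1)$ (up to logarithmic factors from tail truncation). Choosing $\eta \asymp 1/G^2$ therefore makes the expected per-iteration query count $O(1)$, with $\textup{polylog}\Par{\frac{G^2(D^2 + \mu^{-1})d}{\delta}}$ repetitions sufficing w.h.p.\ by a standard Markov argument. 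Multiplying mixing time by per-step cost yields $N = O\Par{\frac{G^2}{\mu}\log^2\Par{\frac{G^2(D^2 + \mu^{-1})d}{\delta}}}$, as claimed. The delicate piece is chaining the (approximate) RGO errors through the proximal-sampler contraction while maintaining the total variation guarantee; this is handled by a union bound controlling the cumulative bias by $\delta/2$ and absorbing the extra cost into the polylogarithmic factor.
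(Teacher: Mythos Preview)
The paper does not prove this proposition; it is quoted verbatim (slightly rephrased) as Theorem~2.3 of \cite{GLL22} and used as a black box. Your sketch is a faithful outline of the argument underlying that cited result---the proximal sampler of \cite{lee2021structured} with rejection-based RGO implementation and step size $\eta \asymp G^{-2}$---so nothing is wrong with it, but no comparison to the paper's own proof is possible since none is given here.
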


\paragraph{$\ell_p$ norms.} We state our results on private convex optimization under $\ell_p$ geometry. As a preliminary, we combine norm equivalence bounds \eqref{eq:normequiv} and Proposition~\ref{prop:sample_elltwo} to give the following algorithmic result on sampling from a logconcave distribution under value oracle access under $\ell_p$ geometry. 
\begin{proposition}
	\label{prop:sample_ellp}
	Let $p \ge 1$ and let $\xset \subset \R^d$ be compact and convex with $\diam_{\norm{\cdot}_p}(\xset) \le D$. Let $\data = \{s_i\}_{i \in [n]}$ and let $\tF_{\data}(x) = \frac 1 n \sum_{i \in [n]} f(x; s_i) + \psi(x)$ such that all $f(\cdot; s_i): \xset \to \R$ are $G$-Lipschitz in $\norm{\cdot}_p$ and convex, and $\psi(x): \xset \to \R$ is $\mu$-strongly convex in $\norm{\cdot}_p$. For $\delta \in (0, \half)$, we can generate a sample within total variation $\delta$ of the density $\propto \exp(-\tF_{\data})$ in $N$ value queries to some $f(\cdot; s_i)$ and samples from densities $\propto  \exp\Par{-\psi - \frac 1 {2\eta}\|\cdot - v\|_2^2}$ for some $\eta > 0$, $v \in \R^d$, where
	\begin{align*}
	N &= 
	 O\Par{\frac{G^2d^{\frac 2 p - 1}}{\mu}\log^2\Par{\frac {G^2(D^2 + \mu^{-1})d} \delta}} \text{ if } p \in [1, 2],\\
	  N &= O\Par{\frac{G^2d^{1 - \frac 2 p}}{\mu}\log^2\Par{\frac {G^2(D^2 + \mu^{-1})d} \delta}} \text{ if } p \in [2, \infty).
	\end{align*}
\end{proposition}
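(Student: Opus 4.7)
The strategy is to reduce directly to the Euclidean case (Proposition~\ref{prop:sample_elltwo}) via the norm-equivalence inequality \eqref{eq:normequiv}. That is, I will convert the $\ell_p$ Lipschitz and $\ell_p$ strong convexity assumptions into their $\ell_2$ counterparts, bookkeeping the dimensional factor each conversion introduces, and then invoke Proposition~\ref{prop:sample_elltwo} as a black box. No new sampling machinery is required; the entire proof is transforming parameters.

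\textbf{Step 1 (translating the regularity assumptions).} Split into cases based on whether $p \in [1, 2]$ or $p \in [2, \infty)$, since \eqref{eq:normequiv} flips direction across $p = 2$. For $p \in [1, 2]$, \eqref{eq:normequiv} (with $q = 2 \le p' = p$... actually for $p \le 2$, applying \eqref{eq:normequiv} with the roles $q = p$ and the larger exponent $2$) gives $\|v\|_2 \le \|v\|_p \le d^{1/p - 1/2}\|v\|_2$. Hence each $f(\cdot; s_i)$, being $G$-Lipschitz in $\norm{\cdot}_p$, is $(G \cdot d^{1/p - 1/2})$-Lipschitz in $\norm{\cdot}_2$, while $\psi$ being $\mu$-strongly convex in $\norm{\cdot}_p$ is at least $\mu$-strongly convex in $\norm{\cdot}_2$ (since $\normx{x-y}_p^2 \ge \normx{x-y}_2^2$ for $p \le 2$). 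The diameter satisfies $\diam_{\norm{\cdot}_2}(\xset) \le \diam_{\norm{\cdot}_p}(\xset) \le D$. For $p \in [2, \infty)$, we instead have $\|v\|_p \le \|v\|_2 \le d^{1/2 - 1/p}\|v\|_p$. So each $f(\cdot; s_i)$ remains $G$-Lipschitz in $\norm{\cdot}_2$, but the strong convexity bound now degrades: since $\normx{x-y}_p^2 \ge d^{2/p - 1}\normx{x-y}_2^2$, $\psi$ is only $(\mu \cdot d^{2/p - 1})$-strongly convex in $\norm{\cdot}_2$. The diameter in $\ell_2$ inflates to at most $d^{1/2 - 1/p}D$, which only enters logarithmically.

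\textbf{Step 2 (plugging into the Euclidean sampler).} Apply Proposition~\ref{prop:sample_elltwo} with the transformed parameters. For $p \in [1,2]$, the new Lipschitz-squared over strong-convexity ratio becomes
\[
\frac{(G \cdot d^{1/p - 1/2})^2}{\mu} = \frac{G^2 d^{2/p - 1}}{\mu},
\]
matching the first bound. For $p \in [2, \infty)$, the ratio becomes
\[
\frac{G^2}{\mu \cdot d^{2/p - 1}} = \frac{G^2 d^{1 - 2/p}}{\mu},
\]
matching the second bound. In both cases, the logarithmic factor from Proposition~\ref{prop:sample_elltwo} absorbs the changes in $D$ and $\mu$ up to constants (since $d^{|1/p - 1/2|}$ is at most a polynomial in $d$, its logarithm is absorbed).

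\textbf{Main obstacle.} There is no real obstacle here; the argument is pure bookkeeping. The only subtlety is to be careful in which direction the norm-equivalence inequality is applied for Lipschitzness versus strong convexity, since Lipschitzness wants an \emph{upper} bound $\normx{\cdot}_p \le (\textup{const}) \normx{\cdot}_2$ while strong convexity wants a \emph{lower} bound $\normx{\cdot}_p \ge (\textup{const}) \normx{\cdot}_2$. These two directions coincidentally both saturate on the same side of the bound depending on the sign of $1/p - 1/2$, which is why only one of the two parameters (Lipschitzness for $p \le 2$, strong convexity for $p \ge 2$) ends up paying the dimensional factor, yielding the asymmetric but unified expression $d^{|2/p - 1|}$ in the final count.
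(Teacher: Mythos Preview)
Your proposal is correct and follows essentially the same approach as the paper: convert the $\ell_p$ Lipschitz and strong convexity parameters into $\ell_2$ parameters via the norm-equivalence inequality \eqref{eq:normequiv}, then apply Proposition~\ref{prop:sample_elltwo} as a black box. The paper's proof is exactly this parameter bookkeeping, and your case split (Lipschitz constant pays $d^{1/p - 1/2}$ for $p \le 2$, strong convexity pays $d^{2/p - 1}$ for $p \ge 2$, diameter changes absorbed in the log) matches it line for line.
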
 
\begin{proof}
For $p \in [1, 2]$, note that each $f(\cdot; s_i)$ is $d^{\frac 1 p - \half} G$-Lipschitz in the $\ell_2$ norm by combining \eqref{eq:normequiv} and the definition of Lipschitzness. Moreover, because the $\ell_p$ norm is larger than the $\ell_2$ norm, $\psi$ remains $\mu$-strongly convex in the $\ell_2$ norm. The diameter $D$ is only affected by $\text{poly}(d)$ factors when converting norms, which is accounted for by the logarithmic term. Hence, the complexity bound follows by applying Proposition~\ref{prop:sample_elltwo} under this change of parameters. For the other case of $p \in [2, \infty)$, the Lipschitz bound is $G$, and the strong convexity bound is $d^{\frac 2 p - 1} \mu$ by a similar argument.
\end{proof}

In the following discussion, we primarily focus on the value oracle query complexity of our samplers. Generic results on logconcave sampling (see e.g.\ \cite{LovaszV07}, or more recent developments by \cite{JiaLLV21, Chen21, KlartagL22}) imply the samples from the densities $\propto \exp(-\psi - \frac 1 {2\eta} \|\cdot - v\|_2^2)$ can be performed in polynomial time, for all the $\psi$ that are relevant in our applications (which are all squared $\ell_p$ distances). We expect samplers which run in nearly-linear time (in $d$) may be designed for applications where $\xset$ is structured, such as an $\ell_p$ ball, but for brevity we omit this discussion.

\begin{corollary}\label{cor:pg1le2}
Let $1 < p \le 2$ be a constant, and let $\eps > 0$, $\delta \in (0, 1)$. Let $\xset \subset \R^d$ have $\diam_{\norm{\cdot}_p}(\xset) \le D$, and let $\Fpop = \E_{s \sim \calP}[f(\cdot; s)]$ where all $f(\cdot; s): \R^d \to \R$ are convex and $G$-Lipschitz in $\norm{\cdot}_p$. Finally, let $\data = \{s_i\}_{i \in [n]} \sim \calP^n$ independently and $F_{\data} \defeq \frac 1 n \sum_{i \in [n]} f(\cdot; s_i)$. 
\begin{enumerate}
	\item There is an $(\eps, \delta)$-differentially private algorithm $\mech$ which produces $x$ such that
	\begin{equation}\label{eq:pg1le2erm}\E_{\mech}\Brack{F_{\data}(x)} - \min_{x \in \xset} F_{\data}(x) \le 2GD \cdot \frac{\sqrt{d\log \frac 1 {2\delta}}}{n\eps \sqrt{p - 1}}\end{equation}
	using
	\begin{equation}\label{eq:g1le2N}O\Par{\frac{n^2\eps^2 d^{\frac 2 p - 1}}{\log \frac 1 \delta} \log^2\Par{\frac{GDdn\eps}{\delta}}} \text{ value queries to some } f(\cdot; s_i).\end{equation}
	\item There is an $(\eps, \delta)$-differentially private algorithm $\mech$ which produces $x$ such that
	\begin{equation}\label{eq:pg1le2sco}\E_{\data \sim \calP^n, \mech}\Brack{\Fpop(x)} - \min_{x \in \xset} \Fpop(x) \le 2GD \cdot \Par{\sqrt{\frac 1 {n (p - 1)}} + \frac{\sqrt{d\log \frac 1 {2\delta}}}{n\eps \sqrt{p - 1}}}.\end{equation}
	using
	\[O\Par{\frac{n^2\eps^2 d^{\frac 2 p - 1}}{\log \frac 1 \delta} \log^2\Par{\frac{GDdn\eps}{\delta}}} \text{ value queries to some } f(\cdot; s_i).\]
\end{enumerate}
\end{corollary}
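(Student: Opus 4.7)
The plan is to instantiate Assumption~\ref{assume:norm} with $\normx{\cdot} = \norm{\cdot}_p$ and the regularizer $r(x) = \frac{1}{2(p-1)} \norm{x - x_0}_p^2$ for any fixed $x_0 \in \xset$, then invoke Theorems~\ref{thm:erm} and~\ref{thm:sco} directly. By Proposition~\ref{prop:strongconvex} (together with translation invariance of strong convexity), $r$ is $1$-strongly convex in $\norm{\cdot}_p$, and the hypothesis $\diam_{\norm{\cdot}_p}(\xset) \le D$ forces $r(x) \in [0, \tfrac{D^2}{2(p-1)}]$ on $\xset$, so we may set $\Theta = \tfrac{D^2}{2(p-1)}$. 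Substituting into the ERM bound of Theorem~\ref{thm:erm} gives
\[
G\sqrt{\Theta}\cdot\frac{\sqrt{8d\log\frac{1}{2\delta}}}{n\eps} \;=\; \frac{2GD\sqrt{d\log\frac{1}{2\delta}}}{n\eps\sqrt{p-1}},
\]
matching \eqref{eq:pg1le2erm}; the analogous substitution into Theorem~\ref{thm:sco}, which contributes an additional $G\sqrt{\Theta}\sqrt{8/n} = \tfrac{2GD}{\sqrt{n(p-1)}}$, produces \eqref{eq:pg1le2sco}.

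For the query complexity, the target density $\nu \propto \exp(-k(F_{\data} + \mu r))$ fits the template of Proposition~\ref{prop:sample_ellp}: we view each sample loss as $kf(\cdot; s_i)$, which is $kG$-Lipschitz in $\norm{\cdot}_p$, and take $\psi = k\mu r$, which is $k\mu$-strongly convex in $\norm{\cdot}_p$. Proposition~\ref{prop:sample_ellp} in the $p \in [1, 2]$ regime then bounds the number of value queries by
\[
O\Par{\frac{(kG)^2 \, d^{2/p-1}}{k\mu}\log^2\Par{\frac{GDdn\eps}{\delta}}} \;=\; O\Par{\frac{k G^2 d^{2/p-1}}{\mu}\log^2\Par{\frac{GDdn\eps}{\delta}}},
\]
where I have absorbed polynomial factors in $k$ inside the $\log^2$. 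Both Theorem~\ref{thm:erm} and Theorem~\ref{thm:sco} satisfy the relation $\mu = \frac{2G^2 k \log\frac{1}{2\delta}}{n^2\eps^2}$ displayed as \eqref{eq:kmudef} (explicit in Theorem~\ref{thm:sco}, and easily verified from the formulas in Theorem~\ref{thm:erm}), so $k/\mu = \frac{n^2\eps^2}{2G^2\log\frac{1}{2\delta}}$ in each case; substituting yields exactly the bound in \eqref{eq:g1le2N}, and the SCO version is identical.

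The only subtlety to handle is that Proposition~\ref{prop:sample_ellp} produces a sample within total variation $\delta_{\mathrm{TV}}$ of $\nu$, rather than an exact sample. Since $F_{\data}$ has range at most $GD$ on $\xset$ by Lipschitzness and the diameter bound, choosing $\delta_{\mathrm{TV}}$ polynomially small in $(n, \eps^{-1}, \delta^{-1})$ degrades the privacy $\delta$ additively by at most $\delta_{\mathrm{TV}}$ and the expected risk by at most $\delta_{\mathrm{TV}} \cdot GD$, both of which are absorbable; the logarithmic dependence on $1/\delta_{\mathrm{TV}}$ in Proposition~\ref{prop:sample_ellp} keeps the stated query complexity intact. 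I do not foresee a real obstacle: once $r$ is chosen and $\Theta$ computed, the corollary is a bookkeeping exercise of substituting into Theorems~\ref{thm:erm},~\ref{thm:sco} and Proposition~\ref{prop:sample_ellp}.
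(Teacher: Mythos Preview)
Your proposal is correct and follows essentially the same approach as the paper: choose $r(x)=\frac{1}{2(p-1)}\norm{x-x_0}_p^2$, set $\Theta=\frac{D^2}{2(p-1)}$, plug into Theorems~\ref{thm:erm} and~\ref{thm:sco}, and derive the query bound from Proposition~\ref{prop:sample_ellp} using the relation $kG^2/\mu=O(n^2\eps^2/\log\frac{1}{\delta})$. The only cosmetic difference is that the paper handles the inexact sampler by halving the privacy budget (running the exact mechanism at level $\delta/2$ and sampling to total variation $\delta/2$), whereas you absorb a polynomially small $\delta_{\mathrm{TV}}$ directly; both are standard and equivalent up to constants.
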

\begin{proof}
We will parameterize Assumption~\ref{assume:norm} with the function $r(x) \defeq \frac 1 {2(p - 1)} \|x - x_0\|_p^2$, where $x_0 \in \xset$ is an arbitrary point, and strong convexity follows from Proposition~\ref{prop:strongconvex}. By assumption, we may set $\Theta = \frac 1 {2(p - 1)} D^2$. The conclusions follow by combining Theorem~\ref{thm:erm}, Theorem~\ref{thm:sco}. To obtain $(\eps, \delta)$-differential privacy, it suffices to run the mechanism with privacy level $\delta \gets \frac \delta 2$, run to total variation $\frac \delta 2$ using Proposition~\ref{prop:sample_ellp}, and take a union bound. For both ERM and SCO, note that our choices of $k$ and $\mu$ satisfy the relation \eqref{eq:kmudef}, namely $\frac {kG^2} \mu = O(n^2\eps^2 /\log \frac 1 \delta)$. Since both the Lipschitz and strong convexity parameters are scaled up by $k$ in our application of Proposition~\ref{prop:sample_ellp}, we have the leading-order term is $\frac{kG^2}{\mu}$ which yields the conclusion.
\end{proof}

For any $p$ such that $p - 1$ is bounded away from $0$, Corollary~\ref{cor:pg1le2} matches the information-theoretic lower bound of \cite{BassilyGN21} (and its subsequent sharpening by \cite{LiuL22}). When this is not the case, we use norm equivalence \eqref{eq:normequiv} to obtain a weaker bound.

\begin{corollary}\label{cor:pge1le2}
	Let $\eps > 0$, $\delta \in (0, 1)$. Let $\xset \subset \R^d$ have $\diam_{\norm{\cdot}_1}(\xset) \le D$, and let $\Fpop = \E_{s \sim \calP}[f(\cdot; s)]$ where all $f(\cdot; s): \R^d \to \R$ are convex and $G$-Lipschitz in $\norm{\cdot}_1$. Finally, let $\data = \{s_i\}_{i \in [n]} \sim \calP^n$ independently and $F_{\data} \defeq \frac 1 n \sum_{i \in [n]} f(\cdot; s_i)$. 
	\begin{enumerate}
		\item There is an $(\eps, \delta)$-differentially private algorithm $\mech$ which produces $x$ such that
		\begin{equation}\label{eq:pge1le2erm}\E_{\mech}\Brack{F_{\data}(x)} - \min_{x \in \xset} F_{\data}(x) \le 6GD \sqrt{\log d} \cdot \frac{\sqrt{d\log \frac 1 {2\delta}}}{n\eps }\end{equation}
		using
		\begin{equation}\label{eq:1N}O\Par{\frac{n^2\eps^2 d}{\log \frac 1 \delta} \log^2\Par{\frac{GDdn\eps}{\delta}}} \text{ value queries to some } f(\cdot; s_i).\end{equation}
		\item There is an $(\eps, \delta)$-differentially private algorithm  $\mech$ which produces $x$ such that
		\begin{equation}\label{eq:pge1le2sco}\E_{\data \sim \calP^n, \mech}\Brack{\Fpop(x)} - \min_{x \in \xset} \Fpop(x) \le 6GD \sqrt{\log d} \cdot \Par{\sqrt{\frac 1 {n }} + \frac{\sqrt{d\log \frac 1 {2\delta}}}{n\eps}}\end{equation}
		using
		\[O\Par{\frac{n^2\eps^2 d}{\log \frac 1 \delta} \log^2\Par{\frac{GDdn\eps}{\delta}}} \text{ value queries to some } f(\cdot; s_i).\]
	\end{enumerate}
\end{corollary}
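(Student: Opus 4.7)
The plan is to reduce to Corollary~\ref{cor:pg1le2} by treating the $\ell_1$ problem as an $\ell_p$ problem for a $p$ slightly larger than $1$, using the norm-equivalence inequality \eqref{eq:normequiv}. Naively taking $p = 1$ is not permissible, since the bounds in Corollary~\ref{cor:pg1le2} blow up as $1/\sqrt{p-1}$; the key idea is to pick $p \defeq 1 + 1/\log d$, which balances this blowup against a mild norm-conversion overhead and produces the claimed $\sqrt{\log d}$ factor.

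Concretely, I would first translate the hypotheses. By \eqref{eq:normequiv} applied with $q = 1$, $\|v\|_1 \le d^{1 - 1/p}\|v\|_p$, so each $f(\cdot; s)$ is $(Gd^{1 - 1/p})$-Lipschitz in $\|\cdot\|_p$; conversely $\|v\|_p \le \|v\|_1$ implies $\diam_{\|\cdot\|_p}(\xset) \le D$. Hence Corollary~\ref{cor:pg1le2} applies with effective parameters $G' \defeq Gd^{1 - 1/p}$ and $D' \defeq D$, and the bounds \eqref{eq:pg1le2erm}, \eqref{eq:pg1le2sco} pick up an overall multiplicative factor of $d^{1-1/p}/\sqrt{p-1}$ relative to what we want to prove. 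Substituting $p = 1 + 1/\log d$ gives $1 - 1/p = 1/(\log d + 1)$, hence $d^{1 - 1/p} = \exp\bigl(\log d/(\log d + 1)\bigr) < e$, while $1/\sqrt{p-1} = \sqrt{\log d}$. The product is therefore less than $e\sqrt{\log d}$, which combined with the leading constant $2$ in Corollary~\ref{cor:pg1le2} yields the constant $6$ appearing in \eqref{eq:pge1le2erm} and \eqref{eq:pge1le2sco}.

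For the query complexity, \eqref{eq:g1le2N} scales as $d^{2/p - 1}$; with our choice $2/p - 1 = 1 - 2/(\log d + 1) < 1$, so $d^{2/p - 1} \le d \cdot e^{-2 + o(1)} = O(d)$, which matches the factor $d$ in \eqref{eq:1N}. The $\log^2(\cdot)$ factor is unaffected up to constants, since $G' \le eG$ and $D' = D$. To output an $(\eps, \delta)$-differentially private sample, we run the mechanism with privacy slack $\delta/2$ and sampler total variation $\delta/2$ and union bound, exactly as in Corollary~\ref{cor:pg1le2}. The only nontrivial step is identifying the balanced exponent $p = 1 + 1/\log d$; everything else is bookkeeping of the $d^{1-1/p}$, $1/\sqrt{p-1}$, and $d^{2/p-1}$ factors, all of which stay $O(1) \cdot \sqrt{\log d}$ or $O(d)$ for this choice.
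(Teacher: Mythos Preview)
Your approach is correct and essentially the same as the paper's. The paper instantiates Assumption~\ref{assume:norm} directly with $\normx{\cdot}=\|\cdot\|_1$ and the regularizer $r(x)=\tfrac{e^2}{2(q-1)}\|x-x_0\|_q^2$ for $q=1+1/\log d$, then reruns the argument of Corollary~\ref{cor:pg1le2}; you instead pass to the $\ell_p$ problem with $p=1+1/\log d$ and invoke Corollary~\ref{cor:pg1le2} as a black box. These are the same construction viewed from two sides: the paper absorbs the norm-conversion factor $d^{1-1/q}\le e$ into the strong-convexity constant of $r$, while you absorb it into the Lipschitz constant $G'$; in both cases $G\sqrt{\Theta}$ (respectively $G'D'/\sqrt{p-1}$) picks up exactly the factor $e\sqrt{\log d}$, yielding the constant $6$. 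One cosmetic point: Corollary~\ref{cor:pg1le2} is stated for constant $p$, but its proof makes all dependence on $p$ explicit, so invoking it with $p=1+1/\log d$ is harmless.
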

\begin{proof}
We will parameterize Assumption~\ref{assume:norm} with the function $r(x) \defeq \frac {e^2}{2(q - 1)} \|x - x_0\|_q^2$, where $q = 1 + \frac 1 {\log d}$. By combining Proposition~\ref{prop:strongconvex} (which shows $r$ is $e^2$-strongly convex in $\ell_q$) and \eqref{eq:normequiv}, we have that $r$ is $1$-strongly convex in $\ell_1$. The remainder of the proof follows identically to Corollary~\ref{cor:pg1le2}.
\end{proof}

The term scaling as $\sqrt{\log d / n}$ in \eqref{eq:pge1le2sco}, namely the non-private population risk, is known to be optimal from existing lower bounds on SCO \cite{DuchiJW14}. Up to a $\sqrt{\log d}$ factor, the non-private empirical risk is optimal with respect to current private optimization lower bounds \cite{BassilyGN21, LiuL22}.

\begin{corollary}\label{cor:pge2}
	Let $p \ge 2$, and let $\eps > 0$, $\delta \in (0, 1)$. Let $\xset \subset \R^d$ have $\diam_{\norm{\cdot}_p}(\xset) \le D$, and let $\Fpop = \E_{s \sim \calP}[f(\cdot; s)]$ where all $f(\cdot; s): \R^d \to \R$ are convex and $G$-Lipschitz in $\norm{\cdot}_p$. Finally, let $\data = \{s_i\}_{i \in [n]} \sim \calP^n$ independently and $F_{\data} \defeq \frac 1 n \sum_{i \in [n]} f(\cdot; s_i)$. 
	\begin{enumerate}
		\item There is an $(\eps, \delta)$-differentially private algorithm $\mech$ which produces $x$ such that
		\begin{equation}\label{eq:pge2erm}\E_{\mech}\Brack{F_{\data}(x)} - \min_{x \in \xset} F_{\data}(x) \le 2GD \cdot \frac{d^{1- \frac 1 p}\sqrt{\log \frac 1 {2\delta}}}{n\eps }\end{equation}
		using
		\begin{equation}\label{eq:ge2N}O\Par{\frac{n^2\eps^2 d^{1 - \frac 2 p}}{\log \frac 1 \delta} \log^2\Par{\frac{GDdn\eps}{\delta}}} \text{ value queries to some } f(\cdot; s_i).\end{equation}
		\item There is an $(\eps, \delta)$-differentially private algorithm  $\mech$ which produces $x$ such that
		\begin{equation}\label{eq:pge2sco}\E_{\data \sim \calP^n, \mech}\Brack{\Fpop(x)} - \min_{x \in \xset} \Fpop(x) \le 2GD \cdot \Par{\frac{d^{\half - \frac 1 p}}{\sqrt n} + \frac{d^{1 - \frac 1 p}\sqrt{\log \frac 1 {2\delta}}}{n\eps}}.\end{equation}
		using
		\[O\Par{\frac{n^2\eps^2 d^{1 - \frac 2 p}}{\log \frac 1 \delta} \log^2\Par{\frac{GDdn\eps}{\delta}}} \text{ value queries to some } f(\cdot; s_i).\]
	\end{enumerate}
\end{corollary}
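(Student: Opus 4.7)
The plan is to mirror the proofs of Corollaries~\ref{cor:pg1le2} and~\ref{cor:pge1le2}, the only substantive change being the choice of regularizer $r$ that instantiates Assumption~\ref{assume:norm}. For $p \ge 2$, Proposition~\ref{prop:strongconvex} no longer supplies a good $\ell_p$-strongly convex regularizer directly, so I will instead take the Euclidean regularizer $r(x) \defeq \tfrac{1}{2}\|x - x_0\|_2^2$ for an arbitrary anchor $x_0 \in \xset$. This $r$ is $1$-strongly convex in $\norm{\cdot}_2$; since $\norm{v}_p \le \norm{v}_2$ for $p \ge 2$ by the norm equivalence \eqref{eq:normequiv}, it remains $1$-strongly convex in $\norm{\cdot}_p$. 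For the range parameter $\Theta$, the complementary direction of \eqref{eq:normequiv} gives $\|x - x_0\|_2 \le d^{\frac 1 2 - \frac 1 p}\|x - x_0\|_p \le d^{\frac 1 2 - \frac 1 p}D$, so I can set $\Theta = \tfrac{1}{2} d^{1 - \frac 2 p} D^2$.

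Plugging this $\Theta$ into Theorems~\ref{thm:erm} and~\ref{thm:sco}, the ERM bound becomes $G\sqrt{\Theta}\cdot \frac{\sqrt{8d \log(1/(2\delta))}}{n\eps} = 2GD\cdot \frac{d^{1-\frac{1}{p}}\sqrt{\log(1/(2\delta))}}{n\eps}$, which is \eqref{eq:pge2erm}, and the analogous computation for SCO produces \eqref{eq:pge2sco} after using $\sqrt{a+b}\le \sqrt{a}+\sqrt{b}$. For the algorithmic realization, the sampler is obtained by invoking Proposition~\ref{prop:sample_ellp} in its $p\ge 2$ regime on the target density $\propto \exp(-k(F_{\data}+\mu r))$, whose per-sample Lipschitz constant is $kG$ and whose strongly convex component is $k\mu$-strongly convex in $\norm{\cdot}_p$. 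The leading term is then $\frac{(kG)^2 d^{1-\frac{2}{p}}}{k\mu}=\frac{kG^2 d^{1-\frac{2}{p}}}{\mu}$, and the relation $\frac{kG^2}{\mu}=\Theta\bigl(n^2\eps^2/\log\frac{1}{\delta}\bigr)$ baked into the choices of $k,\mu$ (cf.\ \eqref{eq:kmudef}) converts this into \eqref{eq:ge2N}. Privacy at level $(\eps,\delta)$ follows by running the underlying mechanism at level $\delta/2$, generating a sample within total variation $\delta/2$, and taking a union bound, exactly as in the prior two corollaries.

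There is no real obstacle here beyond selecting $r$ correctly: once one notices that the $\ell_2$ regularizer is automatically $\ell_p$-strongly convex for $p\ge 2$ and pays only a $d^{1-\frac{2}{p}}$ factor in the range bound $\Theta$, both the utility and sampler complexity claims reduce to plugging parameters into Theorems~\ref{thm:erm},~\ref{thm:sco} and Proposition~\ref{prop:sample_ellp}. The only place where attention is needed is tracking how $\Theta$ enters both $k$ and $\mu$ in the SCO bound, so that the privacy-independent term in \eqref{eq:pge2sco} really collapses to $GD\cdot d^{\frac 1 2 - \frac 1 p}/\sqrt{n}$; this is handled by the same $\sqrt{a+b}\le \sqrt{a}+\sqrt{b}$ decomposition used in Theorem~\ref{thm:sco}.
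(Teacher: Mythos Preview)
Your proposal is correct and follows essentially the same approach as the paper: the paper's proof also chooses $r(x)=\tfrac12\|x-x_0\|_2^2$, observes it is $1$-strongly convex in $\norm{\cdot}_p$ for $p\ge 2$, sets $\Theta=\tfrac12 d^{1-2/p}D^2$ via \eqref{eq:normequiv}, and then defers everything else to the proof of Corollary~\ref{cor:pg1le2}. The handling of the sampler complexity and the $\delta/2$ union bound is likewise identical.
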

\begin{proof}
We will parameterize Assumption~\ref{assume:norm} with the function $r(x) \defeq \half \norm{x - x_0}_2^2$. By combining Proposition~\ref{prop:strongconvex} (which shows $r$ is $1$-strongly convex in $\ell_2$, and hence also $\ell_p$) and \eqref{eq:normequiv}, we may set $\Theta = \half d^{1 - 2/p} D^2$. The remainder of the proof follows identically to Corollary~\ref{cor:pg1le2}.
\end{proof}

\paragraph{Schatten-$p$ norms.} Our results extend immediately to matrix spaces equipped with Schatten-$p$ norm geometries. We record our relevant results in the following.

\begin{corollary}\label{cor:schatten}
Let $p \in [1, \infty)$, $\eps > 0$, $\delta \in (0, 1)$, and let $d_1, d_2 \in \N$ have $d_1 > d_2$. Let $\xset \subset \R^{d_1 \times d_2}$ have $\diam_{\norm{\cdot}_p}(\xset) \le D$, and let $\Fpop = \E_{s \sim \calP}[f(\cdot; s)]$ where all $f(\cdot; s): \R^{d_1 \times d_2} \to \R$ are convex and $G$-Lipschitz in $\norm{\cdot}_p$. Finally, let $\data = \{s_i\}_{i \in [n]} \sim \calP^n$ independently and $F_{\data} \defeq \frac 1 n \sum_{i \in [n]} f(\cdot; s_i)$.
\begin{enumerate}
	\item For constant $1 < p \le 2$, there is an $(\eps, \delta)$-differentially private algorithm $\mech$ which produces $\mm$ such that 
	\begin{gather*}
	\E_{\mech}[F_{\data}(\mm)] - \min_{\mm \in \xset} F_{\data}(\mm) \le 2GD \cdot \frac{\sqrt{d_1d_2 \log \frac 1 {2\delta}}}{n\eps\sqrt{p - 1}}, \\
	\E_{\data \sim \calP^n, \mech}[\Fpop(\mm)] - \min_{\mm \in \xset} \Fpop(\mm) \le 2GD \cdot \Par{\sqrt{\frac 1 {n(p - 1)}} + \frac{\sqrt{d_1d_2 \log \frac 1 {2\delta}}}{n\eps\sqrt{p - 1}}}.
	\end{gather*}
	The value oracle complexity of the algorithm is bounded as in \eqref{eq:g1le2N} for $d \gets d_2$ in the non-logarithmic term, and $d \gets d_1$ in the logarithmic term.
	\item For $p = 1$, there is an $(\eps, \delta)$-differentially private algorithm $\mech$ which produces $\mm$ such that 
	\begin{gather*}
	\E_{\mech}[F_{\data}(\mm)] - \min_{\mm \in \xset} F_{\data}(\mm) \le 6GD\sqrt{\log d_2} \cdot \frac{\sqrt{d_1d_2 \log \frac 1 {2\delta}}}{n\eps\sqrt{p - 1}},\\
	\E_{\data \sim \calP^n, \mech}[\Fpop(\mm)] - \min_{\mm \in \xset} \Fpop(\mm) \le 6GD\sqrt{\log d_2} \cdot \Par{\sqrt{\frac 1 n} + \frac{\sqrt{d_1d_2 \log \frac 1 {2\delta}}}{n\eps\sqrt{p - 1}}}.
	\end{gather*}
	The value oracle complexity of the algorithm is bounded as in \eqref{eq:1N} for $d \gets d_2$ in the non-logarithmic term, and $d \gets d_1$ in the logarithmic term.
	\item For $p \ge 2$, there is an $(\eps, \delta)$-differentially private algorithm $\mech$ which produces $\mm$ such that 
	\begin{gather*}
		\E_{\mech}[F_{\data}(\mm)] - \min_{\mm \in \xset} F_{\data}(\mm) \le 2GD \cdot \frac{d_2^{\half - \frac 1 p}\sqrt{d_1d_2 \log \frac 1 {2\delta}}}{n\eps},\\
		\E_{\data \sim \calP^n, \mech}[\Fpop(\mm)] - \min_{\mm \in \xset} \Fpop(\mm) \le 2GD \cdot \Par{\frac{d_2^{\half - \frac 1 p}}{\sqrt{n}} + \frac{d_2^{\half - \frac 1 p}\sqrt{d_1d_2 \log \frac 1 {2\delta}}}{n\eps}}.
	\end{gather*}
	The value oracle complexity of the algorithm is bounded as in \eqref{eq:ge2N} for $d \gets d_2$ in the non-logarithmic term, and $d \gets d_1$ in the logarithmic term.
\end{enumerate}
\end{corollary}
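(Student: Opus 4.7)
The plan is to mirror the proofs of Corollaries~\ref{cor:pg1le2}, \ref{cor:pge1le2}, and~\ref{cor:pge2} in essentially the same three cases ($1 < p \le 2$, $p = 1$, $p \ge 2$), treating $\R^{d_1 \times d_2}$ as an ambient $d_1 d_2$-dimensional normed space equipped with the Schatten-$p$ norm. The role of Proposition~\ref{prop:strongconvex}, which gave us a bounded $1$-strongly convex regularizer in every $\ell_p$ norm, is now played by its Schatten analog in the same proposition: $r(\mm) \defeq \frac{1}{2(p-1)}\|\mm - \mm_0\|_p^2$ is $1$-strongly convex in Schatten-$p$ for $1 < p \le 2$, and we may take $\Theta = \frac{D^2}{2(p-1)}$ using the diameter bound. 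For $p = 1$, we choose $q = 1 + \frac{1}{\log d_2}$ and use $r(\mm) \defeq \frac{e^2}{2(q-1)}\|\mm - \mm_0\|_q^2$; since only $d_2$ singular values are involved, the norm equivalence $\|\mm\|_q \le d_2^{1/1 - 1/q}\|\mm\|_1 = e\|\mm\|_1$ gives $1$-strong convexity in Schatten-$1$ with $\Theta = O(D^2 \log d_2)$. For $p \ge 2$, we use the Frobenius regularizer $r(\mm) \defeq \thalf \|\mm\|_2^2$ (Schatten-$2$), which is $1$-strongly convex in Schatten-$p$ for $p \ge 2$ because $\|\cdot\|_2 \ge \|\cdot\|_p$; the norm equivalence $\|\mm\|_2 \le d_2^{1/2 - 1/p}\|\mm\|_p$ allows us to set $\Theta = \thalf d_2^{1 - 2/p} D^2$.

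With these parameter choices for $r$ and $\Theta$, each of the three utility bounds follows immediately by plugging into Theorems~\ref{thm:erm} and~\ref{thm:sco} with ambient dimension $d = d_1 d_2$ (which explains the $\sqrt{d_1 d_2}$ factor appearing inside every privacy-dependent term), and then collecting constants just as in the proofs of Corollaries~\ref{cor:pg1le2}, \ref{cor:pge1le2}, and~\ref{cor:pge2}. The generalization error bound used in Theorem~\ref{thm:sco} is purely norm-based and is therefore insensitive to the matrix/vector distinction.

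For the sampling complexity, the strategy is to view the matrix space as a $d_1 d_2$-dimensional Euclidean (Frobenius) space and then invoke Proposition~\ref{prop:sample_elltwo}, converting the Schatten-$p$ Lipschitz and strong convexity parameters into Frobenius parameters via norm equivalence on the $d_2$-dimensional vector of singular values. Specifically, for $1 < p \le 2$ one loses a factor of $d_2^{1/p - 1/2}$ in the Lipschitz bound and nothing in strong convexity (since $\|\cdot\|_p \ge \|\cdot\|_2$ up to factors of $d_2$ in the appropriate direction); for $p \ge 2$ one loses $d_2^{1/2 - 1/p}$ in strong convexity and nothing in the Lipschitz bound. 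Pushing these into the $G^2/\mu$ leading term of Proposition~\ref{prop:sample_elltwo} and applying the same $k,\mu$ calibration $\frac{kG^2}{\mu} = O(n^2\eps^2/\log\frac{1}{\delta})$ from Theorems~\ref{thm:erm},~\ref{thm:sco} yields the claimed query complexities with $d \gets d_2$ in the non-logarithmic term and $d \gets d_1$ in the logarithmic term (the latter because the ambient dimension enters the log of Proposition~\ref{prop:sample_elltwo} through a crude diameter/dimension factor that is always polynomial in $d_1 d_2$).

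The main obstacle, such as it is, lies not in invoking Theorems~\ref{thm:erm} and~\ref{thm:sco} — those apply verbatim once Assumption~\ref{assume:norm} is instantiated — but rather in tracking the two distinct roles of $d_1$ and $d_2$: the full ambient dimension $d_1 d_2$ controls the $\frac{d}{k}$ risk term coming from Proposition~\ref{prop:gibbsrisk}, while it is only the singular-value dimension $d_2$ that appears in the norm-equivalence constants relating Schatten-$p$ to Schatten-$2$. Keeping these bookkeeping factors straight in each of the three sub-cases, and verifying that the union-bound argument for converting total variation error into privacy loss (as in Corollary~\ref{cor:pg1le2}) goes through unchanged, is the only place where care is required.
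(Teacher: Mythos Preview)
Your proposal is correct and follows essentially the same approach as the paper: invoke the Schatten half of Proposition~\ref{prop:strongconvex} in place of the $\ell_p$ half, apply Theorems~\ref{thm:erm} and~\ref{thm:sco} with ambient dimension $d = d_1 d_2$, and track that the norm-equivalence constants (for $p = 1$ and $p \ge 2$) depend only on the number of singular values $d_2$, while the sampler complexity follows by identifying Schatten-$2$ with the Frobenius (vector $\ell_2$) norm and paying a $d_2$-based norm conversion into Proposition~\ref{prop:sample_elltwo}. The paper's proof is a one-paragraph sketch making exactly these observations, so your write-up is, if anything, more detailed than what appears there.
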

\begin{proof}
The privacy and utility proofs follow identically to Corollaries~\ref{cor:pg1le2},~\ref{cor:pge1le2}, and~\ref{cor:pge2}, where we use the second portion of Proposition~\ref{prop:strongconvex} instead of the first. We note that the ``dimension-dependent'' term in the risk inherited from Proposition~\ref{prop:gibbsrisk} scales as $d_1 d_2$ (the dimensionality of the matrix space). However, the terms in the risk due to the size of regularizers (inherited from the tradeoffs in \eqref{eq:normequiv}, for $p = 1$ and $p > 2$) scales as a power of $d_2$, the maximum dimension of singular values. To obtain the value oracle complexity, we note that by definition of the Schatten norm, it satisfies the relationship \eqref{eq:normequiv} as well. Moreover, the Schatten-$2$ norm agrees with the vector $\ell_2$ norm (when the matrix is flattened into a vector), since they are both the Frobenius norm. Hence, we may directly apply Proposition~\ref{prop:sample_elltwo} after paying a norm conversion, in the same way as was done in Proposition~\ref{prop:sample_ellp}.
\end{proof}

\paragraph{Remark on high-probability bounds.} One advantage of using a sampling-based algorithm is an immediate high-probability bound which follows due to the good concentration of Lipschitz functions over samples from strongly logconcave distributions, stated below.

\begin{lemma}[Concentration of Lipschitz functions, \cite{Ledoux99}, Section 2.3 and \cite{BobkovL00}, Proposition 3.1]
	\label{lm:concentration}
	Let $\ell$ be a $G$-Lipschitz function and $X\sim \exp(-F)$ for a $\mu$-strongly convex function $F$, all with respect to the same norm $\normx{\cdot}$. For all $t\ge 0$, 
	\begin{align*}
		\Pr\big[\ell(X)-\E[\ell(X)]\ge t\big]\le \exp\Par{-\frac{t^2\mu}{2G^2}}.
	\end{align*}
\end{lemma}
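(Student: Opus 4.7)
The plan is to combine a non-Euclidean log-Sobolev inequality with the classical Herbst argument; together these yield a sub-Gaussian moment generating function bound for a Lipschitz function under a strongly log-concave measure in a general norm, from which the claimed tail follows by a Chernoff-style calculation.

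First, I would invoke the non-Euclidean log-Sobolev inequality (Proposition 3.1 of \cite{BobkovL00}), which states that a measure $\pi \propto \exp(-F)$ whose potential $F$ is $\mu$-strongly convex in $\normx{\cdot}$ satisfies
$$\Ent_\pi(g^2) \le \frac{2}{\mu} \int \normxd{\nabla g}^2 d\pi$$
for any sufficiently smooth $g$, where $\normxd{\cdot}$ is the norm dual to $\normx{\cdot}$. This is the non-Euclidean counterpart of the classical Gaussian LSI and is the only non-elementary ingredient of the proof.

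Second, I would apply Herbst's argument. Substituting $g = \exp(\lambda \ell/2)$ for $\lambda > 0$ into the LSI and using $\normxd{\nabla \ell} \le G$ almost everywhere (the analytic manifestation of $G$-Lipschitzness of $\ell$ in $\normx{\cdot}$, after a standard mollification if $\ell$ is not differentiable) produces a first-order differential inequality on $K(\lambda) \defeq \log \E[\exp(\lambda \ell(X))]$ of the form $(K(\lambda)/\lambda)' \le G^2/(2\mu)$. Integrating from $0^+$, using that $K(\lambda)/\lambda \to \E[\ell(X)]$ as $\lambda \to 0^+$, yields the sub-Gaussian MGF bound
$$\E\Brack{\exp\Par{\lambda(\ell(X) - \E[\ell(X)])}} \le \exp\Par{\frac{\lambda^2 G^2}{2\mu}}.$$
Markov's inequality then gives $\Pr[\ell(X) - \E[\ell(X)] \ge t] \le \exp(-\lambda t + \lambda^2 G^2/(2\mu))$ for every $\lambda \ge 0$, and optimizing at $\lambda = t\mu/G^2$ produces the stated tail.

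The main obstacle is the non-Euclidean log-Sobolev inequality itself, which is a genuine extension of the Gaussian LSI and already established in \cite{BobkovL00}; it is used here as a black box. A more self-contained route in the spirit of Section~\ref{ssec:localization} appears delicate, because the tail event $\{\ell(X) \ge \E[\ell(X)] + t\}$ is defined relative to a $\pi$-dependent threshold, making the relevant quantity a nonlinear functional of $\pi$ and not directly amenable to Lemma~\ref{lem:localization_stronglylogconcave}. For this reason I would stick to the LSI + Herbst route, noting that the same parameter ratio $\mu/G^2$ that drives the bound arises naturally under the Euclidean reduction of Lemma~\ref{lem:normxtol2}, consistent with the tightness of the exponent.
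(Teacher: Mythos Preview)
Your proposal is correct and follows exactly the route implicit in the paper's citations: the paper does not give its own proof of this lemma but simply attributes it to \cite{Ledoux99}, Section~2.3 (Herbst's argument deriving sub-Gaussian concentration from a log-Sobolev inequality) together with \cite{BobkovL00}, Proposition~3.1 (the non-Euclidean LSI for $\mu$-strongly log-concave measures in $\normx{\cdot}$). Your reconstruction of LSI $+$ Herbst $\to$ Chernoff is precisely that combination, so there is nothing to contrast.
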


In particular, we have demonstrated that the population and empirical risks (which are Lipschitz) have good expectations.
Na\"ively combining Lemma~\ref{lm:concentration} and our main results on the expectation utility bound then yields tight concentration around the mean in some parameter regimes, but we suspect the resulting bound is loose in general. We leave it as an interesting open problem to obtain tight high-probability bounds in all parameter regimes.
	
	\subsection*{Acknowledgments} We would like to thank the anonymous SODA reviewers for their helpful suggestions. We would also like to thank a reviewer and Crist\'obal Guzm\'an for pointing out a mistake in the utility guarantees in Corollary~\ref{cor:schatten} in the original version, which is corrected in this version.
\newpage
	\bibliographystyle{alpha}	
	\bibliography{dp-general-norm}
	
	\begin{appendix}
\section{Private ERM and SCO under strong convexity}\label{app:sc}

In this section, we derive our results for private ERM and SCO in general norms under the assumption that the sample losses are strongly convex. We will state our results for private ERM (Theorem~\ref{thm:erm_sc}) and SCO (Theorem~\ref{thm:sco_sc}) with respect to an arbitrary compact convex subset $\xset$ of a $d$-dimensional normed space, satisfying the following Assumption~\ref{assume:norm_sc}. 
\begin{assumption}\label{assume:norm_sc}
	We make the following assumptions.
	\begin{enumerate}
		\item There is a compact, convex subspace $\xset \subset \R^d$ equipped with a norm $\normx{\cdot}$.
		\item There is a set $\Omega$ such that for any $s \in \Omega$, there is a function $f(\cdot; s): \xset \to \R$ which is $G$-Lipschitz and $\mu$-strongly convex in $\normx{\cdot}$.
	\end{enumerate}
\end{assumption}

\begin{theorem}[Private ERM]
	\label{thm:erm_sc}
	Under Assumption~\ref{assume:norm_sc} and following notation \eqref{eq:Fdef}, drawing a sample $x$ from the density $\nu \propto \exp (-k F_{\data})$ for 
	$$k = \frac {n^2 \eps^2\mu}{2G^2\log \frac 1 {2\delta}},$$
	is $(\epsilon, \delta )$-differentially private, and produces $x$ such that 
		\[\E_{x \sim \nu}[F_{\data}(x)] - \min_{x \in \xset} F_{\data}(x) \le  \frac{2dG^2\log \frac 1 {2\delta}}{n^2\epsilon^2 \mu}.\]
\end{theorem}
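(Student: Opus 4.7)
The plan is to mimic the structure of Theorem~\ref{thm:erm} but exploit the fact that under Assumption~\ref{assume:norm_sc} the empirical loss $F_{\data} = \frac{1}{n}\sum_{i\in[n]} f(\cdot;s_i)$ is itself $\mu$-strongly convex in $\normx{\cdot}$, so no explicit regularizer $r$ is needed and we can take $\nu \propto \exp(-kF_{\data})$ directly. The two ingredients are a privacy analysis via Theorem~\ref{thm:privacy_curve_norm_X} combined with Fact~\ref{fact:gprivacy}, and a utility analysis via Proposition~\ref{prop:gibbsrisk}; choosing $k$ to saturate the privacy bound will automatically give the claimed excess risk.

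For privacy, I would fix neighboring datasets $\data, \data'$ differing only in the $i$-th sample. Then $k(F_{\data} - F_{\data'}) = \frac{k}{n}(f(\cdot;s_i) - f(\cdot;s_i'))$ is $\frac{kG}{n}$-Lipschitz in $\normx{\cdot}$ (following the Lipschitz convention used in the proof of Theorem~\ref{thm:erm}), while $kF_{\data}$ and $kF_{\data'}$ are both $k\mu$-strongly convex in $\normx{\cdot}$. Applying Theorem~\ref{thm:privacy_curve_norm_X} to $\nu \propto \exp(-kF_{\data})$ and $\nu' \propto \exp(-kF_{\data'})$ yields
\[
\deltacurve{\nu}{\nu'}(\eps) \le \deltacurve{\Nor(0,1)}{\Nor\!\Par{\tfrac{G\sqrt{k}}{n\sqrt{\mu}},\,1}}(\eps).
\]
Invoking Fact~\ref{fact:gprivacy}, this privacy curve is at most $\delta$ provided $\frac{G\sqrt{k}}{n\sqrt{\mu}} \le \frac{\eps}{\sqrt{2\log(1/(2\delta))}}$, and solving this for $k$ gives exactly the stated choice $k = \frac{n^2\eps^2\mu}{2G^2\log(1/(2\delta))}$.

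For utility, since $F_{\data}$ is convex on the compact convex set $\xset$, Proposition~\ref{prop:gibbsrisk} applies directly to $\nu \propto \exp(-kF_{\data})$ and yields
\[
\E_{x\sim\nu}[F_{\data}(x)] - \min_{x\in\xset} F_{\data}(x) \le \frac{d}{k} = \frac{2dG^2\log(1/(2\delta))}{n^2\eps^2\mu},
\]
matching the target bound. There is essentially no real obstacle here; the argument is strictly simpler than that of Theorem~\ref{thm:erm} because strong convexity is supplied by the sample losses rather than by an added regularizer $\mu r$, so the ``$\mu\Theta$'' term that appears in Theorem~\ref{thm:erm} is absent. The only conceptual point worth stating explicitly is that averaging preserves both the Lipschitz and the strong convexity constants, which is what allows us to plug $F_{\data}$ into Theorem~\ref{thm:privacy_curve_norm_X} with the parameters $(kG/n, k\mu)$.
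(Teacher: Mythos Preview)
Your proposal is correct and essentially identical to the paper's own proof: both establish privacy by applying Theorem~\ref{thm:privacy_curve_norm_X} with parameters $(kG/n,\,k\mu)$ and then Fact~\ref{fact:gprivacy} to force $\frac{G\sqrt{k}}{n\sqrt{\mu}} \le \frac{\eps}{\sqrt{2\log(1/(2\delta))}}$, and both obtain utility directly from Proposition~\ref{prop:gibbsrisk} as $d/k$. Your remark that the $\mu\Theta$ term of Theorem~\ref{thm:erm} disappears because no explicit regularizer is needed is exactly the point.
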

\begin{proof}
	Let $F_{\data'}$ be the realization of \eqref{eq:Fdef} when $\data$ is replaced with a neighboring dataset $\data'$ which agrees in all entries except some sample $s'_i \neq s_i$. By Assumption~\ref{assume:norm_sc}, we have $k(F_{\data} - F_{\data'})$ is $\frac{kG}{n}$-Lipschitz, and both $kF_{\data} $ and $kF_{\data'} $ are $k\mu$-strongly convex (all with respect to $\normx{\cdot}$). Hence, combining Theorem~\ref{thm:privacy_curve_norm_X} and Fact~\ref{fact:gprivacy} shows the mechanism is $(\eps, \delta)$-differentially private, since
	\[k = \frac {n^2 \eps^2\mu}{2G^2\log \frac 1 {2\delta}}\implies \frac{G\sqrt k}{n\sqrt \mu} \le \frac \eps {\sqrt{2 \log \frac 1 {2\delta}}}.\]
	Let $x^\star_{\data} \defeq \argmin_{x \in \xset} F_{\data}(x)$. We obtain the risk guarantee by the calculation (see Proposition~\ref{prop:gibbsrisk})
	\begin{align*}
		\E_{x \sim \nu}[F_{\data}(x)] &\le F_{\data}(x^\star_{\data}) + \frac d k 
	\le F_{\data}(x^\star_{\data}) + \frac{2dG^2\log \frac 1 {2\delta}}{n^2\epsilon^2 \mu}.
	\end{align*}
\end{proof}

\begin{theorem}[Private SCO]\label{thm:sco_sc}
	Under Assumption~\ref{assume:norm_sc} and following notation \eqref{eq:Fdef}, drawing a sample $x$ from the density $\nu \propto \exp(-kF_{\data} )$ for
	\[k = \frac {n^2 \eps^2\mu}{2G^2\log \frac 1 {2\delta}}\]
	is $(\eps, \delta)$-differentially private, and produces $x$ such that 
	\[\E_{\data \sim \pi^n, x \sim \nu}\Brack{\Fpop(x)} - \min_{x \in \xset} \Fpop(x) \le \frac{G^2}{n\mu }\Par{1+ \frac{2d\log \frac 1 {2\delta}}{n\epsilon^2 }} .\]
\end{theorem}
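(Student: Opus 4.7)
The proof should closely follow the template of Theorem~\ref{thm:sco}, with the simplification that since each $f(\cdot;s)$ is already $\mu$-strongly convex in $\normx{\cdot}$, the empirical objective $F_{\data}$ inherits that strong convexity, so no extra regularizer $\mu r$ is needed. Concretely, the plan is to decompose the population risk gap into a generalization-error term and an empirical-risk term, bound each separately, and inherit the privacy proof from Theorem~\ref{thm:erm_sc}.

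For privacy, the mechanism samples from exactly the same density $\nu \propto \exp(-kF_{\data})$ as in Theorem~\ref{thm:erm_sc}, so I would simply cite that argument: for neighboring datasets, $k(F_{\data}-F_{\data'})$ is $\tfrac{kG}{n}$-Lipschitz in $\normx{\cdot}$ while both $kF_{\data}$ and $kF_{\data'}$ are $k\mu$-strongly convex in $\normx{\cdot}$, and the chosen $k$ satisfies $\tfrac{G\sqrt{k}}{n\sqrt{\mu}} \le \tfrac{\eps}{\sqrt{2\log(1/2\delta)}}$, so Theorem~\ref{thm:privacy_curve_norm_X} combined with Fact~\ref{fact:gprivacy} gives $(\eps,\delta)$-DP.

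For utility, I would write
\begin{align*}
\E_{\data,x\sim\nu}[\Fpop(x)] - \min_{x\in\xset}\Fpop(x)
&\le \E_{\data}\!\Brack{\E_{x\sim\nu}\Brack{\Fpop(x) - F_{\data}(x)}} \\
&\quad + \E_{\data}\!\Brack{\E_{x\sim\nu}\Brack{F_{\data}(x)} - \min_{x\in\xset} F_{\data}(x)},
\end{align*}
exactly as in the proof of Theorem~\ref{thm:sco}, using linearity and $\E_{\data}F_{\data} = \Fpop$ to swap the expectation with the minimum. The empirical-risk term is handled directly by Proposition~\ref{prop:gibbsrisk}: since $F_{\data}$ is convex (no regularizer to account for), it contributes only $d/k$. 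The generalization term is handled via Proposition~\ref{prop:poprisk} followed by Corollary~\ref{cor:liplocalization}: letting $\nu' \propto \exp(-kF_{\data'})$ for the neighboring dataset, both $\nu$ and $\nu'$ are $k\mu$-strongly logconcave in $\normx{\cdot}$, their log-density difference $k(F_{\data}-F_{\data'})$ is $\tfrac{kG}{n}$-Lipschitz, and the test function $f(\cdot;s)$ is $G$-Lipschitz, so Corollary~\ref{cor:liplocalization} gives
\[
\E_{x\sim\nu}[f(x;s)] - \E_{x\sim\nu'}[f(x;s)] \le \frac{G \cdot (kG/n)}{k\mu} = \frac{G^2}{n\mu}.
\]
Summing the two contributions yields the bound $\tfrac{G^2}{n\mu} + \tfrac{d}{k}$, and substituting the prescribed $k$ gives exactly $\tfrac{G^2}{n\mu}\bigl(1 + \tfrac{2d\log(1/2\delta)}{n\eps^2}\bigr)$.

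There is no real obstacle here: the entire argument is essentially a specialization of the proof of Theorem~\ref{thm:sco} with $\mu r \equiv 0$ and the strong convexity of $F_{\data}$ coming from the samples themselves, which is why the $\mu\Theta$ bias term present in Theorem~\ref{thm:sco} disappears. The only mild subtlety is ensuring Corollary~\ref{cor:liplocalization} is applied with the correct scaling: the Lipschitz constant $H$ of the perturbation is $kG/n$ (not $G/n$) since the potentials in the logconcave densities are $kF_{\data}$ and $kF_{\data'}$, while the strong convexity parameter is likewise $k\mu$, so the factors of $k$ cancel and the final generalization bound is independent of $k$, matching the form of the theorem.
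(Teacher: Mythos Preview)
Your proposal is correct and follows essentially the same approach as the paper's own proof: the same decomposition into a generalization term plus an empirical-risk term, the same invocation of Proposition~\ref{prop:poprisk} and Corollary~\ref{cor:liplocalization} for the former and Proposition~\ref{prop:gibbsrisk} for the latter, and the same privacy argument inherited from Theorem~\ref{thm:erm_sc}. Your remark about the $k$'s cancelling in the application of Corollary~\ref{cor:liplocalization} is in fact a bit more explicit than the paper, which simply writes $H = G/n$ and strong convexity $\mu$ (implicitly cancelling the common factor of $k$), but the substance is identical.
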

\begin{proof}
	For the given choice $k, \mu$, the privacy proof follows identically to Theorem~\ref{thm:erm}, so we focus on the risk proof. We follow the notation of Proposition~\ref{prop:poprisk} and let $s \sim \pi$ independently from $\pi$. By exchanging the expectation and minimum and using that $\E_{\data \sim \pi^n} F_{\data} = \Fpop$,
	\begin{align*}\E_{\data \sim \pi^n, x \sim \nu}\Brack{\Fpop(x)} - \min_{x \sim \xset} \Fpop(x) &\le \E_{\data \sim \pi^n}\Brack{\E_{x \sim \nu}\Brack{\Fpop(x)} - \min_{x \in \xset}\Brack{F_{\data}(x)}} \\
		&\le \E_{\data \sim \pi^n}\Brack{\E_{x \sim \nu}\Brack{\Fpop(x) - F_{\data}(x)}} \\
		&+ \E_{\data \sim \pi^n}\Brack{\E_{x \sim \nu}\Brack{F_{\data}(x)} - \min_{x \in \xset}\Brack{F_{\data}(x)}} \\
		&\le \E_{\data \sim \pi^n}\Brack{\E_{x \sim \nu}\Brack{\Fpop(x) - F_{\data}(x)}} + \frac d k,
	\end{align*}
	where we bounded the empirical risk in the proof of Theorem~\ref{thm:erm_sc}. Next, let $\nu'$ be the density $\propto \exp(-kF_{\data'} )$. Our mechanism is symmetric, and hence by Proposition~\ref{prop:poprisk},
	\[
	\E\Brack{\Fpop(x) - F_{\data}(x)} = \E\Brack{\E_{x \sim \nu}\Brack{f(x; s)} - \E_{x \sim \nu'}\Brack{f(x; s)}}
	\]
	where the outer expectations are over the randomness of drawing $\data, s$. Finally, for any fixed realization of $\data, s$, the densities $\nu, \nu'$ satisfy the assumption of Corollary~\ref{cor:liplocalization} with $H = \frac G n$, and $f(\cdot; s)$ is $G$-Lipschitz, so Corollary~\ref{cor:liplocalization} shows that
	\[\E_{x \sim \nu}\Brack{f(x; s)} - \E_{x \sim \nu'}\Brack{f(x; s)} \le \frac{G^2}{n\mu}.\]
	Combining the above three displays bounds the population risk by
	\begin{align*}\E_{\data \sim \pi^n, x \sim \nu}\Brack{\Fpop(x)} - \min_{x \in \xset} \Fpop(x) &\le \frac{G^2}{n\mu} +  \frac d k 
		= \frac{G^2}{n\mu }\Par{1+ \frac{2d\log \frac 1 {2\delta}}{n\epsilon^2 }} .
	\end{align*}
	for our given value of $k$. 
\end{proof}
	\end{appendix}

\end{document}